\documentclass{article}

\usepackage{etoolbox}

\usepackage{natbib}
\setlength{\textwidth}{6.5in}
\setlength{\textheight}{9in}
\setlength{\oddsidemargin}{0in}
\setlength{\evensidemargin}{0in}
\setlength{\topmargin}{-0.5in}
\newlength{\defbaselineskip}
\setlength{\defbaselineskip}{\baselineskip}
\setlength{\marginparwidth}{0.8in}

\usepackage{amsthm}
\usepackage{amsfonts}
\usepackage{thmtools,thm-restate}
\usepackage{bbm}

\newtheorem{definition}{Definition}
\newtheorem{theorem}{Theorem}

\newtheorem{lemma}{Lemma}
\newtheorem{proposition}{Proposition}
\newtheorem{example}{Example}

\newcommand{\curly}[1]{\left\{#1\right\}}
\newcommand{\norm}[1]{\lVert #1 \rVert}

\newcommand{\prob}{\text{P}}
\newcommand{\tr}{\text{tr}}

\newcommand{\abs}[1]{\lvert #1 \rvert}
\newcommand{\Abs}[1]{\left\lvert #1 \right\rvert}
\usepackage{algpseudocode}

\usepackage{color}

\usepackage{algorithm}
\usepackage{algorithmicx}
\usepackage{algpseudocode}
\algrenewcommand\algorithmicindent{0.5em}

\usepackage{mathtools}

\usepackage{mathdots}

\usepackage{wrapfig}

\usepackage{tabularx}

\usepackage{subcaption}

\usepackage{tikz}
\usetikzlibrary{calc,arrows.meta,positioning}
\usepackage{enumitem}

\usepackage{makecell}

\usepackage{multirow}

\usepackage{balance}

\newcommand{\E}{\mathbb{E}}
\DeclareMathOperator*{\argmin}{argmin}
\DeclareMathOperator*{\argmax}{argmax}

\usepackage{pifont}
\newcommand{\xmark}{\ding{53}}

\usepackage{tablefootnote}
\usepackage{authblk}

\title{Ivy: Instrumental Variable Synthesis for Causal Inference}
\date{}
\author{Zhaobin Kuang\footnote{Correspondence to:  zhaobin.kuang@gmail.com},  Frederic Sala,  Nimit Sohoni, Sen Wu, \\
Aldo C\'{o}rdova-Palomera, Jared Dunnmon, James Priest, and Christopher R\'{e}}
\affil{Stanford University}

\begin{document}

\maketitle

\begin{abstract}
A popular way to estimate the causal effect of a variable $x$ on $y$ from observational data is to use an \emph{instrumental variable} (IV): a third variable $z$ that affects $y$ only through $x$. 
The more strongly $z$ is associated with $x$, the more reliable the estimate is, but such \emph{strong IVs} are difficult to find.
Instead, practitioners combine more commonly available \emph{IV candidates}---which are not necessarily strong, or even valid, IVs---into a single ``summary'' that is plugged into causal effect estimators in place of an IV.
In genetic epidemiology, such approaches are known as \emph{allele scores}. Allele scores require strong assumptions---independence and validity of all IV candidates---for the resulting estimate to be reliable.
To relax these assumptions, we propose Ivy, a new method to combine IV candidates that can handle correlated and invalid IV candidates in a robust manner.
Theoretically, we characterize this robustness, its limits, and its impact on the resulting causal estimates.
Empirically, we show that Ivy can correctly identify the directionality of known relationships and is robust against false discovery (median effect size $\le 0.025$) on three real-world datasets with no causal effects, while allele scores return more biased estimates (median effect size $\ge 0.118$).
\end{abstract}

\section{Introduction}
A goal of causal inference is to ascertain the causal relationship between a pair of variables (the \emph{risk factor} $x$ and the \emph{outcome} $y$) from observational data. This is difficult because causal relationships can be distorted by \textit{confounders}: common causes of the risk factor and the outcome that may be unobserved. To address this difficulty, a third variable, called an \emph{instrumental variable} (IV), can be used to estimate causal effect. Informally, an IV only affects the outcome through its effect on the risk factor. IV methods are widely used in practice \citep{angrist1991does, Mokry71, walker2017mendelian, Millwood2019}. In particular, we are motivated by Mendelian randomization (MR) \citep{Burgess15}, a representative use case in which genetic markers serve as IVs to infer causation among clinical variables. 

IV methods are most reliable when the IV $z$ is strongly associated with the risk factor $x$, but such \emph{strong IVs} are often difficult to identify in practice. Instead, practitioners typically rely on more readily available \emph{IV candidates}. These variables may not be strong, or even valid, IVs, but can be used in lieu of an unavailable strong IV. To this end, a two-phase approach can be used: first, \emph{synthesize}: combine the IV candidates into a summary variable, and secondly, \emph{estimate}: plug the summary variable into a causal effect estimator.

In MR, a popular, state-of-the-art approach for the synthesis phase is \emph{allele scores}. The summary variables generated by allele scores are meant to reduce bias in causal estimates \citep{Angrist2008, davies2015many}. In the words of \citet{burgess2017review}, allele scores are a ``recent innovation'' in MR and are a ``recommend[ed]'' way to utilize plentiful IV candidates---but with the caveat that if an IV candidate is not actually a valid IV, allele scores may lead to ``potentially misleading estimates.'' Indeed, allele score methods suffer two main weaknesses: they implicitly assume that the IV candidates (1) are \emph{all} valid IVs and (2) are independent conditioned on the summary variable \citep{sebastiani2012naive}. When these assumptions are not met, as often happens in practice, the resulting estimate may turn out to be unreliable.\footnote{See Appendix~\ref{sec:related-work} for an extended discussion.}

To improve robustness against invalidity and dependencies among the IV candidates while still reaping the benefits of the two-phase approach (e.g., modularity and bias reduction), we propose Ivy, a novel way to synthesize a summary IV from IV candidates. Ivy produces a summary IV by modeling it as a latent variable, and inferring its value based on the statistical dependencies among the IV candidates. Ivy is inspired by recent advances in the theory of weak supervision, leveraging results on structure learning \citep{Varma19}. Ivy targets the synthesis phase and is orthogonal to the effect estimation phase: the summary IV it generates can directly be plugged into IV-based causal effect estimators, whether they are classical \citep{Wald40, Angrist96}, robust \citep{bowden2016consistent, kang2016instrumental}, or modern \citep{Hartford17, athey2019generalized}.

We provide theoretical bounds on the robustness of our approach against invalidity or dependencies among the IV candidates. Specifically,
\setlist{nolistsep}
\begin{itemize}[leftmargin=*]
\itemsep0.3em
\item We analyze the parameter estimation error for Ivy. 
Under weaker assumptions than allele scores, and with sufficiently many samples, Ivy's error scales as $O(1/\sqrt{n})$ for $n$ samples. Even outside of this regime, when Ivy may fail to identify all invalid IVs or dependencies, the resulting error is mild (scaling linearly in the number of misspecified dependencies and undetected invalid IVs). 
\item We translate the error in the parameter estimation into bounds for a downstream parametric causal effect estimator ---the Wald estimator---which is a commonly used estimator in MR. 
\item We further adapt our analysis to show how, in contrast to Ivy, allele scores may produce unreliable estimates in the presence of invalidity or dependency among IV candidates.
\end{itemize}

Empirically, we show that Ivy can more reliably estimate causal effects compared to allele score methods, even with low-quality uncurated IV candidates with potential dependencies and invalidity. On three real-world datasets with no causal effects, Ivy yields median effect size less than $0.025$, while allele scores return more biased estimates (median effect size $\ge 0.118$). This result aligns with our theoretical insights into Ivy and allele scores.

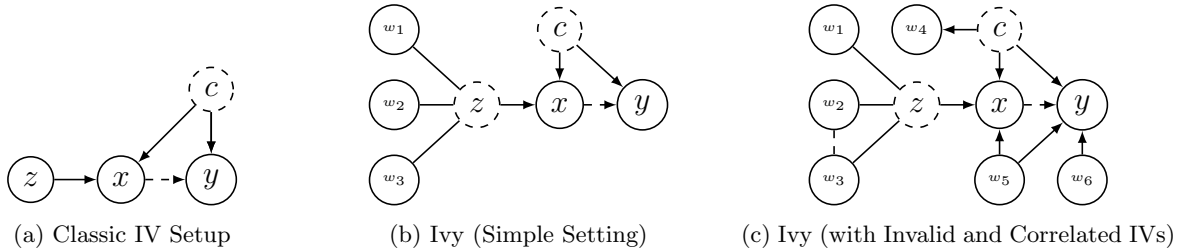
\begin{figure*}[t]
\centering
\begin{subfigure}[b]{0.28\textwidth}
\centering
\begin {tikzpicture}[-latex ,auto ,node distance
=1.2cm and 1.2cm , on grid, semithick ,
state/.style ={ circle, draw, minimum width=0.5cm}, scale=0.50]
\node[state] (C) [dashed] {\large $c$};
\node[state] (Y) [below =of C] {\large $y$};
\node[state] (X) [below left=of C] {\large $x$};
\node[state] (Z) [left=of X] {\large $z$};
\path (Z) edge (X);
\path (C) edge (X);
\path (C) edge (Y);
\path (C) edge (Y);
\path[dashed] (X) edge (Y);
\end{tikzpicture}
\caption{Classic IV Setup}
\label{fig:classic-iv}
\end{subfigure}~
\begin{subfigure}[b]{0.33\textwidth}
\centering
\begin {tikzpicture}[-latex ,auto ,node distance
=1.0cm and 1.1cm , on grid, semithick ,
state/.style ={ circle, draw, minimum width=0.5cm}, scale=0.50]
\node[state, draw] (C) [dashed] {\large $c$};
\node[state, draw] (Y) [below right=of C] {\large $y$};
\node[state] (X) [below =of C] {\large $x$};
\node[state] (Z) [left=of X, dashed] {\large $z$};
\node[state] (W1) [above left=of Z] {\tiny $w_1$};
\node[state] (W2) [left=of Z] {\tiny $w_2$};
\node[state] (W3) [below=of W2] {\tiny $w_3$};
\path (Z) edge[-] (W1);
\path (Z) edge[-] (W2);
\path (Z) edge[-] (W3);
\path (Z) edge (X);
\path (C) edge (X);
\path (C) edge (Y);
\path[dashed] (X) edge (Y);
\end{tikzpicture}
\caption{Ivy (Simple Setting)}
\label{fig:ivy-ideal}
\end{subfigure}~
\begin{subfigure}[b]{0.35\textwidth}
\centering
\begin {tikzpicture}[-latex ,auto ,node distance
=1.0cm and 1.1cm , on grid, semithick ,
state/.style ={ circle, draw, minimum width=0.5cm}, scale=0.50]
\node[state] (C) [dashed] {\large $c$};
\node[state] (Y) [below right=of C] {\large $y$};
\node[state] (X) [below =of C] {\large $x$};
\node[state] (Z) [left=of X, dashed] {\large $z$};
\node[state] (W1) [above left=of Z] {\tiny $w_1$};
\node[state] (W2) [left=of Z] {\tiny $w_2$};
\node[state] (W3) [below=of W2] {\tiny $w_3$};
\node[state] (W4) [left=of C] {\tiny $w_4$};
\node[state] (W5) [below=of X] {\tiny $w_5$};
\node[state] (W6) [below=of Y] {\tiny $w_6$};
\path (Z) edge[-] (W1);
\path (Z) edge[-] (W2);
\path (Z) edge[-] (W3);
\path (Z) edge (X);
\path (C) edge (X);
\path (C) edge (Y);
\path (W5) edge (X);
\path (W5) edge (Y);
\path (W6) edge (Y);
\path (C) edge (W4);
\path[dashed] (X) edge (Y);
\path[dashed] (W2) edge[-] (W3);
\end{tikzpicture}
\caption{Ivy (with Invalid and Correlated IVs)}
\label{fig:ivy-general}
\end{subfigure}
\caption{IV method settings (unobserved variables are dashed; the dashed arrow between $x$ and $y$ is the causal relationship we seek to estimate, dashed edges are dependencies that we seek to infer): (a) the traditional setting with observed strong IV $z$, (b) a simple setting where we do not see $z$, but see noisy weak IV candidates $w_1, w_2, w_3$ independent conditioned on $z$, (c) a more challenging setting that Ivy can handle where some IV candidates have dependencies ($w_2, w_3$), others are invalid ($w_4$ violates unconfoundedness, $w_5$ and $w_6$ violate exclusion restriction, and $w_6$ violates relevance).}
\end{figure*}

\section{Background}
\label{sec:background}
We consider a two-phase approach to estimating causal effects with IV candidates. First, the IV candidates are combined to form a summary (the synthesis phase). Second, in the effect estimation phase, this summary is plugged into an estimator, along with the risk factor and outcome, to produce an effect. Our approach tackles the first phase, and is orthogonal to the second phase. We give background on these ideas below.

We seek to infer the causal relationship between a risk factor $x$ and an outcome $y$. This relationship may be distorted by a confounder $c$, which is a common cause of both $x$ and $y$. To handle confounding, an instrumental variable $z$ may be used. $z$ directly induces a change in $x$ independent of $c$. This change will alter the value of $y$ only through the causal link between $x$ and $y$, enabling us to measure the causal link (Figure~\ref{fig:classic-iv}). We focus on the setting where $x$, $y$, $c$, and $z$ are binary, although our procedure can be extended to handle continuous $x$, $y$, and $c$. A valid IV is a variable satisfying Definition~\ref{def:iv}; otherwise,  it is invalid. 
\begin{definition}[\cite{Burgess15}]
\label{def:iv} \normalfont
An instrumental variable $z$ satisfies 
(i) \emph{Relevance}: $z$ is not independent of the risk factor, i.e.\ $z \not\perp x$; (ii) \emph{Exclusion Restriction}: $z$ can only influence the outcome through $x$, i.e.\ $z \perp y \mid x, c$;  (iii) \emph{Unconfoundedness}: $z$ is independent of the confounder, i.e.\ $z \perp c$.
\end{definition}
Figure~\ref{fig:classic-iv} depicts the setting where a valid IV is observable. The dashed confounder node $c$ indicates that IV methods can deal with unobserved confounders between $x$ and $y$. By contrast, estimating effects without accounting for confounding may lead to failure in distinguishing between spurious correlation and causation. The following is a well-known example of spurious correlation in epidemiology, dismissed by a careful use of IVs.

\begin{example}
\label{example:hdl}
The concentration of high-density lipoprotein (HDL) is negatively correlated with the occurrence of coronary artery disease (CAD) and thus appears protective, but recent  studies suggest that there is no causal link. The correlation is spurious due to confounders such as the concentration of other lipid species
\citep{rye2015hdl}. Nonetheless, the strength of this spurious correlation led to a hypothesized causal link, but drugs developed to raise HDL levels failed to prevent CAD \citep{schwartz2012effects}. This spurious correlation was later dismissed by a series of MR studies \citep{voight2012plasma, holmes2014mendelian, rader2014hdl}.
\end{example}

\subsection{IV Synthesis}
\label{sec:iv-candidates} 
The more strongly a valid IV is associated with the risk factor, the more reliable the resulting causal effect estimate.
However, finding such strong IVs is challenging in practice.  Instead, practitioners often combine more widely available IV candidates---variables that are weakly associated with the risk factors, intercorrelated, or even invalid IVs---into a summary IV. One way to view this procedure is that the summary IV is a prediction of a latent variable that, while unobserved, can serve as a strong IV.

\paragraph{Allele Scores} The use of unweighted/weighted allele scores (UAS/WAS) to synthesize a summary IV is a popular leading approach in MR \citep{burgess2013use, davies2015many, burgess2016combining}. UAS weights each IV candidate equally while WAS weights them based on their associations to the risk factor.
While allele scores can mitigate bias induced by weak IV candidates, they assume that these IV candidates are all valid and independent conditioned on the summary (Figure \ref{fig:ivy-ideal}). Thus, dependencies \citep{sebastiani2012naive} or invalidity \citep{burgess2017review} in IV candidates (Figure \ref{fig:ivy-general}) can still result in unreliable effect estimates when using the summary variable. Our proposed approach, Ivy, can be viewed as a \emph{generalization} of allele scores to lessen these issues.

\subsection{Effect Estimation}
\label{sec:wald}
In the effect estimation phase, the risk factor $x$, the outcome $y$, and the summary (or, when available, the strong IV) $z$ are used in an estimation procedure to obtain an estimate of the causal effect of $x$ on $y$.

In MR, the standard estimator is the Wald ratio ${\beta_{zy}}/{\beta_{zx}}$, where $\beta_{zx}$ and $\beta_{zy}$ are the logistic regression coefficients of predicting $x$ and $y$ using $z$, respectively. While Ivy can be plugged into other estimators, we analyze the estimation phase for the commonly used Wald estimator in MR.

\section{IV Synthesis With Ivy}
\label{sec:theory}

We describe the Ivy framework for instrumental variable synthesis. We begin with our problem setup and assumptions. Then we present
Ivy (Algorithm~\ref{alg:ivy}) and its components.
Next, we theoretically characterize the model parameter estimation error in Ivy due to invalid IV candidates, misspecified dependencies, and sampling noise. Finally, we bound the impact of this error on downstream causal effect estimation.

\subsection{Problem Setup}
\label{sec:setup-assumptions}

We seek to use a valid, but unobserved  IV $z\in\curly{-1,1}$ to infer the causal relationship between the risk factor $x\in\curly{-1,1}$ and the outcome $y\in\curly{-1,1}$. This causal relationship is obscured by potentially unobserved confounders $c\in\curly{-1,1}^d$. The data generation process among $x,y,z$, and $c$ follows some probability distribution $\mathcal{D}$. Although we do not directly observe $z$, we do observe $m$ IV candidates $w \in \curly{-1,1}^m$. Only some of these $m$ IV candidates are valid.

If the IV $z$ could be observed, we could directly plug it into a causal effect estimator; unfortunately, $z$ is rarely known in practice. Thus, the primary challenge is to reliably infer $z$ from $w$, i.e.\ to estimate the distribution $\text{P}(z \mid w)$, and to characterize how this impacts the reliability of downstream causal inference. 

\paragraph{Notation}
We use ``IV candidate'' and ``candidate'' interchangeably. We call candidates that are valid/invalid IVs ``valid/invalid candidates''. We denote the index set of the valid candidates as $V \subseteq [m]$, where $[m] := \curly{1,2,\ldots,m}$. We use $w_V$ to represent the subvector of the vector $w$ indexed by $V$ (i.e. the subvector corresponding to the valid candidates). When the subscript is omitted, $\norm{\cdot}$ denotes the $2$-norm.

\paragraph{Inputs and Outputs} 
We have access to data $\curly{(x^{(i)},y^{(i)},w^{(i)})}_{i=1}^n$: $n$ samples each of the risk $x$, the outcome $y$, and the $m$ IV candidates. Our goal is to produce a causal effect estimate  $\hat{\alpha}_{x \rightarrow y}$ of $x$ on $y$. 

\subsection{Assumptions}
\label{sec:assumptions}
We explain the assumptions made by Ivy, in particular comparing to those made by allele scores. These are described in further depth in Section~\ref{sec:assumptions-appendix}.

First, we describe assumptions on validity. We assume the majority of IV candidates are valid IVs, and for the invalid candidates $(i \not\in V$), $w_i \perp z$. These assumptions weaken those of allele scores, which assume that all candidates are valid IVs. 

Next, we continue with assumptions on dependencies. To allow for dependencies, we model the candidates and $z$ via an Ising model (the canonical binary maximum-entropy distribution with pairwise dependencies). We write the density of the model as
\begin{equation}
\label{eq:ising}
\frac{1}{\mathcal{Z}} \exp  ( \theta_z^* z + \Sigma_{i\in V} \theta_{i}^* w_i z  +  \Sigma_{(i,j)\in E} \theta_{ij}^* w_i w_{j} ),
\end{equation}
where $\mathcal{Z}$ is a normalization constant, $E$ is the set of pairwise dependencies between valid IVs, and the $\theta^*$ terms are the model parameters. While allele scores require the maximal level of sparsity in the model (no dependencies, so that $E$ is empty), our assumptions are weaker: we only require that for each valid IV candidate $w_i$ there are at least two others that are independent of $w_i$ and each other conditioned on $z$, and, conversely, that candidates that are dependent (i.e., in $E$) are all mutually dependent. 
Lastly, we require that on average, valid IV candidates agree with $z$ more often than not. 
We discuss identifiability of causal effects in Appendix~\ref{sec:appendix-identifiability}.

\subsection{Algorithmic Framework}
\label{sec:ivy-framework}
We describe the Ivy framework (Algorithm~\ref{alg:ivy}).
First, because our data may include both valid and invalid IV candidates, and because even the valid candidates may have dependencies, we learn a set of valid candidates and dependencies directly from our data (Algorithm~\ref{alg:sl1}). Next, we learn the mean parameters of the joint distribution of our estimated valid $w_i$'s and $z$, without observing $z$ (Algorithm~\ref{alg:forward}). Concretely, $(\mu^*, O^*)$, the true mean parameters\footnote{These are expectations of the sufficient statistics in \eqref{eq:ising}. $\mathbb{E}[z]$ is also a parameter; we assume it is known, but it can also be estimated (see, for example, \citealt{Ratner19}).}, are $\E[w z]$ and $\E[w w^T]$ (where $\E[w z]$ is a vector with entries $\E[w_i z]$). We observe the $w$'s, so we can easily estimate $O^*$ by $\hat{O}$. More challenging is to estimate $\mu^*$, since we do not observe $z$; we use our learned dependencies and validity to estimate $\mu^*$ by $\hat{\mu}$. Finally, in Algorithm~\ref{alg:causality} we use $\hat{\mu}$ and $\hat{O}$ to form an estimate $\hat{z}$ of $z$. We also describe how to use $\hat{z}$ in a generic IV-based estimator $F$ to get a causal effect estimate (the estimation phase). 
We describe the components of Algorithm~\ref{alg:ivy} in detail.

\begin{algorithm}[t]
\caption{Ivy Algorithmic Framework}
\label{alg:ivy}
\begin{algorithmic}[1]
\Require Data $\curly{(w^{(i)},x^{(i)},y^{(i)})}_{i=1}^n$.
\State $\hat{V}, \hat{E} \leftarrow$ \textsc{StructureLearn} $(\text{data}, \lambda, \gamma, T_1, T_2)$
\State $\hat{\mu} \leftarrow$ \textsc{ParamLearn} $(\text{data}, \hat{V}, \hat{E})$
\State $\hat{\alpha}_{x \rightarrow y} \leftarrow$ \textsc{CausalEst} $(\text{Estimator}, \text{data}, \hat{V}, \hat{\mu})$
\Ensure Causal effect estimate $\hat{\alpha}_{x \rightarrow y}$.
\end{algorithmic}
\end{algorithm}

\paragraph{Step 1: Identify Valid IV Candidates and their Dependencies.} \emph{Inputs}: data and hyperparameters. {\it Outputs}: estimated set of valid candidates $\hat{V}$ and estimated dependency set $\hat{E}$ of $\hat{V}$. Our method for learning the valid IVs and their dependencies is an application of recent approaches for structure learning \citep{Varma19} in graphical models.  The main challenge is that without observing $z$, all of the valid IV candidates
will appear to be correlated, although may be independent conditioned on z. Meanwhile, the valid and invalid candidates form mutually-independent components. We recover both the graph structure and the covariances between the IV candidates (valid and invalid) and $z$ via a robust PCA approach. This enables us to estimate which IVs are valid and their statistical dependencies. The procedure is given in Algorithm~\ref{alg:sl1}.

Concretely, the identification of the valid candidates and their dependencies translates to decomposing a rank-one matrix and a sparse matrix from their sum (Line~\ref{state:matrix-recovery} of Algorithm~\ref{alg:sl1}). Here, the candidate validity ends up being encoded in the rank-one component $\hat{L}$ and the dependencies are encoded in the sparse component $\hat{S}$. Thus, we can threshold the vector corresponding to the rank-one matrix $\hat{L}$ to obtain the valid IVs and then threshold the corresponding submatrix of $\hat{S}$ containing valid IVs to obtain the dependencies. There are several choices of loss functions. For our analysis, we use $ \mathcal{L}(S-L, \hat{\Sigma}) = \frac{1}{2} \tr((S-L)\hat{\Sigma}(S-L)) - \tr(S-L)$.

\begin{algorithm}[t]
\caption{Valid IV and Dependency Learning (\textsc{StructureLearn}) }
\label{alg:sl1}
\begin{algorithmic}[1]
\Require Data $\curly{w^{(i)}}_{i=1}^n$, params. $\lambda$, $\gamma$, $T_1$, and $T_2$.
\State Compute sample covariance matrix $\hat{\Sigma}$ from $w^{(i)}$'s. 
\State
$(\hat{S}, \hat{L}) \hspace{-1mm} \leftarrow \hspace{-4mm} \argmin\limits_{L \succeq 0, \, S-L \succ 0} \hspace{-2.5mm} \mathcal{L} (S-L,\hat{\Sigma}) \hspace{-0.5mm} + \hspace{-1mm} \lambda_n (\gamma\|S\|_{1} + \hspace{-1.5mm} \|L\|_*)$, where $\mathcal{L}$ is a loss function.
\label{state:matrix-recovery}
\State $\hat{\ell} \leftarrow \argmin_\ell \| \hat{L} -  \ell \ell^T\|_F$
\State $\hat{V} \leftarrow \{ j: |(\hat{\Sigma}\hat{\ell})_j| \geq T_1 \}$ \label{state:scale-covariance}
\State $\hat{E} \leftarrow \curly{(i,j): i,j \in \hat{V}, i < j, \hat{S}_{i,j} > T_2}$
\Ensure Estimated valid IV candidate set $\hat{V}$, estimated dependency set $\hat{E}$.
\end{algorithmic}
\end{algorithm}

\begin{algorithm}[t]
\caption{Parameter Learning (\textsc{ParamLearn})}
\label{alg:forward}
\begin{algorithmic}[1]
\Require Data $\curly{w^{(i)}}_{i=1}^n$, $\hat{G} = (\hat{V}, \hat{E})$ where $\hat{V}$ are estimated valid candidates, $\hat{E}$ are edges among them.
\State Form estimated matrix $\hat{O} \leftarrow \frac{1}{n} \sum_{i=1}^n w^{(i)}_{\hat{V}}(w^{(i)}_{\hat{V}})^T$.
\State $\hat{\Omega}\leftarrow \{(i,j): w_i, w_j \text{ are disconnected in } \hat{G} \setminus \{z\}\}$
\State Form matrix $M_{\hat{\Omega}}$ and vector $\hat{q}$ from $\hat{O}$
\State $\hat{\ell} \leftarrow \argmin_{\ell} \| M_{\hat{\Omega}} \ell - \hat{q}\|$
\State $|\hat{\mu}| \leftarrow \exp(\hat{\ell}/2)$
\State Recover $\text{sgn}(\hat{\mu})$
\Ensure Estimated model mean parameters $\hat{O}, \hat{\mu}$. 
\end{algorithmic}
\end{algorithm}

\begin{algorithm}[t!]
\caption{Synthesis \& Causal Effect Estimation (\textsc{CausalEst})}
\label{alg:causality}
\begin{algorithmic}[1]
\Require Data  $\curly{(w^{(i)},x^{(i)},y^{(i)})}_{i=1}^n$, estimated parameters $\hat{O}, \hat{\mu}$, $\hat{V}$, $\hat{E}$, causal effect estimator $F(\cdot)$.
\For{$i \in [n]$} \Comment{Synthesize}
\State $\hat{z}^{(i)} \leftarrow \prob_{\hat{\mu}, \hat{O}} \left(z \mid w_{\hat{V}}^{(i)} \right)$
\EndFor
\State $\hat{\alpha}_{x \rightarrow y} \leftarrow F \left(\curly{(\hat{z}^{(i)},x^{(i)},y^{(i)})}_{i=1}^n \right)$. \Comment{Estimate}
\Ensure Causal effect estimate $\hat{\alpha}_{x \rightarrow y}$. 
\end{algorithmic}
\end{algorithm}

\paragraph{Step 2: Estimate Parameters of the Candidate Model.}
\emph{Inputs}: data, $\hat{G} := (\hat{V}, \hat{E})$. \emph{Outputs}: estimated parameters $\hat{O}, \hat{\mu}$. In Algorithm~\ref{alg:forward}, we learn the mean parameters. We leverage conditional independencies encoded in our estimated dependency structure to obtain these parameters without ever observing $z$, via the agreements and disagreements of the IV candidates. We adapt \cite{Ratner19}. 

Specifically, we set $a_j := w_j z$ for all $j \in \hat{V}$. Then the mean parameter $\mu_i := \mathbb{E}[a_i] = \mathbb{E}[w_iz]$.
Since $z^2=1$, $\mathbb{E}[a_i a_j] = \mathbb{E}[(w_i z)(w_j z)] = \mathbb{E}[w_i w_j]$. We can estimate $\mathbb{E}[w_i w_j]$ from data. Moreover, if $w_i$ and $w_j$ are independent conditioned on $z$ (i.e. $(i,j)$ is an edge in $\hat{\Omega}$), then $\mathbb{E}[a_i a_j]  = \mathbb{E}[a_i] \mathbb{E}[a_j]$, which means $\log\mathbb{E}^2[a_i] + \log\mathbb{E}^2[a_j] = \log\mathbb{E}^2[w_i w_j]$. We form a system of equations $M_{\hat{\Omega}} \ell = q$, with $q$ the vector of $\log \mathbb{E}^2[w_i w_j]$ terms and $\ell$ the vector of $\log \mathbb{E}^2[a_i]$ terms. The matrix $M_{\hat{\Omega}}$ is formed by taking each $(i,j) \not\in \hat{\Omega}$ and adding a row with a $1$ in positions $i$ and $j$ and $0$'s elsewhere.
We solve this to get estimates $\hat{\mu}_i$ of $\mathbb{E}[a_i]$ up to sign; using the assumption that valid candidates agree with $z$ the majority of the time, we recover the signs. This gives $\hat{\mu}$ (and $\hat{O}$ was estimated earlier). 

\paragraph{Step 3: Synthesize IV and Estimate Causal Effect}  \emph{Inputs}: data, $\hat{O}, \hat{\mu}$, $\hat{V}$, $\hat{E}$, and causal effect estimator $F(\cdot)$. {\it Outputs}: causal effect estimate $\hat{\alpha}_{x \rightarrow y}$. Finally, in Algorithm~\ref{alg:causality}, we generate a probabilistically synthesized version of $z$ called $\hat{z}$ from our model parameterized by $\hat{O}, \hat{\mu}$. We obtain samples of $z$ based on these to account for the uncertainty in the synthesized summary IV, concluding synthesis. We then feed these samples, along the risk factor and the outcome, to a causal effect estimator in the estimation phase, producing a causal effect estimate.

\subsection{Theoretical Analysis}
\label{sec:estimation-error}
We theoretically analyze Ivy and provide bounds on its parameter estimation error. We further analyze the error in downstream causal effect estimation using the Wald estimator---a common estimator of causal effects in MR---as a proof-of-concept. We focus on the scaling with respect to the number of samples $n$ and the number of IV candidates $m$.  We present a simplified bound that explains the conceptual result, and provide a more general version in Appendix \ref{sec:ci-proof}.

\paragraph{Parameter Estimation Bound}

We show how the gap between the parameters  $\mu^*$ of \eqref{eq:ising} and our estimated $\hat{\mu}$ decays with the number of samples.\footnote{In Appendix~\ref{sec:ci-proof} we bound $\E[\norm{\hat{O}-O^*}]$ with Lemma~\ref{lem:conc}.} 
We fix $R_{\text{min}}$, the lowest correlation between valid candidates, and $C_{\min}$, the lowest accuracy for a valid candidate. Then, let $c_0, c_1$ be constants and $d$ be the largest degree of a valid IV candidate in $G$.

\begin{restatable}{theorem}{thmcibound}
\label{thm:ci_bound_1}
Let $\hat{\mu}$ be the result of Algorithm~\ref{alg:ivy} run on $n$ samples of $m $ IV candidates, where $m > c_0$. Denote $\mu^*$ to be the mean parameter of \eqref{eq:ising}. If $n > c_1d^2m$, then with probability at least $1-\frac{1}{m}$,
\begin{align*}
\mathbb{E}[\|{\hat{\mu}-\mu^*}\|] 
\leq \frac{16 m^{\frac{5}{2}}}{R_{\min}}  \|M^\dagger\| \sqrt{\frac{2\pi}{n}}.
\end{align*}
\end{restatable}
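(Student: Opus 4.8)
The plan is to split the error $\|\hat\mu-\mu^*\|$ into three sources: (i) error in the learned structure $(\hat V,\hat E)$, hence in the conditional-independence set $\hat\Omega$ and the system matrix $M_{\hat\Omega}$; (ii) sampling noise in the empirical second moments $\hat O$, which determine the right-hand side $\hat q$ of the linear system solved in Algorithm~\ref{alg:forward}; and (iii) deterministic propagation of (ii) through the least-squares solve $\hat\ell=M_{\hat\Omega}^\dagger\hat q$ and the entrywise map $|\hat\mu|=\exp(\hat\ell/2)$. I would control (i) by a structure-recovery guarantee, then control (ii)--(iii) by standard Lipschitz-and-concentration bookkeeping on the event that (i) succeeds.

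First I would dispatch (i): invoke the structure-learning guarantee behind Algorithm~\ref{alg:sl1}, namely that the robust-PCA program decomposes $\hat\Sigma$ into a rank-one ``validity'' component $\hat L$ and a sparse ``dependency'' component $\hat S$ (in the spirit of \citet{Varma19}), and that thresholding $\hat\Sigma\hat\ell$ and $\hat S$ then recovers $\hat V=V$ and $\hat E=E$ exactly. The hypothesis $n>c_1 d^2 m$ is precisely the sample complexity this requires, and it delivers exact recovery with probability at least $1-\tfrac1m$; this is the event I condition on henceforth. On it, $\hat\Omega=\Omega$, so $M_{\hat\Omega}=M$ and $\hat\ell=M^\dagger\hat q$, while the true parameters satisfy $M\ell^*=q^*$ with $\ell^*$ in the row space of $M$, so that $\ell^*=M^\dagger q^*$ and $\|\hat\ell-\ell^*\|\le\|M^\dagger\|\,\|\hat q-q^*\|$.

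Next I would run the deterministic chain for (iii). The entries of $q$ and $\hat q$ are $\log(O^*_{ij})^2$ and $\log\hat O_{ij}^2$ over the conditionally-independent valid pairs entering the system; since for those pairs $|O^*_{ij}|=|\mu^*_i\mu^*_j|\ge R_{\min}$ by definition, and the sample-size condition keeps $\hat O_{ij}$ bounded away from $0$, the mean value theorem gives $|\hat q_{ij}-q^*_{ij}|\le \tfrac{c}{R_{\min}}|\hat O_{ij}-O^*_{ij}|$, hence $\|\hat q-q^*\|\le\tfrac{c}{R_{\min}}\|\hat O-O^*\|_F$ (the $O(m^2)$ count of pairs and the norm conversions get absorbed into the $m$-power of the final bound). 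Then, since $|\hat\mu_i|=\exp(\hat\ell_i/2)$ and $|\mu^*_i|=\exp(\ell^*_i/2)$ are both at most $1$, another mean-value-theorem step gives $\|\hat\mu-\mu^*\|\le\tfrac12\|\hat\ell-\ell^*\|$, provided the signs are recovered correctly---which holds in this small-error regime because the assumption that valid candidates agree with $z$ more often than not pins down $\mathrm{sgn}(\mu^*)$. Chaining these yields $\|\hat\mu-\mu^*\|\lesssim\tfrac{1}{R_{\min}}\|M^\dagger\|\,\|\hat O-O^*\|_F$. Finally, for (ii), each entry of $\hat O$ is an average of $n$ i.i.d.\ $\{-1,1\}$-bounded products with mean $O^*_{ij}$; the concentration bound of Lemma~\ref{lem:conc} gives $\mathbb{E}\|\hat O-O^*\|\lesssim \text{poly}(m)\sqrt{2\pi/n}$, and substituting and collecting the numerical constant ($16$) and the exponent ($m^{5/2}$) from the pair-counting and norm conversions produces the stated inequality.

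The main obstacle is step (i): showing that the convex program in Algorithm~\ref{alg:sl1} exactly separates the rank-one and sparse parts and that the subsequent thresholds recover $V$ and $E$ with probability $1-\tfrac1m$ under $n>c_1 d^2 m$. This needs incoherence/identifiability conditions on the Ising model \eqref{eq:ising}---which is exactly what the assumptions of Section~\ref{sec:assumptions} (each valid candidate has at least two others conditionally independent of it, dependent candidates form a clique, majority validity) are designed to supply---and it is where essentially all the difficulty lies; steps (ii)--(iii) are routine once one is inside the small-error regime that the sample-size threshold guarantees. A secondary point requiring care is checking that the same threshold simultaneously justifies the linearizations of $\log$ and $\exp$ and the correctness of sign recovery, rather than each separately.
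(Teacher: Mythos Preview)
Your proposal is correct and follows essentially the same route as the paper: invoke the structure-learning guarantee (adapted from \citet{Varma19}) so that under $n>c_1d^2m$ one has $\hat\Omega=\Omega$ with probability at least $1-1/m$, then on that event chain the perturbation through $\hat q\to\hat\ell\to|\hat\mu|$ via the $\log$/$\exp$ Lipschitz steps and conclude with the matrix-Hoeffding bound of Lemma~\ref{lem:conc}. The paper's bookkeeping differs only in cosmetic details (it bounds the $\exp$ step by $2\sqrt{m}$ rather than your tighter $\tfrac12$, which is where the explicit $16\,m^{5/2}$ constant arises), and it packages the whole argument as the special case $i_S=e_S=0$ of a more general Theorem~\ref{thm:extended} that also tracks misspecification error when structure learning is imperfect.
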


\paragraph{Remark} The bound on the estimation error goes to $0$ as $O(1/\sqrt{n})$, while it scales as $O(m^{5/2})$ in the number of IV candidates. The bound also depends on the smallest correlation between a pair of valid IVs; the smaller this term, the more samples we need to accurately estimate $\mu^*$. $\|M^\dagger\|$ is the largest singular value of the pseudoinverse of $M := M_{\Omega}$, i.e., the true $M$ formed with the edges from $G$; it indicates the cost of solving our problem (which is independent of $n$).

Under the assumptions in Section~\ref{sec:assumptions}, Ivy can handle invalid candidates and dependencies in $G$. This is because with sufficiently many samples (the requirement $n > c_1d^2m$), the structure learning component correctly identifies valid candidates and the correct dependencies among them, with high probability. The more dependencies that have to be estimated (that is, the larger the number of sources $m$ and degree $d$), the more samples we need. However, once we pass a threshold, we are operating only over valid IVs and a correct model, enabling the estimation error to go to zero. In Appendix \ref{sec:ci-proof}, we present a more technical result, applicable to the low-sample regime. In that case, the structure learning component may not identify all invalid IVs and may leave some edges, and we bound the impact of these unidentified invalid IVs and misspecified dependencies.

\paragraph{Application to Allele Scores}
UAS implicitly follows the conditionally independent model above. Our framework helps obtain new insights on its behavior. Specifically, when the ground truth model is \emph{not} conditionally independent, we can explain the approximation error in the parameters estimated by UAS.

As long as there is at least one misspecified dependency, the parameter error in UAS cannot go to zero. Specifically, let $n \rightarrow \infty$ and suppose there is a dependency between $w_1$ and $w_2$, but we miss it. Then, we do not have conditional independence, so $\mathbb{E}[w_1w_2] \neq \mathbb{E}[a_1]\mathbb{E}[a_2]$. Form $q'$ with $\mathbb{E}[a_1]\mathbb{E}[a_2]$ and $q$ with $\mathbb{E}[w_1w_2]$. We can write $q'-q = \delta e_1$ for some $\delta \neq 0$, since $q$ is only incorrect in one position. Then, $\|\ell'-\ell\| = \|M^\dagger(q'-q)\| = \|M^\dagger(\delta e_1)\| = |\delta| \|M^\dagger e_1\| \ge \tfrac{|\delta|}{\|M\|}$, which is a lower bound that is independent of $n$. Thus we obtain that $\mathbb{E}[\|\mu' - \mu^*\|] > 0$. 

\paragraph{Causal Effect Estimation Error} Next, we bound the causal effect estimation error when using Ivy's synthesized IV. We bound the mean squared error $\mathbb{E}[ (\hat{\alpha}_{x \rightarrow y} - \alpha^*_{x \rightarrow y})^2]$  
between the effect with Ivy's version of $z$ and that with the true $z$, as a function of the parameter error $\mathbb{E}[\norm{\hat{\mu}-\mu^*}] $ we obtained in Theorem~\ref{thm:ci_bound_1}.

We use the popular Wald estimator as an example. Let $\beta^*_{zx}$ and $\beta^*_{zy}$ be the population-level coefficients of $z$ from the logistic regressions to predict $x$ and $y$ under $\mathcal{D}$, and $\hat{\beta}_{\hat{z}y},\hat{\beta}_{\hat{z}x}$ the corresponding regression coefficients of $\hat{z}$. Define $\alpha^*_{x \rightarrow y} := {\beta^*_{zy}}/{\beta^*_{zx}}$ as the population-level Wald estimator. 
Suppose that the population-level logistic loss of $\mathcal{D}$ satisfies Lemma~\ref{lemma:strong-cvx} in Appendix \ref{sec:aux-lemma}, so that it is $\lambda$-strongly convex. Again suppose $m > c_0, n > c_1d^2m$ and large enough such that for some $\kappa \in (0,1)$, $\max\{\lvert \hat{\beta}_{\hat{z}y} - \beta_{zy}^* \rvert, \lvert \hat{\beta}_{\hat{z}x} - \beta_{zx}^* \rvert\} \le \kappa \beta_{zx}^*$, and let $c_2$ be a constant. 

\begin{restatable}{theorem}{thmciwald}
\label{thm:ci-wald}
Run Algorithm~\ref{alg:ivy} on $n$ samples of $m$ IV candidates to synthesize $\hat{z}$'s that are plugged into the Wald estimator to obtain the causal effect estimate $\hat{\alpha}_{x\rightarrow y}$. Then, the error in the estimate $\hat{\alpha}_{x\rightarrow y}$ compared to the true effect $\alpha^*_{x\rightarrow y}$ is bounded as follows:
\begin{align*}
\mathbb{E}&[(\hat{\alpha}_{x \rightarrow y} - \alpha^*_{x \rightarrow y})^2] 
\leq  \sqrt{\frac{1}{n}} \cdot \frac{6000 c_2 m^{\frac{5}{2}}(\beta_{zx}^*+\beta_{zy}^*)^2(1 +  \|M^\dagger\|)}{R_{\min} \lambda (1-\kappa)^2 \beta_{zx}^{*4}}.
\end{align*}
\end{restatable}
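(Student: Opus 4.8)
The plan is to propagate the parameter-error bound of Theorem~\ref{thm:ci_bound_1} through the Wald estimator in three stages: from the ratio $\hat\beta_{\hat z y}/\hat\beta_{\hat z x}$ to the individual logistic-regression coefficients, from each coefficient to the gap between the synthesized $\hat z$ and the true $z$, and from that gap back to $\norm{\hat\mu-\mu^*}$ and $\norm{\hat O-O^*}$. First I would dispose of the ratio. Writing $\hat\alpha_{x\rightarrow y}-\alpha^*_{x\rightarrow y}=\frac{\hat\beta_{\hat z y}-\beta^*_{zy}}{\hat\beta_{\hat z x}}-\frac{\beta^*_{zy}(\hat\beta_{\hat z x}-\beta^*_{zx})}{\hat\beta_{\hat z x}\beta^*_{zx}}$ and using the hypothesis $\max\{\abs{\hat\beta_{\hat z y}-\beta^*_{zy}},\abs{\hat\beta_{\hat z x}-\beta^*_{zx}}\}\le\kappa\beta^*_{zx}$ to get $\abs{\hat\beta_{\hat z x}}\ge(1-\kappa)\beta^*_{zx}$, one obtains $\abs{\hat\alpha_{x\rightarrow y}-\alpha^*_{x\rightarrow y}}\le\frac{\beta^*_{zx}+\abs{\beta^*_{zy}}}{(1-\kappa)\beta_{zx}^{*2}}\max\{\abs{\hat\beta_{\hat z y}-\beta^*_{zy}},\abs{\hat\beta_{\hat z x}-\beta^*_{zx}}\}$; squaring produces the $(\beta^*_{zx}+\beta^*_{zy})^2/((1-\kappa)^2\beta_{zx}^{*4})$ prefactor. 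Since $x,y,z,\hat z\in[-1,1]$ and the $\kappa$-hypothesis keeps the coefficients in a bounded region, the coefficient errors are bounded by an absolute constant, so $\mathbb{E}[\max\{\cdot\}^2]\le(\text{const})\,\mathbb{E}[\max\{\cdot\}]$; this is exactly why the final bound scales like $n^{-1/2}$ and like $\lambda^{-1}$ rather than their squares. It then suffices to bound $\mathbb{E}[\abs{\hat\beta_{\hat z x}-\beta^*_{zx}}]$ and $\mathbb{E}[\abs{\hat\beta_{\hat z y}-\beta^*_{zy}}]$.

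For these I would invoke $\lambda$-strong convexity of the population logistic loss (Lemma~\ref{lemma:strong-cvx}). Conditioning on the outputs $(\hat V,\hat E,\hat\mu,\hat O)$ of Steps~1--2 (so the $\hat z^{(i)}$ drawn in Algorithm~\ref{alg:causality} are conditionally i.i.d.\ given the $w^{(i)}$'s), let $L$ be the population loss of regressing $x$ on the true $z$ and $\hat L$ the empirical loss of regressing $x$ on $\hat z$; strong convexity gives $\abs{\hat\beta_{\hat z x}-\beta^*_{zx}}\le\tfrac1\lambda\norm{\nabla L(\hat\beta_{\hat z x})}=\tfrac1\lambda\norm{\nabla L(\hat\beta_{\hat z x})-\nabla\hat L(\hat\beta_{\hat z x})}$, as $\nabla\hat L(\hat\beta_{\hat z x})=0$. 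I would split the gradient gap into (i) the population-versus-empirical difference for the \emph{same} regressor, controlled at rate $O(m/\sqrt n)$ by the concentration argument behind Lemma~\ref{lem:conc}, and (ii) the difference between using $z$ and using $\hat z$. The logistic gradient is Lipschitz in its regressor with a bounded constant (bounded score, $1$-Lipschitz link, $z,\hat z\in[-1,1]$), so term (ii) is at most a constant times the total-variation distance between $\prob_{\hat\mu,\hat O}(z\mid w_{\hat V})$ and $\prob_{\mu^*,O^*}(z\mid w_V)$.

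It remains to bound that total-variation distance by the parameter error. On the event that \textsc{StructureLearn} recovers the structure exactly ($\hat V=V$, $\hat E=E$), which by the hypothesis $n>c_1 d^2 m$ has probability at least $1-1/m$ (Theorem~\ref{thm:ci_bound_1}), the two posteriors differ only through the estimated mean parameters; since the conditional log-odds of $z$ is affine in the natural parameters of \eqref{eq:ising} and the maps (mean parameters)$\,\to\,$(natural parameters)$\,\to\,$(posterior) are Lipschitz on the relevant bounded domain, the distance is $\le C(\norm{\hat\mu-\mu^*}+\norm{\hat O-O^*})$; on the complementary event I would bound everything by an absolute constant, contributing $O(1/m)$, which under $m>c_0$ and $n>c_1 d^2 m$ is dominated by the main term. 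Taking expectations and inserting $\mathbb{E}[\norm{\hat\mu-\mu^*}]=O\!\big(m^{5/2}\norm{M^\dagger}/(R_{\min}\sqrt n)\big)$ from Theorem~\ref{thm:ci_bound_1} together with $\mathbb{E}[\norm{\hat O-O^*}]=O(m/\sqrt n)$ gives a total-variation bound of order $m^{5/2}(1+\norm{M^\dagger})/(R_{\min}\sqrt n)$, the $1$ absorbing the $\hat O$ term since $R_{\min}\le 1$. Chaining the three stages, multiplying the constants accrued at each step, and absorbing the subdominant terms into $c_2$ yields the stated bound.

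The step I expect to be the main obstacle is the second one: cleanly controlling how the \emph{randomly resampled} synthesized IV---whose law depends on the data both through $w_{\hat V}$ and through the estimated parameters $\hat\mu,\hat O$---perturbs the minimizer of the logistic objective, with the correct polynomial dependence on $m$ and with explicit constants. This requires simultaneously handling the randomness of the synthesis step (via conditioning), the finite-sample gap between empirical and population losses, the verification that $\hat\beta_{\hat z x}$ lies in the region where $\lambda$-strong convexity holds (which is precisely where the $\kappa$-closeness hypothesis is used), and the Lipschitz dependence of the Ising posterior on its mean parameters. By comparison, the ratio-perturbation reduction and the final bookkeeping are routine.
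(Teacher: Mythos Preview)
Your three-stage plan---ratio perturbation, then strong convexity to pass to the $z$-versus-$\hat z$ discrepancy, then bounding that discrepancy by the parameter errors via the Lipschitz dependence of the Ising posterior on its mean parameters---is exactly the architecture of the paper's proof (Lemma~\ref{lemma:2} for stage one, Lemma~\ref{lemma:1} for stages two and three, then plugging in Theorem~\ref{thm:ci_bound_1} and Lemma~\ref{lem:conc}). The one substantive variation is in stage two: the paper applies the \emph{function-value} form of strong convexity, $\tfrac{\lambda}{2}\|\hat\gamma-\gamma^*\|^2 \le L(\hat\gamma)-L(\gamma^*)$, and bounds the loss gap directly by the total-variation distance (plus an empirical term $\xi(n)$), so the $1/(\lambda\sqrt n)$ scaling for the \emph{squared} coefficient error falls out immediately without any boundedness trick; you instead use the gradient form $\lambda\|\hat\gamma-\gamma^*\|\le\|\nabla L(\hat\gamma)\|$, obtain the unsquared error, and then invoke boundedness of $\delta$ to pass from $\E[\delta^2]$ to $\E[\delta]$. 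Both routes are valid and yield the same final scaling; the paper's is slightly more direct since it avoids the extra boundedness step, while yours makes the split between the sampling term and the $z$-versus-$\hat z$ term a bit more explicit at the gradient level. The Lipschitz chain (mean parameters $\to$ natural parameters $\to$ posterior) you sketch in stage three is precisely what the paper does via Fenchel duality and the bound $|P_{\hat\theta}-P_{\theta^*}|\le 2\|\hat\theta-\theta^*\|_\infty$.
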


Theorem~\ref{thm:ci-wald} quantifies how the estimation error of $z$ propagates to the downstream Wald estimator. The error goes to 0 as $1/\sqrt{n}$, suggesting that, under the conditions we described, we can indeed perform reliable causal inference from weak IV candidates. Our final observation is that model misspecification may lead to nonzero error in the causal estimates (see Section~\ref{sec:lower-bound-causal-effect-appendix}): with even one misspecified dependency, $\mathbb{E}[\|\mu' - \mu^*\|] > 0$ with positive probability. We can lower bound $(\hat{\alpha}_{x \rightarrow y} - \alpha^*_{x \rightarrow y})^2$ in terms of $\mathbb{E}[\|\mu' - \mu^*\|]$, concluding that $(\hat{\alpha}_{x \rightarrow y} - \alpha^*_{x \rightarrow y})^2 > 0$ for such cases.

\section{Experiments}
\label{sec:exp}
We empirically validate that the summary IVs synthesized by Ivy lead to reliable causal effect estimates when plugged into standard causal effect estimators  on real-world healthcare datasets. Specifically,

\begin{itemize}[leftmargin=*]
\item In Section~\ref{sec:uncurated-exp}, we show, in clinically-motivated scenarios where only uncurated (potentially dependent or invalid) IV candidates are available, that Ivy can synthesize a summary IV that leads to more reliable effect estimates than allele scores.
\item In Section~\ref{sec:valid-exp}, in scenarios with hand-picked curated (putatively valid and conditionally independent) IV candidates, we show that Ivy performs comparably well to allele scores.
\item In Section~\ref{sec:synthetic-exp}, we evaluate the Ivy framework on synthetic data and further focus on its robustness against violation of key assumptions. 
\end{itemize}

We describe the datasets, methods, and evaluation metrics and then report our primary findings.\footnote{In Appendix~\ref{sec:extended-experiments}, we give further details about our setup and additional experiments.}

\paragraph{Datasets} 
In collaboration with cardiologists, we selected real-world health data collected from the UK Biobank \citep{sudlow2015uk} for a variety of cardiac conditions. Because heart diseases are a major class of conditions affected by many factors, we examined five factors (for instance, we study the LDL-CAD link, as in \citealt{burgess2016combining}). The most challenging aspect of selecting datasets for causal inference is the lack of ground truth effects. As a result, we have three desiderata for our dataset choices:
\begin{itemize}[leftmargin=*]
\item We need some risk-outcome pairs where strong clinical evidence exists to support that there is \emph{no causal relationship}, while for other pairs, there is strong evidence of a positive relationship;
\item We require standard pairs that have previously been tested against in the MR literature;
\item To evaluate performance in the favorable setting where IV candidates are valid and conditionally independent, we need access to curated sets of candidates. 
\end{itemize}
The five risk factors we use are high-density lipoprotein (HDL), low-density lipoprotein (LDL), systolic blood pressure (SBP), C-reactive protein (CRP), and vitamin D (VTD). The outcome is occurrence of coronary artery disease (CAD). Single-nucleotide polymorphisms (SNPs) associated with these factors are used as IV candidates. These pairs are well-understood by clinicians, enabling us to use these pairs as proxies to the ground truth \citep{c2011association, lieb2013genetic, holmes2014mendelian,  manousaki2016mendelian}. Using the risk factors, outcome, and IV candidates, we extract 11 datasets from the UK Biobank for our experiments (details in Table~\ref{tab:datasets}).

\paragraph{Methods} Ivy produces a summary in the synthesis phase, so we compare to allele scores---UAS and WAS---in Sections~\ref{sec:uncurated-exp}-\ref{sec:synthetic-exp}, as they also produce a summary IV. Additionally, we report results of logistic regression (Assn), which is a proxy for the confounded association between the risk factor and the outcome. 

\paragraph{Metric} After the synthesis phase, we use the summary IV in the estimation phase by plugging it into a causal effect estimator, along with the risk factor and the outcome. In all experiments, we use the Wald ratio to estimate effects. We report the median Wald ratio and its $95\%$ confidence interval (CI). In MR, a CI\ that covers the origin is interpreted as no causal effect, while strictly positive/negative CIs indicate positive/negative causal effects.

\begin{figure}[t]
\begin{subfigure}{\columnwidth}
\centering
\includegraphics[scale=0.8]{./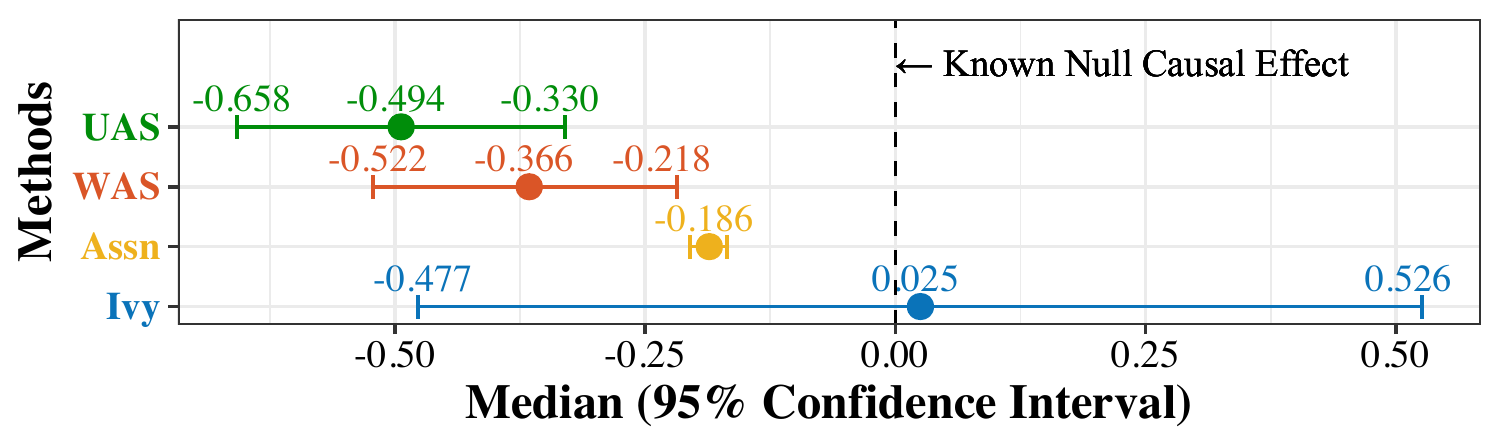}
\caption{\texttt{hdl}$\Rightarrow$\texttt{cad}}
\label{fig:hdl-cad-uncurated}
\end{subfigure}
\begin{subfigure}{\columnwidth}
\centering
\includegraphics[scale=0.8]{./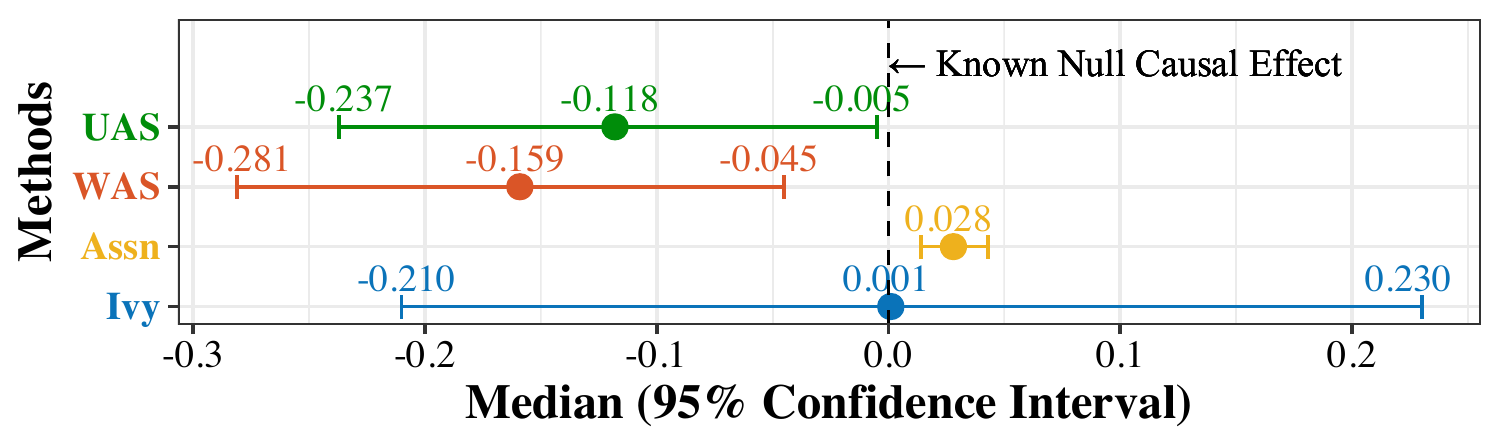}
\caption{\texttt{crp}$\Rightarrow$\texttt{cad}}
\label{fig:crp-cad-uncurated}
\end{subfigure}
\begin{subfigure}{\columnwidth}
\centering
\includegraphics[scale=0.8]{./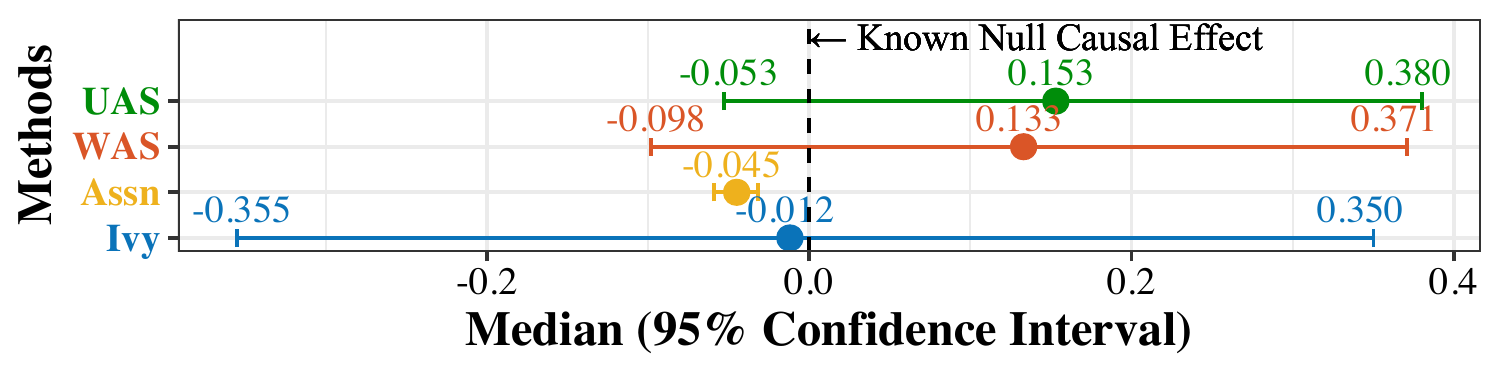}
\caption{\texttt{vtd}$\Rightarrow$\texttt{cad}}
\label{fig:vtd-cad-uncurated}
\end{subfigure}
\caption{Wald ratios in estimating the causal effects of three risk factors (HDL, C-reactive protein, and vitamin D) to the occurrence of coronary artery disease using uncurated IVs. Goal: 0 causal effect.} 
\label{fig:uncurated-exp}
\end{figure}

\subsection{MR with Uncurated IVs}
\label{sec:uncurated-exp}
We first use the summary variable synthesized by Ivy to draw causal inference in common clinical scenarios where only low-quality IV candidates are available. As shown in Figure~\ref{fig:uncurated-exp}, Ivy dismisses \emph{known spurious} correlations on all three of the real-world datasets (median effect size $\le 0.025$); in comparison, allele scores yield more biased estimates (median effect size $\ge 0.118$).

Specifically, we test spurious relationships between three potential risk factors (HDL, CRP, and VTD) and CAD: these are known to be noncausal, so the true effect size is 0. We compare Ivy with UAS, WAS, and Assn.
Results are in Figure~\ref{fig:uncurated-exp}. Both UAS and WAS return negative causal effects for HDL (UAS median: -0.494; WAS median: -0.366; Figure~\ref{fig:hdl-cad-uncurated})  and CRP (UAS median: -0.118; WAS median: -0.159; Figure~\ref{fig:crp-cad-uncurated}) with negative CIs. By contrast, Ivy does not identify a causal effect (Ivy median: 0.025 and 0.001 for HDL and CRP, respectively), with CIs covering the origin. In Figure~\ref{fig:vtd-cad-uncurated}, the CIs of all three methods cover the origin, indicating successful dismissal. Nonetheless, the median estimates of UAS (0.153) and WAS (0.133) are skewed towards the positive direction, while Ivy's is very close to the origin (-0.012). 

Ivy tends to have a wider confidence interval compared to allele scores, as it selects only a subset of IV candidates. Allele scores make use of all candidates regardless of their validity, and may be hurt by one or more being invalid. In all cases, Association (Assn) fails to dismiss spurious correlation, highlighting the importance of the use of IVs for debiasing causal estimates.

\begin{figure}[t]
\centering
\begin{subfigure}[b]{0.48\textwidth}
\centering
\includegraphics[scale=0.5]{./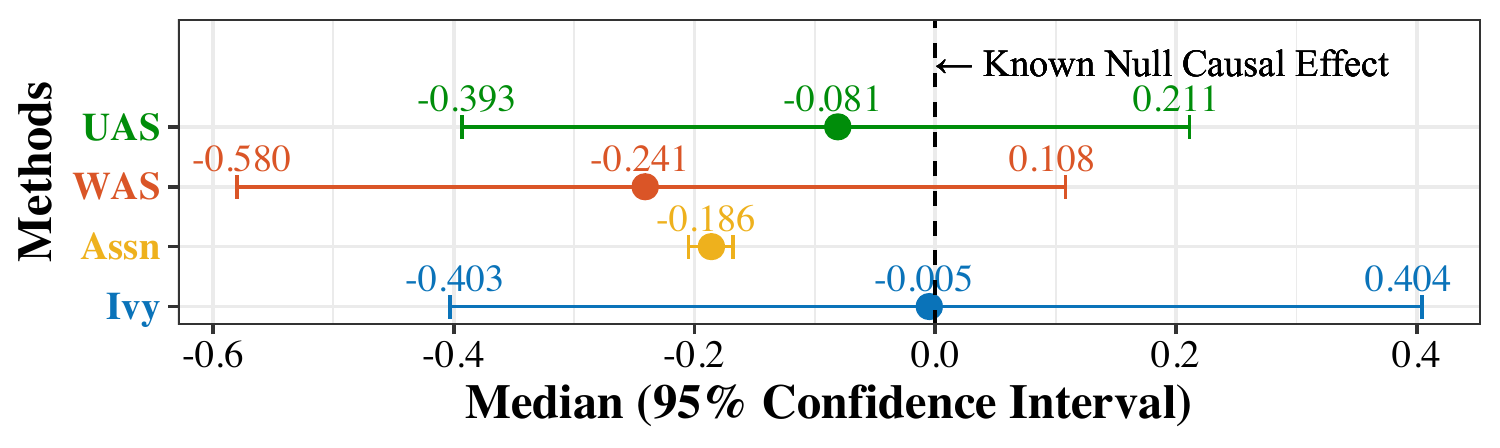}
\caption{\texttt{hdl}$\rightarrow$\texttt{cad}}
\label{fig:hdl-valid}
\end{subfigure}
\begin{subfigure}[b]{0.48\textwidth}
\centering
\includegraphics[scale=0.5]{./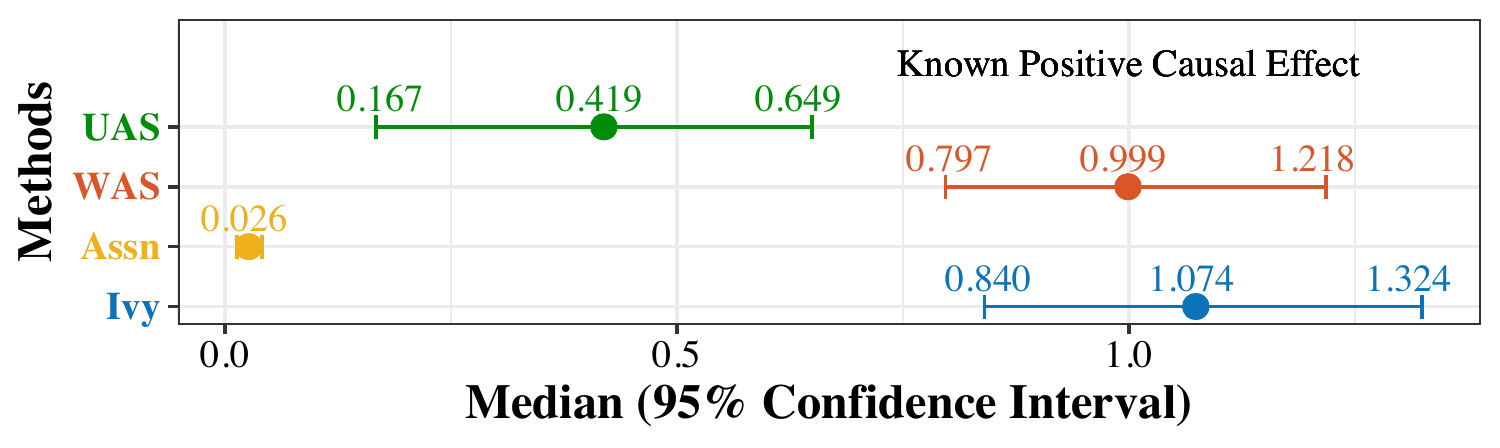}
\caption{\texttt{ldl}$\rightarrow$\texttt{cad}}
\label{fig:ldl-valid}
\end{subfigure}
\caption{Wald ratios in estimating the causal effect between high(low)-density lipoprotein and coronary artery disease using curated IVs. Goal in (a): 0 causal effect. Goal in (b): positive causal effect.}
\label{fig:valid-exp}
\end{figure}

\subsection{MR with Curated IVs}
\label{sec:valid-exp}
Next, we use a summary IV using a set of curated (putatively valid and conditionally independent) candidates with both known non-causal and known causal pairs. While all methods work, for the positive LDL-CAD relationship, Ivy retains the positive performance of WAS over UAS. The results are in Figure~\ref{fig:valid-exp}.

Concretely, since we are now in the fortunate (but rarer) setting in which the IV candidates are ``good,'' we expect that both Ivy and allele scores provide reasonable estimates. We use the known noncausal relationship between HDL and CAD (Example~\ref{example:hdl}) and the known positive causal relationship between LDL and CAD. Ivy is compared with UAS, WAS, and Assn. 
In terms of dismissing spurious correlation (Figure~\ref{fig:hdl-valid}), the $95\%$ CIs~of all three IV-based methods (Ivy, UAS, WAS) cover the origin, indicating successful dismissal. Notably, the median estimate of Ivy is closest to the origin (-0.005) compared to other methods (UAS median: -0.081; WAS median: -0.241), suggesting a potentially less biased estimate from Ivy.  Again, Assn fails to dismiss spurious correlation even in this ``easier'' setting. 

In terms of identifying a true causal relationship
(Figure~\ref{fig:ldl-valid}), all three IV-based methods correctly identify the direction of the causal relationship (UAS median: 0.419; WAS median: 0.999; Ivy median: 1.074), as indicated by the positive CIs~of the causal estimates. The lengths of the CIs of the three IV-based methods are also comparable to each other. On this dataset, Ivy yields an estimate most similar to that of WAS---matching the property that Ivy mimics allele scores in the setting where IV-candidates are high-quality.

\begin{figure}[t]
\centering
\begin{subfigure}[b]{0.49\columnwidth}
\centering
\scalebox{1}{
\begin{tikzpicture}[-latex, auto, node distance=1.0cm and 1.1cm , on grid, semithick, state/.style ={circle, draw, minimum width=0.5cm}, scale=0.50]
\node[state, draw] (W9) {\tiny $w_9$};
\node[state, draw] (Y) [below right=of W9] {\large $y$};
\node[state] (X) [below =of W9] {\large $x$};
\node[text width=3cm] (T) at (-1.5, 2) {Varying strength};
\node[text width=3cm] (Fake) at (-1.0, -6) {};
\node[text width=1cm] (F) at (-1.3, -0.7) {};
\node[state] (Z) [left=of X, dashed] {\large $z$};
\node[state] (W1) [above left=of Z] {\tiny $w_1$};
\node[state] (W2) [left=of Z] {\tiny $w_2$};
\node[state] (W8) [below=of W2] {\tiny $w_{8}$};
\node[] at (-4.45,-3.25) {\tiny $\smash{\vdots}$};
\path (Z) edge[-] (W1);
\path (Z) edge[-] (W2);
\path (Z) edge[-] (W8);
\path (Z) edge[red, very thick, -] (W9);
\path (Z) edge (X);
\path (W9) edge (X);
\path (W9) edge (Y);
\path (T) edge (F);
\end{tikzpicture}
}
\caption{Causal model}
\label{fig:invalid-z}
\end{subfigure}
\begin{subfigure}[b]{0.49\columnwidth}
\centering
\includegraphics[scale=0.55]{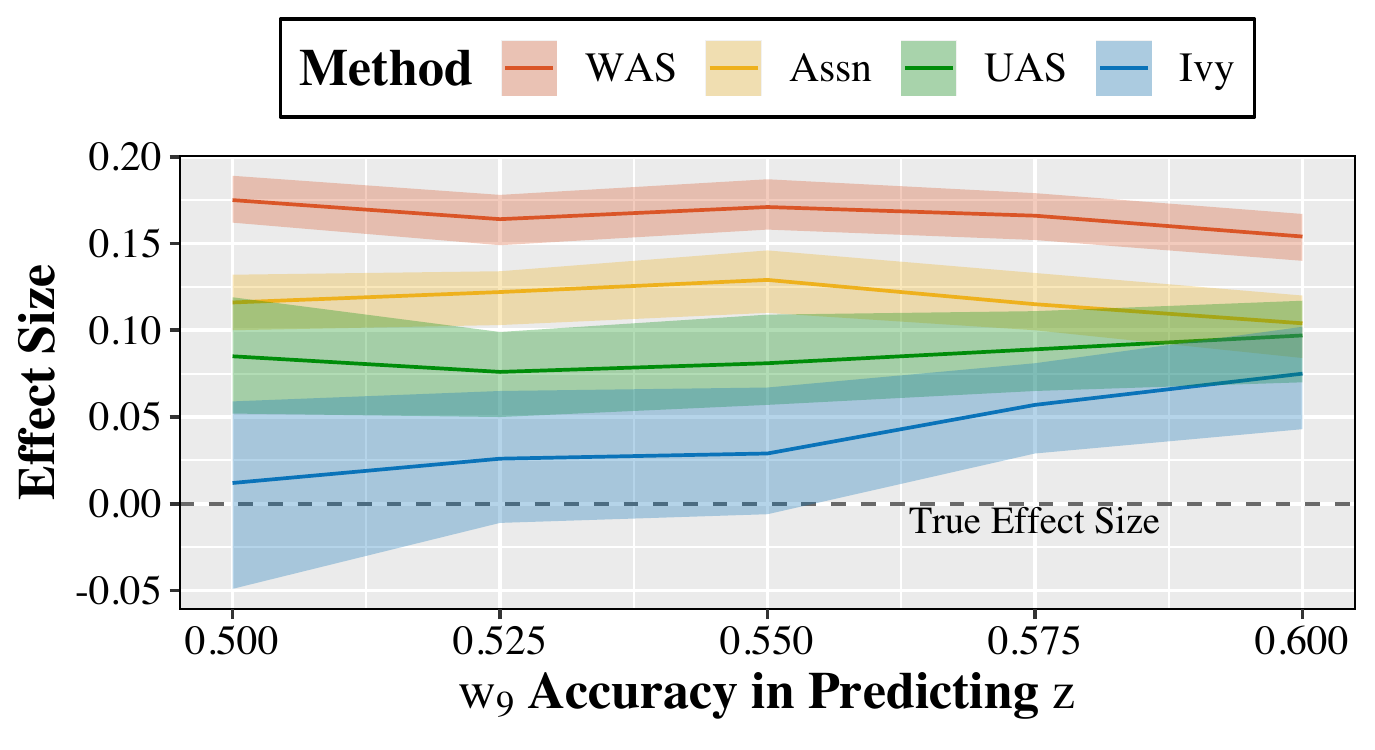}
\caption{Causal effect estimation}
\label{fig:invalid-z-result}
\end{subfigure}
\caption{Dismissing spurious correlations when $z$ is invalid. As the invalidity of $z$, i.e., the accuracy of $w_9$ in predicting $z$, increases, all methods eventually fail. However, Ivy is the most robust.}
\end{figure}

\subsection{Synthetic Experiments}
\label{sec:synthetic-exp}
Now we use synthetic data, controlling candidate properties and the ground-truth. We validate the robustness of Ivy and compare the effect to the ground-truth.

\begin{figure}[t]
\centering
\begin{subfigure}[b]{0.48\textwidth}
\centering
\includegraphics[scale=0.5]{./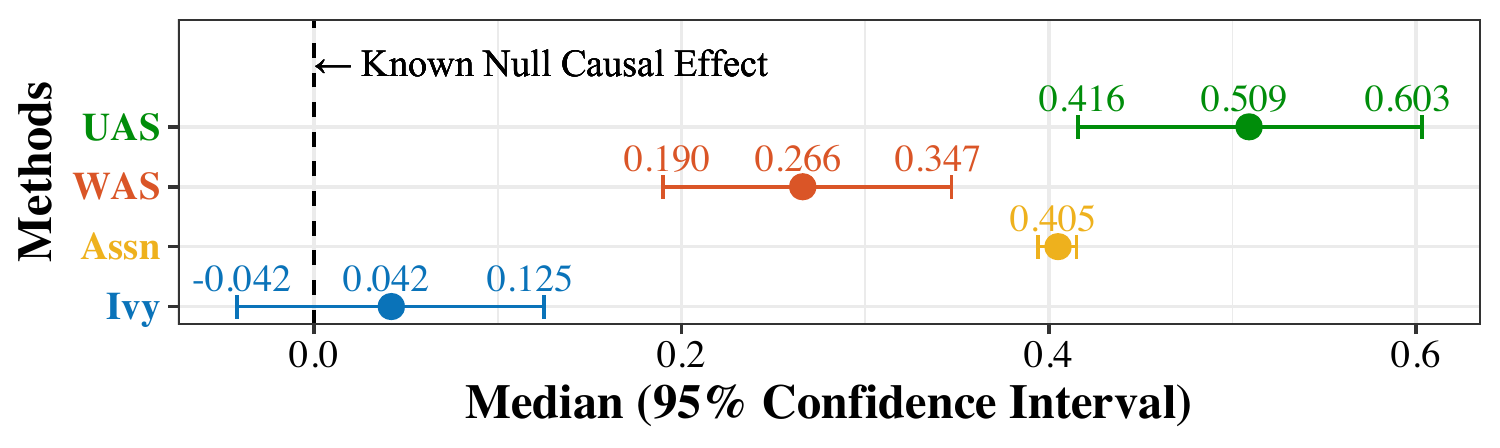}
\caption{Spurious correlation}
\label{fig:pipeline-null}
\end{subfigure}
\begin{subfigure}[b]{0.48\textwidth}
\centering
\includegraphics[scale=0.5]{./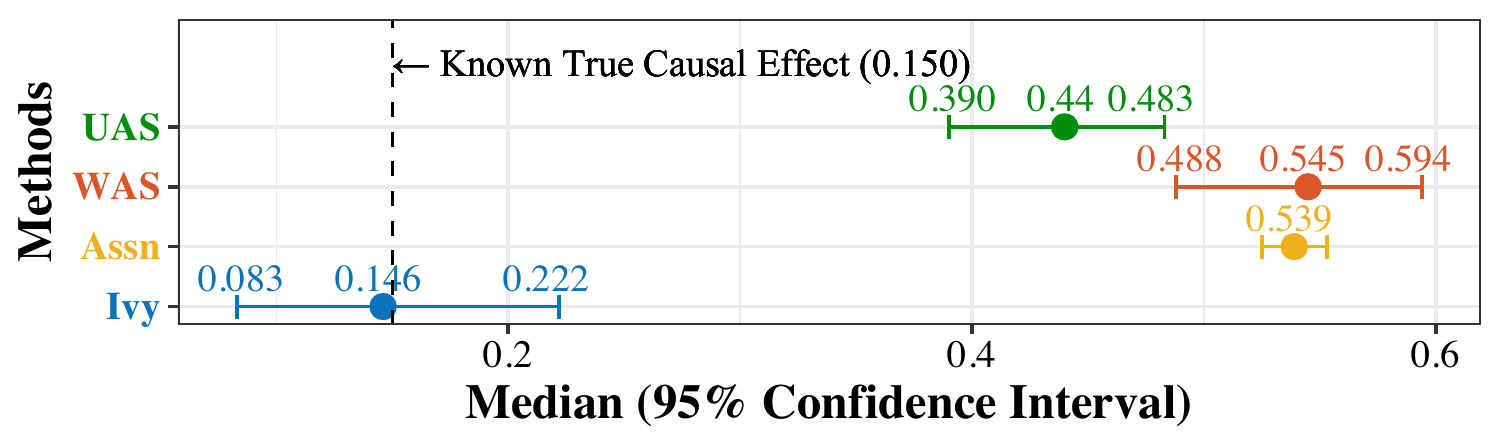}
\caption{True causal effect=$0.150$}
\label{fig:pipeline-true}
\end{subfigure}
\caption{Wald ratios in causal effect estimation using synthetic data. The true causal effects are 0 and 0.15.\vspace{-0.5em}}
\label{fig:pipeline}
\end{figure}

\paragraph{Robustness} We investigate how robust Ivy is to an important violation of our main assumptions (that all the invalid candidates are independent of $z$). Then, the summary $z$ itself may be an invalid IV. We show that Ivy yields a causal estimate that is more robust to this case compared to allele scores. Of course, when the invalidity is sufficiently strong, eventually Ivy also fails to dismiss a spurious correlation (Figure~\ref{fig:invalid-z-result}). 

We use the spurious correlation model in Figure~\ref{fig:invalid-z}. The candidate $w_9$ serves as a confounder between the risk factor and the outcome. Here $z$ is invalid because $z$ is associated with $w_9$, and we increase this association strength (red edge) to force more invalidity. 
We expect Ivy to downweight the influence of $w_9$ while UAS and WAS may not. Indeed, Ivy performs well when $z$ is nearly valid (i.e., nearly independent of $w_9$), and gradually degrades (blue curve), while allele scores immediately struggle. Eventually, increasing the amount of invalidity causes Ivy to fail as well.

\paragraph{Dismissing Spurious Correlations}
Next, we generate synthetic data with no causal effect along with valid and invalid IVs and adding dependencies. The results are in Figure~\ref{fig:pipeline-null}.
Ivy recovers the dependency structure and identifies the invalid candidates. As a result, Ivy can successfully dismiss the spurious correlation by identifying no causal effects (Ivy median: 0.042) while both UAS and WAS fail to do so by yielding estimates that are consistent with the direction of the spurious correlation (UAS median: 0.266, WAS median: 0.509).

\vspace{-0.3em}
\paragraph{Positive Causal Effects} We use synthetic data with positive effects and dependent, partially invalid IV candidates. Experimental results are reported in Figure~\ref{fig:pipeline-true}.  Ivy provides a median estimate (0.146) that is closest to the true effect (0.150) while both UAS (0.440) and WAS (0.545) return median estimates that are biased towards the observational association.

\vspace{-0.3em}
\section{Conclusion}
\vspace{-0.7em}
We introduce Ivy, a framework that synthesizes from IV candidates a summary IV used for downstream causal inference. Through theoretical analysis and empirical studies, we demonstrate the robustness and limitation of Ivy in handling invalidity and dependencies among IV candidates. 
\balance

\section*{Acknowledgements}
The authors thank David Amar, Jason Fries, Sinong Geng, Beliz Gunel, Vitor Hadad, Ramtin Keramati, Allison Koenecke, Sharon Li, Avner May, Tong Mu, Ethan Steinberg, Anna Thomas, Steve Yadlowsky, and Jiaming Zeng 
for helpful conversations and feedback.

We gratefully acknowledge the support of DARPA under Nos. FA87501720095 (D3M), FA86501827865 (SDH), and FA86501827882 (ASED); NIH under No. U54EB020405 (Mobilize), NSF under Nos. CCF1763315 (Beyond Sparsity), CCF1563078 (Volume to Velocity), and 1937301 (RTML);  ONR under No. N000141712266 (Unifying Weak Supervision); the Moore Foundation, NXP, Xilinx, LETI-CEA, Intel, IBM, Microsoft, NEC, Toshiba, TSMC, ARM, Hitachi, BASF, Accenture, Ericsson, Qualcomm, Analog Devices, the Okawa Foundation, American Family Insurance, Google Cloud, Swiss Re, and members of the Stanford DAWN project: Teradata, Facebook, Google, Ant Financial, NEC, VMWare, and Infosys. The U.S. Government is authorized to reproduce and distribute reprints for Governmental purposes notwithstanding any copyright notation thereon. Any opinions, findings, and conclusions or recommendations expressed in this material are those of the authors and do not necessarily reflect the views, policies, or endorsements, either expressed or implied, of DARPA, NIH, ONR, or the U.S. Government.

\bibliography{ivy-paper}
\bibliographystyle{plainnat}

\onecolumn
\appendix
\renewcommand\thefigure{A.\arabic{figure}}
\renewcommand\thetable{A.\arabic{table}}
\section*{Appendix}
\setcounter{figure}{0}
\setcounter{table}{0}

The appendix is organized as follows. First, we discuss related work (Section~\ref{sec:related-work}). Next, we provide theoretical details, including the proofs of our main results, and in particular Theorem~\ref{thm:ci_bound_1} and its generalization. We also provide an analysis of the statistical power for our technique combined with the Wald estimator (Section~\ref{sec:theory-appendix}). Finally, we provide additional experimental details (Section~\ref{sec:extended-experiments}).

\section{Related Work}
\label{sec:related-work}

\subsection{Overview}
Pearl's seminal work on causality \citep{PearlBookOld} defines the causal inference paradigm, including the notion of \emph{intervention}. For example, if an external force was to change the air pressure, a barometer's reading would change, while if we were to change the barometer reading, the pressure would remain the same. Thus, we can define causal relationships via interventions. 

Causal relationships can be encoded in directed acyclic graphs (DAGs), resembling encoding distributions with graphical models. However, causal graphs also carry an additional family of distributions induced by performing interventions. Learning such graphs is a major area in causal inference \citep{Heckerman95, Ellis08}. Recent work establishes nearly optimal algorithms for learning a causal graph with the smallest number of samples and interventions \citep{Kocaoglu17, Acharya18}. The equivalence classes of causal DAGs is explored in \cite{Yang18}. Causal models often include both discrete and continuous variables, motivating research into mixed model structure learning \citep{Lee13}. Learning a network across multiple domains is considered in \cite{Ghassami18}. Although none of these works directly fit our paradigm, structure learning is an important part of our approach as well.

When performing interventions is not possible and we must attempt to estimate causal effects from observational data, instrumental variable approaches are an option. The concept of instrumental variables date back to the 1920s \citep{wright1928tariff}. The traditional approach to IV estimators relies on structural models. For example, linear relationships between the instrumental, treatment, and effect variables inspired the two phase least-squares model (2SLS) \citep{Angrist96}. These types of models can be replaced by deep neural nets, as in \cite{Hartford17}. The presence of invalid instrumental variables motivates the line of research in robust IV methods \citep{pearl1995testability, bonet2001instrumentality, han2008detecting,   bowden2016consistent, kang2016instrumental, sharma2018necessary, Windmeijer18}. Another concern is related to instrumental variables that are weakly correlated with the risk factor \citep{Bound95}. 

Our work is focused on observational rather than interventional approaches. However, unlike the previously mentioned works, we do not examine a particular causal effect estimator, but rather seek to develop a way to synthesize a latent strong summary IV from multiple IV candidates. This is the synthesis phase of a two-phase methods. Our approach, in particular, focuses on predicting a latent variable (the summary). Other causal inference techniques that make use of latent variables related to our work include matrix completion algorithms for observational data with missing entries \citep{Athey18} and algorithms that handle multiple causes \citep{wang2018blessings,wang2019multiple}.

Mendelian randomization \citep{Burgess15} is a popular approach to perform causal inference among clinical variables using genetic variants such as single-nucleotide polymorphisms (SNPs) as instrumental variables. Since SNPs are determined for each individual randomly at conception, it offers a natural randomization among different individuals. Furthermore, since DNA encoding will influence downstream biological and clinical outcomes but not the other way around (the central dogma of molecular biology), using SNPs as instrumental variables is also an effective way to avoid reverse causation. The use of SNPs as IVs also comes with its challenges, such as that a SNP can be correlated with multiple clinical outcomes (a.k.a.\ pleiotropy), and that a SNP can demonstrate weak effects towards downstream outcomes. Moreover, SNPs can potentially be invalid IVs. Allele scores are an effective approach to combine SNPs into a summary variable. The summary variable can then be used downstream in a causal effect estimator such as a Wald estimator in order to produce a causal effect estimate. Allele scores  are an effective approach to handle SNPs that are weakly associated with the risk factor (a.k.a.\ weak IVs). However, allele scores are sensitive to the presence of SNPs that are not valid IVs.

\begin{table}[t]
\centering
\begin{tabularx}{\linewidth}{p{0.16\linewidth}p{0.12\linewidth}p{0.12\linewidth}p{0.11\linewidth}p{0.1\linewidth}p{0.1\linewidth}p{0.12\linewidth}}
\hline
Methods & Candidate \newline Dependency &  Breakdown Level & Data \newline Generation & Invalidity \newline Allowed & Sample \newline Complexity & Two \newline Phase\\
\hline
Ivy (us) & Full-rank \newline Candidates & $50$--$100\%$ \footnotemark & Ising Model & Some ER, Some UC & Non-Asy & \checkmark\\
\hline
Two Stage \newline Least Square & Full-rank \newline Candidates & $0\%$ & Linear \newline Model & None & Asy & \xmark\\
\hline
Inverse Variance Weighted (IVW) & Independent & $0\%$ & Linear \newline Model & None &  Asy & \xmark\\
\hline
UAS (Binary) & Conditionally Independent & $0\%$ & Naive \newline Bayes & None &  Asy & \checkmark\\
\hline
WAS (Binary) & Conditionally Independent & $0\%$ & Naive \newline Bayes  & None &  Asy & \checkmark\\
\hline
\texttt{sisVIVE} \newline \small{\citep{kang2016instrumental}} & Full-rank \newline Candidates  & $<50\%$  & Linear Model  & ER, UC & Non-Asy & \xmark\\
\hline
Simple Median \newline {\scriptsize\citep{bowden2016consistent}} & Independent & $<50\%$ & Linear Model & ER, UC & Asy & \xmark\\
\hline
Weighted Median \newline
{\scriptsize\citep{bowden2016consistent}} & Independent & $<50\%$ \newline Information & Linear Model & ER, UC & Asy & \xmark\\
\hline
Egger Regression \newline 
{\scriptsize \citep{bowden2015mendelian}} & Independent & $100\%$ & Linear Model & Some ER & Asy & \xmark\\
\hline
\cite{Windmeijer18} & Full-rank \newline Candidates & $<50\%$ & Linear Model & ER, UC & Asy & \xmark\\
\hline
$L_1$-GMM \newline \citep{han2008detecting}  & Full-rank \newline Candidates  & $<50\%$ & Linear Model & ER, UC & Asy & \xmark\\
\hline
Deep IV \newline 
{\scriptsize \cite{Hartford17}} & Full-rank \newline Candidates & $0\%$ & Nonlinear & None & None & \xmark\\
\hline
{ \cite{bennett2019deep}} & Full-rank \newline Candidate & $0\%$ & Nonlinear & None & None &  \xmark\\
\hline
\end{tabularx}
\caption{Comparison of assumptions among IV methods. Relevance of the IV candidates is assumed. ER: exclusion restriction; UC: unconfoundedness}
\label{tab:assumptions}
\end{table}

\subsection{Method  Comparison}
Ivy relates to a variety of IV methods in the literature. We describe a number of these and compare the assumptions and use of these methods. This taxonomy can be found in  Table~\ref{tab:assumptions}. 

\footnotetext{See Section~\ref{sec:ci-proof} for a discussion on scenarios when the breakdown level can be more than 50\%.}

In Table~\ref{tab:assumptions}, candidate dependency refers to the properties that the collection of IV candidates must satisfy. Requiring independence is the strongest, most restrictive property, while being only conditionally independent is slightly weaker. Weaker still is only requiring that the data matrix (where each row contains the samples from a particular candidate) is full-rank, which precludes identical copies of candidates.

Breakdown level represents the percentage of invalid IVs allowed before a method fails to return a reliable estimate.  Note that methods that have a 0\% level require \emph{all} valid candidates. Invalidity allowed represents the type of invalid IVs that a method can recognize (i.e., invalid because they do not satisfy exclusion restriction (ER) or unconfoundedness (UC)). Sample complexity describes whether asymptotic (Asy) or non-asymptotic (Non-Asy) estimation guarantees, if any, are known in the literature. Finally, the ``two phase'' property describes whether the method aims to generate a summary variable with a synthesis phase followed by an estimation phase, or whether it is a direct estimation.

The overall goal of Ivy is to perform well in scenarios where there is a less restrictive assumption on dependencies and invalidity, often at the same time. Thus Ivy can handle correlated candidates and a number of invalid IVs simultaneously, which existing methods struggle with. We note, however, that Ivy handles binary variables, while other methods can sometimes handle both categorical and continuous variables. We also seek to provide theoretical guarantees in finite-sample settings, rather than just asymptotic consistency.

The median methods (simple and weighted) have a higher breakdown level compared to allele scores. They are designed to deal with invalidity (naturally, we do not know which candidates are invalid) by producing median measurements that filter out the invalid candidates. However, they also require independence among the candidates \citep{bowden2016consistent}. By contrast, Ivy can handle dependencies. There are a few other differences, including the fact that the median methods have asymptotic guarantees (instead of finite sample bounds). 

An important point is that the goal of Ivy, as a method for the synthesis phase, is to provide an IV of higher quality. This strong IV can then be used downstream in a causal estimator, or in another IV method. Therefore, we stress that Ivy is complementary to existing IV methods in the literature instead of necessarily being a competing alternative. For example, we could use Ivy to provide additional candidates to be used in the simple median method, to provide weights to be used in the weighted median method,  or to plug it into a deep learning-based estimator like DeepIV \citep{Hartford17}.

\paragraph{Allele Scores} Since they synthesize a summary variable of genetic contribution towards elevating the risk factor, two-phase methods including allele scores, e.g., UAS (unweighted allele score) and WAS (weighted allele score), are most similar in spirit to Ivy.  Specifically, when the risk factor is binary, \cite{sebastiani2012naive} point out the equivalence of the decision rules between a Naive Bayes classifier and an allele score whose weights are derived from univariate logistic regressions (as in WAS). Since Ivy with conditional independence can be viewed as a Naive Bayes classifier with a hidden label, it shares similar statistical dependency assumptions with allele scores (Figure~\ref{fig:ivy-allele-score}). The difference is that allele scores may use the observed risk factor as the label of the classification, while Ivy assumes a valid hidden IV. When the conditional independence assumption is lifted, Ivy generalizes beyond allele scores with additional potential to handle dependencies and certain types of invalidity among candidates. 

Ivy and allele scores explicitly construct a summary IV, while other methods directly obtain a causal estimate, and are thus not modular (right-most column of Table~\ref{tab:assumptions}). Some key differences compared to allele score methods: Ivy has finite sample bounds, not just asymptotic results, has a weaker assumption for dependencies, and has a higher invalidity breakdown level. We note that our breakdown level assumption is 50\% by default, but under certain scenarios, we can handle even more invalid IVs  (Section~\ref{sec:ci-proof}).

\paragraph{Deconfounder \citep{wang2018blessings}} \cite{wang2018blessings} proposed the deconfounder, a causal inference framework that estimates causal effects of multiple causes from observational data under the assumption that there is no unobserved single-cause confounder (a.k.a.\ single ignorability). The deconfounder first learns a set of latent confounders from the data using latent factor models. These confounders are then used as surrogates to the actual confounders in the data, along with the multiple cause variables, to be fed into the adjustment formula to achieve causal effect estimation. \cite{wang2018blessings} show that the residue after adjusting for the confounders and the treatment variables can be used as instrumental variables.

While Ivy also makes use of latent variables, the latent variables are used to act as instrumental variables. This is different from the deconfounder, where latent variables are used as confounders. Nonetheless, here we offer an explanation of how Ivy may be interpreted in the deconfounder framework under certain circumstances. In MR, the SNPs used as IV candidates are usually only associated with the risk factors, instead of causal to the risk factors. Suppose that all the SNPs are valid IV candidates, and many of the SNPs are correlated with each other. Such correlations can be potentially explained by a causal yet unobserved genetic variant \citep{Burgess15} modeled as a hidden variable. In the deconfounder framework, these hidden variables are viewed as confounders. Since all the SNPs are valid, Ivy in this scenario uses a latent variable to model a summary IV. Unlike deconfounder that seeks to estimate the causal relationship between the SNPs and the risk factor, Ivy seeks to synthesize a summary IV that is better associated with the risk factor so as to provide a stronger IV to infer causation between the risk factor and the outcome.

\begin{figure}[t]
\begin{subfigure}[b]{0.5\textwidth}
\centering
\begin {tikzpicture}[-latex ,auto ,node distance
=1.0cm and 1.1cm , on grid, semithick ,
state/.style ={ circle, draw, minimum width=0.5cm}, scale=0.50]
\node[state, draw] (C) [dashed] {\large $c$};
\node[state, draw] (Y) [below right=of C] {\large $y$};
\node[state] (X) [below =of C] {\large $x$};
\node[state] (W1) [above left=of X] {\tiny $w_1$};
\node[state] (W2) [left=of X] {\tiny $w_2$};
\node[state] (W3) [below left=of X] {\tiny $w_3$};
\path (X) edge[-] (W1);
\path (X) edge[-] (W2);
\path (X) edge[-] (W3);
\path (C) edge (X);
\path (C) edge (Y);
\path[dashed] (X) edge (Y);
\end{tikzpicture}
\caption{Allele Score}
\label{fig:allele-score}
\end{subfigure}~
\begin{subfigure}[b]{0.5\textwidth}
\centering
\begin {tikzpicture}[-latex ,auto ,node distance
=1.0cm and 1.1cm , on grid, semithick ,
state/.style ={ circle, draw, minimum width=0.5cm}, scale=0.50]
\node[state, draw] (C) [dashed] {\large $c$};
\node[state, draw] (Y) [below right=of C] {\large $y$};
\node[state] (X) [below =of C] {\large $x$};
\node[state] (Z) [left=of X, dashed] {\large $z$};
\node[state] (W1) [above left=of Z] {\tiny $w_1$};
\node[state] (W2) [left=of Z] {\tiny $w_2$};
\node[state] (W3) [below=of W2] {\tiny $w_3$};
\path (Z) edge[-] (W1);
\path (Z) edge[-] (W2);
\path (Z) edge[-] (W3);
\path (Z) edge (X);
\path (C) edge (X);
\path (C) edge (Y);
\path[dashed] (X) edge (Y);
\end{tikzpicture}
\caption{Ivy (Conditionally Independent)}
\label{fig:ivy-ci}
\end{subfigure}
\caption{Equivalence of allele score and Ivy to a Naive Bayes classifier. Note that in Figure~\ref{fig:ivy-ci}, when $z$ is perfectly predictive of $x$, it become equivalent to Figure~\ref{fig:allele-score}.}
\label{fig:ivy-allele-score}
\end{figure}
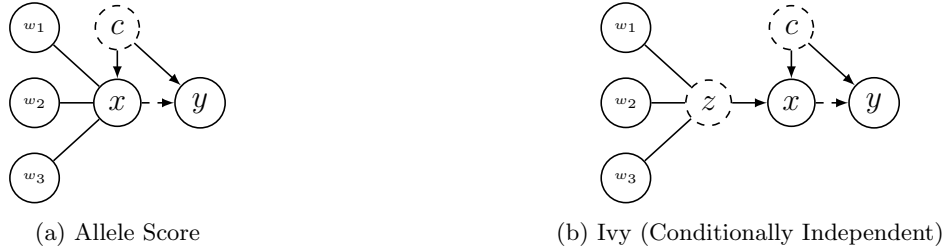

\section{Extended Theoretical Results}
\label{sec:theory-appendix}
First, we provide an additional algorithm that explicitly computes the correlations between the candidate IVs (Section~\ref{sec:appendix-additional-alg}). Next, we discuss assumptions (Section~\ref{sec:assumptions-appendix}) and identifiability (Section~\ref{sec:appendix-identifiability}). We then provide the proof of our main results, including an extended version of Theorem~\ref{thm:ci_bound_1} (Section~\ref{sec:ci-proof} -  \ref{sec:lower-bound-causal-effect-appendix}). In Section~\ref{sec:power-appendix}, we give a bound on the error in statistical \emph{power}  when using our approach (compared to having access to the true $z$). Afterwards, we detail why the conditional independence assumptions in our Ising model enable us to produce independent accuracies, a key component of our approach  (Section~\ref{sec:cond-indep-appendix}). Finally, we give further examples that our algorithm does not tackle, demonstrating the limitation of our technique (Section~\ref{sec:counterexample-appendix}).

\subsection{Additional Algorithm Details}
\label{sec:appendix-additional-alg}
We provide some additional information on our algorithms. We start with some notation for our structure learning component, which applies the one in \cite{Varma19}.
Let us write for our model's covariance and inverse covariance matrices,
\begin{align*}
\Sigma :=
	\begin{bmatrix}
		\Sigma_O & \Sigma_{Oz} \\
		\Sigma_{Oz}^T & \Sigma_z
	\end{bmatrix}
\text{  and  } 
	\Sigma^{-1} 
	&:=
	K
	=
	\begin{bmatrix}
		K_O & K_{Oz} \\
		K_{Oz}^T & K_z
	\end{bmatrix}.
\end{align*}

Here $O$ is the set of observed IV candidates and $z$ is the valid, but unobserved, summary IV. The key idea behind the algorithm is that in the inverse covariance matrix $K$, a 0 entry $K_{ij}$ indicates that there is no dependency between $w_i$ and $w_j$ \citep{loh13}. Therefore, if we had access to $K$, we would be able to directly read off the graph. Unfortunately, this full $K$ includes the unobserved latent IV
$z$, so we cannot observe the full covariance matrix $\Sigma$ and invert it to estimate $K$. We only have access to an estimate of $\Sigma_O$, the matrix given by the observed candidates. If we directly invert $\Sigma_O$, we do not obtain the block $K_O$, but rather this block corrupted by adding an additional low-rank matrix that is non-sparse and hides the graph structure.

Structure learning attempts to break up the $\Sigma_O^{-1}$ matrix, which we can estimate, into a sparse matrix $S$ that we use to approximate $K_O$, and a rank one symmetric matrix $L = \ell \ell^T$. Then, a simple transformation yields the estimated covariance between $z$ and each candidate; this enables us to read off both the valid IVs (those whose covariance with $z$ is larger than zero) and the dependencies between the invalid IVs from $S$. 

When estimating $\hat{O}$, we compute $\frac{1}{n} \sum_{i=1}^n w^{(i)}_{\hat{V}}(w^{(i)}_{\hat{V}})^T$. In practice, one may also compute $\frac{1}{n-1} \sum_{i=1}^n w^{(i)}_{\hat{V}}(w^{(i)}_{\hat{V}})^T$ (as this is an unbiased estimator) if there are very few samples; asymptotically, either method yields the same error bounds. Finally, note that we clip our predicted $\hat{\mu}$ to lie within $[-1, +1]$, since $z$ and the $w_i$'s (and thus the $w_iz$'s) are all in $\{-1, +1\}$.

\subsection{Assumptions}
\label{sec:assumptions-appendix}
We summarize the assumptions that we make in order to provide theoretical guarantees for the performance of Ivy. We discuss our assumptions made in the synthesis phase and estimation phase, respectively.

\paragraph{Synthesis Phase} We detail the assumptions  used in the estimation phase. First, the majority of IV candidates are valid IVs, and for the invalid candidates $(i \not\in V$), $w_i \perp z$. Second, the joint distribution of $w$'s and $z$ follows an Ising model. Here, $(V,E)$ consists of valid candidates and their edges, while $\bar{V} := \{1, \ldots, m\} \setminus V$ and $E'$ are the corresponding invalid candidates and their edges:
\begin{equation*}
\prob(w, z) = \frac{1}{\mathcal{Z}} \exp  ( \theta_z^* z + \Sigma_{i\in V} \theta_{i}^* w_i z  +  \Sigma_{(i,j)\in E} \theta_{ij}^* w_i w_{j} + \Sigma_{(i,j) \in E'} \theta'_{ij} w_i w_j )  .
\end{equation*}
We assume that the graph $G$ above satisfies the singleton separator set property. That is, the intersections of maximal cliques among the nodes in $G$ are always of cardinality at most one. Another way to state this is to say that candidates form components that intersect (at most) in the latent variable. We write $d$ for the maximum degree of a candidate dependency. 

Next, for each valid candidate $w_i$ there are at least two others that are independent of $w_i$ and each other conditioned on $z$. 
We further assume that valid IV candidates agree with $z$ more often than not on average.

Since we use a variant of the procedure in \cite{Varma19} as an instance of the structure learning algorithm, we review the assumptions made in \cite{Varma19}. These conditions are standard and were originally introduced in \citet{Chandrasekaran12} and \citet{Wu17}. Let $h_X(Y) := \frac{1}{2}(XY + YX).$ We write $\mathcal{P}_S$ for orthogonal projection on subspace $S$.  Let  $\alpha_{\Omega} :=   \min_{M \in \Omega, \|M\|_{\infty} = 1}    \|\mathcal{P}_{\Omega} h_{\Sigma_O}(M)\|_\infty$,  $\delta_{\Omega} :=  \min_{M \in \Omega, \|M\|_{\infty} = 1}    \|\mathcal{P}_{\Omega^\perp} h_{\Sigma_O}(M)\|_\infty$, 
$\alpha_{T} :=    \min_{M \in T, \|M\| = 1}    \|\mathcal{P}_{T} h_{\Sigma_O}(M)\|$, 
$\delta_{T} :=    \min_{M \in T, \|M\| = 1}     \|\mathcal{P}_{T^{\perp}} h_{\Sigma_O}(M)\|$, 
$\beta_{T} :=    \max_{M \in T, \|M\|_{\infty} = 1}    \|h_{\Sigma_O}(M)\|_\infty$, 
$\beta_{\Omega} := \max_{M \in \Omega, \|M\| = 1}     \|h_{\Sigma_O}(M)\|$.
Next, set 
$\alpha := \min\{\alpha_\Omega, \alpha_T \}$, $\beta := \max\{ \beta_T, \beta_\Omega \}$, and $\delta := \max\{\delta_\Omega, \delta_T\}$.

With this notation, we require that there exists $\nu \in (0,1/2)$ with $\delta/\alpha < 1 - 2\nu$, and   $\mu(\Omega) \xi(T) \leq \frac{1}{2} \left( \frac{\nu \alpha}{(2-\nu)\beta} \right)^2$. 

\paragraph{Estimation Phase} We enumerate some standard regularity conditions with respect to univariate logistic regressions in order to characterize the error induced by the Wald estimator. Specifically, let $\ell (x; \theta)$ be the negative log-likelihood function of the univariate logistic regression parameterized by $\theta \in \Theta$. Let $\mathcal{I}(\theta) := \mathbb{E} \left[\frac{\partial^2 \ell (x; \theta)}{\partial \theta^2}\right]$ be the corresponding Fisher information matrix. We assume that for any given $\theta \in \Theta$, there exists an unbiased estimator $\check{\theta}$ of $\theta$ that is a function of some number $n$ of independent samples $\curly{x^{(i)}}_{i=1}^n$ drawn from $\mathcal{D}$ such that, for some constant $\lambda$, $\text{Cov} (\check{\theta}) \preceq \frac{I}{2 \lambda n}$.  We further assume that for some $\kappa \in (0,1)$, we have that $\max\{\lvert \hat{\beta}_{\hat{z}y} - \beta_{zy}^* \rvert, \lvert \hat{\beta}_{\hat{z}x} - \beta_{zx}^* \rvert\} \le \kappa \beta_{zx}^*$, where $\beta^*_{zx}$ and $\beta^*_{zy}$ be the population-level coefficients of $z$ from the logistic regressions to predict $x$ and $y$ under $\mathcal{D}$, and $\hat{\beta}_{\hat{z}y},\hat{\beta}_{\hat{z}x}$ the corresponding regression coefficients of $\hat{z}$.

\subsection{Identifiability of Causal Effects}
\label{sec:appendix-identifiability}
Recall that we follow a two-phase approach that consists of a synthesis phase followed by an estimation phase. If identifiability can be achieved in both phases, overall identifiability can be achieved. The assumptions made in the synthesis phase ensure the identifiability of the synthesis phase. Afterwards, we can use existing standard conditions for identifiability of the estimation phase \citep{balke1997bounds, burgess2018mendelian, swanson2018partial, Amour} to identify causal effects.

\subsection{Extension and Proof of Theorem~\ref{thm:ci_bound_1}}
\label{sec:ci-proof}
First, we give some additional details on our approach. Afterwards, we give an extended version of Theorem~\ref{thm:ci_bound_1} (Theorem~\ref{thm:extended}); this more general result subsumes Theorem~\ref{thm:ci_bound_1}. 

We need some additional notation. 
Recall that $V \subseteq \{1, \ldots, m\}$ is the subset of valid candidates. We write
\begin{equation*}
o_1(x) := \lfloor x^2 \rfloor  \qquad \text{and}\qquad o_2(x) := {\sum_{(i,j) \in E} \mathbb{I} \{(K_O)_{ij} \leq x\}}.
\end{equation*}
The second function involves the order statistics among certain entries in the inverse covariance matrix; its use is explained below.
Note that $o_1(x) \rightarrow 0$ as $x \rightarrow 0$; in fact, it is 0 once $|x| < 1$. Similarly, $o_2(x) \rightarrow 0$: it is 0 once $x$ is below the minimal entry in the matrix. Next, to clean up the notation, we use constants $c_3$ and $c_4$,  defined in \cite{Varma19}; these are a function of the maximal degree $d$ and upper bounds on the conditioning of covariance matrix, which we assume are fixed. Recall that our thresholds are $T_1, T_2$ in Algorithm~\ref{alg:sl1}. We write
\begin{equation*}
i_S := o_1\left(\frac{c_3 m^2}{T_1 \sqrt{n}} \right) \qquad  \text{  and  } \qquad  e_S := o_2\left(T_2 + c_4 \sqrt{\tfrac{m}{n}} \right).
\end{equation*}
This notation indicates that $i_S$ is the number of invalid IVs and $e_S$ the number of missed edges (among the valid candidates) after structure learning.

We define $q_{\max}$ to be the largest-magnitude entry of $\hat{q}$, and $R_{\min} > 0$ to be the smallest entry of $\mathbb{E}[w_Vw_V^T]$. We write $M^\dag$ for the pseudoinverse of the matrix $M:=M_{\hat{\Omega}}$. Note that in Theorem~\ref{thm:ci_bound_1} we have $M=M_{\Omega}$ because we recover the true $\Omega$, as we show below. Let $r_M$ be the least-squares residual for  $M \hat{\ell} = \hat{q}$. We let $S$ be a matrix (not to be confused with the sparse matrix for structure learning; it shall be clear from the context) so that $SM=M_{\Omega}$ is the corrected form of $M$, removing dependencies and invalid candidates. We call $S$ the \emph{row selection} matrix. Let the SVD of $M$ be $M = U_M \Sigma_{M} V_M^\top$. We let $\rho_{SU_M}$ be the smallest-magnitude singular value of $SU_M$. 
Finally, we let $u_{\max}$ to be the largest norm of a row of $U_M$.

Our main result in this section is Theorem~\ref{thm:extended}.  

\begin{theorem}
\label{thm:extended}
Let $\hat{\mu}$ be the result of Algorithm~\ref{alg:ivy} run on $n$ samples of $m$ IV candidates, for $n > n_0$ for some constant $n_0$. Here, $s$ of the $m$ candidates are invalid and $p = m-s$ are valid. Set $\mu^*$ to be the true parameters. Then, with probability at least $1-1/m$,
\begin{align}
\mathbb{E}[\| |\hat{\mu}| - |\mu^*| \| ]  &\leq 
\frac{16 (i_S+p)^{\frac{5}{2}}}{R_{\min}}  \|M^\dagger\| \sqrt{\frac{2\pi}{n}}\nonumber \\
& + m^{3/2} \sigma_{\min}^{-1}(M)  \left( (i_S(i_S+p) + e_S)u_{\max} + \rho_{SU_M}^{-1} - \rho_{SU_M} \right) \norm{U_M^\perp (U_M^\perp)^\top} q_{\max}.
\label{ew:big}
\end{align}
Additionally, if we bypass structure learning and run the latter part of the algorithm, we obtain the following explicit result, where $s$ is the number of invalid IVs and $e$ is the number of dependencies among the valid sources. 
\begin{align*}
&\mathbb{E}[\| |\hat{\mu}| - |\mu^*| \| ]\leq  \\
& \frac{1}{\sqrt{s+p-2}} \left( 
\frac{16 (s+p)^{\frac{5}{2}}}{R_{\min}}  \sqrt{\frac{2\pi}{n}} + m^{\frac{3}{2}}  \left( \frac{\sqrt{2}(s(s+p)+ e)}{\sqrt{s+p-2}}+ \rho_{SU_M}^{-1} - \rho_{SU_M} \right) \norm{U_M^\perp (U_M^\perp)^\top} q_{\max}\right).
\end{align*}
\end{theorem}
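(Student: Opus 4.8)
The plan is to reduce the claim about the mean parameters to a bound on the error of the $\log$-second-moment vector $\ell$ that \textsc{ParamLearn} actually solves for, control that error, and translate back. Since Algorithm~\ref{alg:forward} sets $\abs{\hat\mu}=\exp(\hat\ell/2)$ clipped to $[0,1]$ while $\abs{\mu^*}=\exp(\ell^*/2)\in[0,1]$ (with $\ell^*_i=\log(\mu^*_i)^2$), and since clipping toward a point already in $[0,1]$ is non-expansive and $t\mapsto\exp(t/2)$ is $\tfrac12$-Lipschitz on $(-\infty,0]$, an elementary coordinatewise argument (using $e^{-u/2}\ge 1-u/2$ to handle the clipped coordinates) gives $\norm{\abs{\hat\mu}-\abs{\mu^*}}\le\tfrac12\norm{\hat\ell-\ell^*}$. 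So it suffices to bound $\norm{\hat\ell-\ell^*}$, where $\hat\ell=M_{\hat\Omega}^\dagger\hat q$ is the computed least-squares solution and $\ell^*$ is the unique solution of the correctly-specified subsystem $M_\Omega\ell^*=q^*$ (unique because the assumption that every valid candidate has at least two conditionally-independent neighbors makes $M_\Omega$ full column rank). Writing $\bar q$ for the population analogue of $\hat q$ on the \emph{same} (possibly wrong) index set $\hat\Omega$, split
\begin{equation*}
\norm{\hat\ell-\ell^*}\;\le\;\underbrace{\norm{M_{\hat\Omega}^\dagger(\hat q-\bar q)}}_{\text{(I) sampling}}\;+\;\underbrace{\norm{M_{\hat\Omega}^\dagger\bar q-\ell^*}}_{\text{(II) misspecification}}.
\end{equation*}

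For (I), bound $\norm{M_{\hat\Omega}^\dagger(\hat q-\bar q)}\le\sigma_{\min}^{-1}(M_{\hat\Omega})\,\norm{\hat q-\bar q}$. Each coordinate of $\hat q-\bar q$ is $\log\hat{\E}^2[w_iw_j]-\log\E^2[w_iw_j]$; since $\abs{\E[w_iw_j]}\ge R_{\min}$ and, once $n>n_0$, $\hat{\E}[w_iw_j]$ stays bounded away from $0$ (via the concentration bound behind Lemma~\ref{lem:conc}), the map $t\mapsto\log t^2$ is $O(1/R_{\min})$-Lipschitz there, so $\abs{\hat q_{ij}-\bar q_{ij}}\le\frac{C}{R_{\min}}\abs{\hat{\E}[w_iw_j]-\E[w_iw_j]}$. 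Taking expectations, bounding $\E\abs{\hat{\E}[w_iw_j]-\E[w_iw_j]}$ by $\sqrt{2\pi/n}$ up to a constant (since $w_iw_j\in\{-1,1\}$), and summing over the at most $\binom{i_S+p}{2}$ pairs in $\hat\Omega$ with the usual norm conversions yields the first displayed term $\tfrac{16(i_S+p)^{5/2}}{R_{\min}}\norm{M^\dagger}\sqrt{2\pi/n}$.

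For (II) --- the main obstacle --- note that $\bar q$ coincides with $q^*$ on every row $(i,j)\in\hat\Omega\cap\Omega$ and differs only on the ``bad'' rows: those involving an undetected invalid candidate (at most $i_S(i_S+p)$, where $\E[w_iw_j]=\mu_i\mu_j$ fails) and those that are missed dependency edges (at most $e_S$), each of magnitude at most $q_{\max}$. Letting $S$ be the row-selection matrix keeping the good rows (so $SM_{\hat\Omega}=M_\Omega$), write $\bar q=S^\top q^*+\Delta$ with $\Delta$ supported on the $\le i_S(i_S+p)+e_S$ bad rows and $\norm{\Delta}_\infty\le 2q_{\max}$. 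Then $M_{\hat\Omega}^\dagger\bar q-\ell^*=\big(M_{\hat\Omega}^\dagger S^\top-(SM_{\hat\Omega})^\dagger\big)q^*+M_{\hat\Omega}^\dagger\Delta$. Writing $U_M^\top\Delta$ as a weighted sum of rows of $U_M$, the second term is $\le\sigma_{\min}^{-1}(M)\,(i_S(i_S+p)+e_S)\,u_{\max}\,(2q_{\max})$. For the first term, pass to the SVD $M_{\hat\Omega}=U_M\Sigma_MV_M^\top$: then $(SM_{\hat\Omega})^\dagger=V_M\Sigma_M^{-1}(SU_M)^\dagger$ with $(SU_M)^\dagger=(U_M^\top S^\top SU_M)^{-1}U_M^\top S^\top$, so the difference is $V_M\Sigma_M^{-1}\big(U_M^\top S^\top-(SU_M)^\dagger\big)q^*$; since the nonzero singular values of $SU_M$ lie in $[\rho_{SU_M},1]$, the operator-norm gap between $U_M^\top S^\top$ and $(SU_M)^\dagger$ is controlled by $\rho_{SU_M}^{-1}-\rho_{SU_M}$, and $\norm{q^*}$ is bounded through $q_{\max}$ together with the residual-projector factor $\norm{U_M^\perp(U_M^\perp)^\top}$, the remaining $m^{3/2}$ coming from entrywise-to-$\ell_2$ conversions. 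Collecting gives the second displayed term; multiplying (I)$+$(II) by $\tfrac12$ and invoking the structure-learning guarantee of \citet{Varma19} to secure the $1-1/m$ event and to bound $i_S=o_1\!\big(c_3m^2/(T_1\sqrt n)\big)$ and $e_S=o_2\!\big(T_2+c_4\sqrt{m/n}\big)$ from the matrix-recovery error on $\hat L,\hat S$ completes \eqref{ew:big}.

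For the explicit second bound, ``bypassing structure learning'' means $\hat V=[m]$ and $\hat E=\emptyset$, so $M_{\hat\Omega}$ is exactly the pair-incidence matrix of the complete graph on $m=s+p$ vertices; then $M_{\hat\Omega}^\top M_{\hat\Omega}=(m-2)I+J$, giving $\sigma_{\min}(M_{\hat\Omega})=\sqrt{s+p-2}$ (hence $\norm{M^\dagger}=1/\sqrt{s+p-2}$), and by symmetry every row of $U_M$ has norm $u_{\max}=\sqrt{2/(m-1)}\le\sqrt2/\sqrt{s+p-2}$; moreover nothing is pruned, so $i_S=s$ and $e_S=e$. Substituting these into \eqref{ew:big} and factoring out the common $1/\sqrt{s+p-2}$ gives the stated closed form, and Theorem~\ref{thm:ci_bound_1} follows as the special case where structure learning is exact ($i_S=e_S=0$). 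The chief difficulty is term (II): because the discrepancy between $\hat\Omega$ and $\Omega$ is a change of the \emph{row set} rather than a small-norm perturbation, standard first-order pseudoinverse perturbation bounds do not apply, and one must route the argument through the SVD / row-selection decomposition while carefully tracking the conditioning quantity $\rho_{SU_M}$.
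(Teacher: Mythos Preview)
Your high-level plan mirrors the paper's: split $\hat\ell-\ell^*$ into a sampling piece and a misspecification piece, control the former by Lipschitz continuity of $t\mapsto\log t^2$ together with matrix concentration, control the latter via the SVD of $M_{\hat\Omega}$ and the row-selection matrix $S$, and compute $\sigma_{\min}(M)$ and $u_{\max}$ explicitly for the complete-graph $M$ in the bypass case. Your $\tfrac12$-Lipschitz conversion $\||\hat\mu|-|\mu^*|\|\le\tfrac12\|\hat\ell-\ell^*\|$ is in fact tighter than the paper's $2\sqrt m$ conversion, so you have slack there. Two points need correction, though.

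\emph{Choice of intermediate.} You split at $M_{\hat\Omega}^\dagger\bar q$ (wrong index set, population data); the paper splits at $\tilde\ell:=M_\Omega^\dagger(S\hat q)$ (correct index set, noisy data). This is not cosmetic. With the paper's choice the sampling piece $\|\tilde\ell-\ell^*\|$ lives entirely on $\Omega$, i.e.\ on conditionally-independent pairs of \emph{valid} candidates, so the hypothesis $|\E[w_iw_j]|\ge R_{\min}$ used to linearize $\log t^2$ is legitimate. With your choice the sampling piece runs over all of $\hat\Omega$, which can contain up to $i_S(i_S+p)$ rows involving an undetected invalid candidate; for such a pair $\E[w_iw_j]=\E[w_i]\E[w_j]$ may be arbitrarily close to $0$, so your sentence ``since $|\E[w_iw_j]|\ge R_{\min}$'' is unjustified and the $O(1/R_{\min})$ Lipschitz constant is unavailable. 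Switching to the paper's intermediate fixes this at no cost.

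\emph{Origin of $\|U_M^\perp(U_M^\perp)^\top\|$.} This factor does not come from ``bounding $\|q^*\|$''; in your decomposition the operator $(SU_M)^\top-(SU_M)^\dagger$ acts on $q^*\in\mathbb R^{|\Omega|}$, and the $|\hat\Omega|$-dimensional projector $U_M^\perp(U_M^\perp)^\top$ cannot be inserted there. In the paper it drops out of the exact identity (due to \citealt{Mahoney}): writing the misspecification piece as $M_{\hat\Omega}^\dagger\hat q-(SM_{\hat\Omega})^\dagger S\hat q$ and inserting $I=U_MU_M^\top+U_M^\perp(U_M^\perp)^\top$, one uses $(SU_M)^\dagger SU_M=I$ to obtain
\[
\hat\ell-\tilde\ell=-V_M\Sigma_M^{-1}(SU_M)^\dagger\,S\,U_M^\perp(U_M^\perp)^\top\hat q.
\]
Writing $(SU_M)^\dagger=(SU_M)^\top+\Gamma$ with $\|\Gamma\|=\rho_{SU_M}^{-1}-\rho_{SU_M}$ and $S^\top S=I-\Xi$ (with $\Xi$ diagonal, supported on the bad rows), the two summands $-U_M^\top\Xi\,U_M^\perp(\cdot)$ and $\Gamma S\,U_M^\perp(\cdot)$ give exactly the $(i_S(i_S+p)+e_S)u_{\max}$ and $\rho_{SU_M}^{-1}-\rho_{SU_M}$ contributions, each already multiplied by $\|U_M^\perp(U_M^\perp)^\top\hat q\|\le\|U_M^\perp(U_M^\perp)^\top\|\sqrt{|\hat\Omega|}\,q_{\max}$. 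Your separate handling of $M^\dagger\Delta$ is correct but does not carry this projector; this is harmless when $|\hat\Omega|>p$ (the projector has norm $1$) but does not reproduce the stated bound as written.
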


Note that the norm above is taken implicitly over the parameters among variables in joint set $\hat{V} \cap V$ (which goes to $V$ once $n$ is large enough); these are the parameters in common between the recovered set of candidates $\hat{V}$ and the valid candidates $V$. Of course, if this set becomes too small, e.g., below three variables, we cannot even recover. Fortunately, we know the rate at which $i_S$ goes to 0. Below, we implicitly assume that $|\hat{V} \cap V| \geq 3$.
Before we give the proof, it is worth commenting on Theorem~\ref{thm:extended}.

When $m$ and $n$ are as large as prescribed, we have that with the desired probability $1 - 1/m$ that the correct structure is recovered, in which case the $o_1$ and $o_2$ functions defined are equal to 0, and thus $i_S = 0$ and $e_s = 0$. With this, the first term inside the sum of the second term is 0. We also have that $S$ (the row selection matrix) is the identity, and $\rho_{SU_M} = 1$, and the entire right-hand side goes to 0. In the first term, $i_S + p = p$. Finally, under these assumptions on $m$ and $n$ we can also recover the signs, yielding Theorem~\ref{thm:ci_bound_1}.

More generally we can think of the left-hand summand above as being an estimation error, which goes to 0 in $n$, and the right-hand term as a penalty for misspecification. Here we upper bound this term, in order to measure our robustness to such misspecification, but using the same argument we made in Section~\ref{sec:estimation-error}, we can show that it is always positive regardless of $n$, so long as $i$ and $e$ are positive.

We also briefly comment on the difference between the two cases above. In the top case, where we use structure learning, the obtained $M$ matrix has a potentially complicated structure. The $M$ obtained from assuming conditional independence for all pairs of candidates, which is all we can do without knowledge of the graph, has a simple structure that enables us to compute terms like $\sigma^{-1}_{\min}$ explicitly. 

Before we start, we give the following simple result.

\begin{lemma}
\label{lem:conc}
If we estimate $\hat{O}$ from samples $w^{(1)}, w^{(2)}, \ldots, w^{(n)}$ by $\hat{O} := \frac{1}{n} \sum_{i=1}^{n} w^{(i)}w^{(i)^T}$, we can bound $\mathbb{E} \left[\|\hat{O} - O^*\| \right]$ as
\[\mathbb{E} \left[\|\hat{O} - O^*\|  \right] \leq m^2 \sqrt{\frac{32 \pi}{n}}.\]
\end{lemma}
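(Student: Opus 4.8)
\textbf{Proof proposal for Lemma~\ref{lem:conc}.}
The plan is to reduce the spectral-norm bound to a sum of $m^2$ scalar concentration bounds, one for each entry of the error matrix. First I would record the structure of $\hat O - O^*$: since $O^* = \E[ww^T]$ and $\hat O_{jk} = \frac1n\sum_{i=1}^n w_j^{(i)} w_k^{(i)}$, each entry $\hat O_{jk} - O^*_{jk}$ is an average of $n$ i.i.d., mean-zero random variables, and because the candidates are $\{-1,+1\}$-valued we have $w_j^{(i)}w_k^{(i)} \in [-1,1]$ almost surely, so each centered summand lies in an interval of length at most $2$ (equivalently, $w_j^{(i)}w_k^{(i)} - O^*_{jk}$ lies in an interval of length at most $4$). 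In particular $\E[(\hat O_{jk}-O^*_{jk})^2]$ decays like $1/n$.

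Second, I would pass from the operator norm to a more tractable entrywise quantity via $\norm{\hat O - O^*} \le \sum_{j,k\in[m]} \abs{(\hat O - O^*)_{jk}}$ (using $\norm{A}\le\norm{A}_F\le\sum_{j,k}\abs{A_{jk}}$). Taking expectations and using linearity, $\E[\norm{\hat O - O^*}] \le \sum_{j,k\in[m]} \E\!\left[\abs{(\hat O - O^*)_{jk}}\right]$, so it suffices to bound each of the $m^2$ scalar expectations by $\sqrt{32\pi/n}$.

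Third, for a fixed $(j,k)$ I would apply a sub-Gaussian/Hoeffding estimate: the empirical mean $\hat O_{jk}-O^*_{jk}$ of $n$ i.i.d.\ bounded terms is sub-Gaussian with parameter on the order of $1/n$, so $\prob(\abs{\hat O_{jk}-O^*_{jk}} > t)$ decays like $2e^{-cnt^2}$; integrating this tail over $t\ge0$ and using the identity $\int_0^\infty e^{-t^2/(2\sigma^2)}\,dt = \sigma\sqrt{\pi/2}$ yields $\E[\abs{Z}]\le\sigma\sqrt{2\pi}$ for $Z$ sub-Gaussian with parameter $\sigma^2$. Plugging in the (deliberately loose) sub-Gaussian parameter $16/n$ coming from the length-$4$ bound on the centered summands gives $\E[\abs{\hat O_{jk}-O^*_{jk}}]\le\sqrt{32\pi/n}$, and summing over the $m^2$ entries produces the claimed bound.

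There is no real obstacle here; the argument is routine concentration. The only point requiring care is the bookkeeping of constants — matching the matrix-norm inequality to the sub-Gaussian parameter used for $\hat O_{jk}-O^*_{jk}$ so that the product lands on exactly $\sqrt{32\pi}$. (Routing through the Frobenius norm and Jensen's inequality instead would give the sharper bound $m/\sqrt n$; the stated estimate is looser but has the $m$-dependence and the clean constant that suffice for the downstream analysis.)
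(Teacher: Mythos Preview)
Your proposal is correct but takes a different route from the paper. The paper applies the matrix Hoeffding inequality of \citet{Tropp11} directly to the spectral norm: it writes $\hat O - O^*$ as a sum of $n$ independent centered symmetric matrices, bounds each $X_k^2$ by a matrix $A_k^2$ with $\|A_k^2\|\le 4m^2/n^2$, obtains the tail bound $\prob(\|\hat O - O^*\|\ge t)\le m\exp(-nt^2/(32m^2))$, and then integrates the tail to reach the stated expectation bound. In that argument the $m^2$ factor arises as the product of the dimensional prefactor $m$ in matrix Hoeffding and the fact that $\|w(w)^T\|=m$. Your route is more elementary: you never invoke matrix concentration, instead passing through $\|A\|\le\|A\|_F\le\sum_{j,k}|A_{jk}|$ and applying scalar Hoeffding to each of the $m^2$ entries, so your $m^2$ is simply the entry count. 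Both land on the same $m^2/\sqrt{n}$ scaling, and as you note, your approach could even be tightened to $m/\sqrt{n}$ via Frobenius norm and Jensen. The only quibble is that your ``length-$4$'' interval for the centered summands is needlessly generous (the uncentered products already lie in $[-1,1]$, length $2$), but since you are deliberately loosening to hit the paper's constant this does no harm.
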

\begin{proof}
We use a concentration bound to obtain $\|\hat{O} - O^*\|$. We shall use the matrix Hoeffding inequality \citep{Tropp11}. It states that for any finite sequence of independent random symmetric $m \times m$ matrices $\{X_k\}$ that are centered (mean-zero), and satisfy $X_k^2 \preceq A_k^2$, then 
\begin{equation}
\label{eq:tropp}
\prob \left( \|\sum_k X_k\| \geq t \right) \leq m \exp \left( \frac{-t^2}{8\sigma^2} \right),
\end{equation}
where $\sigma^2 := \| \sum_k A_k^2 \|$.

To apply the result, we take $X_k = \frac{1}{n} (\hat{O} - w^k (w^k)^T)$ for $1 \le k \le n$.
Clearly, $\mathbb{E}[X_k] = 0$.
The $X_k$'s are also clearly symmetric and mutually independent. We now argue that suitable $A_k$ matrices exist. First, it is easy to see that, from Cauchy-Schwartz, for any two vectors $v_1$ and $v_2$, $v_1^T( \|v_2\|^2I - v_2v_2^T) \geq 0$, so $\|v_2\|^2I \succeq v_2v_2^T$. Thus, as each vector $w^k \in \{-1,+1\}^m$, we have that 
\[m^2 I = \|w^k\|^4 I \succeq \|w^k\|^2 w^k(w^k)^T = (w^k (w^k)^T)^2.\]

Note that $w^k (w^k)^T \succeq 0$ and $\hat{O} \succeq 0$, so $(w^k (w^k)^T + \hat{O})^2 \succeq 0$. This yields
\begin{align*}
(nX_k)^2 &= (w^k (w^k)^T - \hat{O})^2  \\
&\preceq (w^k(w^k)^T - \hat{O})^2 + (w^k(w^k)^T +\hat{O})^2 \\
&= 2((w^k(w^k)^T)^2 + \hat{O}^2) \\
&\preceq 2(m^2I + \hat{O}^2).
\end{align*}
Setting $A_k^2 = \frac{2}{n^2}(m^2I + \hat{O}^2)$, we have that $\|\hat{O}^2\| \leq m^2$, and then that $\sigma^2 = \norm{\sum_k A_k^2} \le
\sum_k \norm{A_k^2} \le
\frac{2}{n^2} \sum_k (\norm{m^2 I} + \norm{\hat{O}^2}) =
\frac{2}{n} (m^2 + \|\hat{O}^2\|) \leq 4m^2/n$, so applying the bound \eqref{eq:tropp}, we get
\begin{align}
\prob \left(\|\hat{O} - O^*\| \geq t \right) \leq m \exp \left( \frac{-nt^2}{32m^2} \right).
\label{eq:concbound}
\end{align}

Next, we can integrate the result to obtain
\[
\mathbb{E} \left[\|\hat{O} - O^*\|  \right] \leq m^2 \sqrt{\frac{32 \pi}{n}}.
\]
\end{proof}

Now we are ready for the proof of Theorem~\ref{thm:extended}. We start with a lemma that tackles the structure learning component of the algorithm:

\begin{lemma}
Run Algorithm~\ref{alg:sl1} on $n$ samples of $m$ IV candidates, where $s$ of the $m$ candidates are invalid and $p = m-s$ are valid. Suppose the assumptions detailed above are met. Let $\hat{G} = (\hat{V}, \hat{E})$ be the resulting graph. Then, with probability at least $1-1/m$,
\[
|\hat{V} \setminus V| \leq o_1\left(\frac{c_3 m^2}{T_1 \sqrt{n}} \right) \qquad \text{ and } \qquad |E \setminus \hat{E}| \leq o_2\left(T_2 + c_4 \sqrt{\tfrac{m}{n}} \right).
\]
That is, $\hat{G}$ contains at most $i_S$ invalid candidates and has at most $e_S$ missing edges among valid candidates.
\label{lem:sllemma}
\end{lemma}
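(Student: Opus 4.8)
The plan is to reduce the two bounds in Lemma~\ref{lem:sllemma} to two separate concentration-plus-perturbation arguments, one for the rank-one part $\hat{L}$ (which controls which candidates are declared valid) and one for the sparse part $\hat{S}$ (which controls which edges are recovered). First I would invoke the recovery guarantee of \cite{Varma19} under the stated incoherence assumptions ($\delta/\alpha < 1-2\nu$, $\mu(\Omega)\xi(T)\le\frac12(\nu\alpha/((2-\nu)\beta))^2$): with probability at least $1-1/m$, the program on Line~\ref{state:matrix-recovery} of Algorithm~\ref{alg:sl1} returns $(\hat{S},\hat{L})$ whose supports/ranks match those of the population decomposition $(K_O - L^*, L^*)$, and with entrywise/operator-norm error controlled by $\|\hat{\Sigma}-\Sigma_O\|$. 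The constants $c_3,c_4$ are exactly the ones packaging the degree $d$ and the conditioning of $\Sigma_O$ that appear in that theorem. So the whole lemma hinges on a sample-covariance concentration bound: $\|\hat{\Sigma}-\Sigma_O\|$ (and the relevant $\ell_\infty$ norm) is $O(\sqrt{m/n})$ with the claimed probability — this is precisely the flavor of Lemma~\ref{lem:conc}, whose proof via the matrix Hoeffding inequality \eqref{eq:tropp} can be reused essentially verbatim (the $w^{(i)}$'s are bounded in $\{-1,1\}^m$, giving the $4m^2/n$ variance proxy and hence $\prob(\|\hat\Sigma - \Sigma_O\|\ge t)\le m\exp(-nt^2/(32m^2))$; choosing $t \asymp m\sqrt{\log m / n}$ gives probability $1-1/m$).

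Given that event, I would handle the validity count. From $\hat{L}$ we extract $\hat{\ell} = \argmin_\ell\|\hat L - \ell\ell^T\|_F$ and form $\hat\Sigma\hat\ell$, whose population analogue has $j$-th entry equal (up to scaling) to the covariance $\Sigma_{w_j z}$, which is zero exactly for invalid $j\notin V$ and bounded away from zero for valid $j$. A Davis–Kahan / Weyl-type perturbation of the top eigenvector of $\hat L$ plus the covariance concentration gives $\|\hat\Sigma\hat\ell - \Sigma_O\ell^*\|_\infty \le c_3 m^2/(T_1\sqrt n)$ for a suitable constant $c_3$ absorbing the eigengap and conditioning. Then a false positive $j\in\hat V\setminus V$ requires $|(\hat\Sigma\hat\ell)_j|\ge T_1$ while the truth is $0$, which forces the perturbation at coordinate $j$ to exceed $T_1$; counting how many coordinates can simultaneously do this yields $|\hat V\setminus V|\le \lfloor (c_3 m^2/(T_1\sqrt n))^2 \rfloor = o_1(c_3 m^2/(T_1\sqrt n)) = i_S$ (the squaring is where the $o_1(x)=\lfloor x^2\rfloor$ form comes from, since the error budget is distributed in $\ell_2$ across the offending coordinates). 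For the missed edges, an edge $(i,j)\in E$ is missed only if $\hat S_{i,j}\le T_2$ while $(K_O)_{ij}$ is the true (nonzero) entry; the sparse-support recovery plus $|\hat S_{ij}-(K_O)_{ij}|\le c_4\sqrt{m/n}$ means this can only happen when $(K_O)_{ij}$ itself is already as small as $T_2 + c_4\sqrt{m/n}$, so the number of such edges is at most $\#\{(i,j)\in E: (K_O)_{ij}\le T_2 + c_4\sqrt{m/n}\} = o_2(T_2+c_4\sqrt{m/n}) = e_S$, directly matching the definition of $o_2$.

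The main obstacle, and the step I would spend the most care on, is getting the perturbation of $\hat\Sigma\hat\ell$ right: it chains two approximations — the covariance estimation error and the error in extracting the rank-one factor $\hat\ell$ from $\hat L$ — and the eigenvector extraction is only stable when the top singular value of $L^*$ is well separated, which is where the conditioning bounds (and the constant $c_3$) must be carefully tracked so that the $m^2/\sqrt n$ scaling is honest rather than hiding a hidden dependence on the eigengap. A secondary subtlety is bookkeeping the probability: the $1-1/m$ event must simultaneously cover the structure-recovery guarantee of \cite{Varma19} and the covariance concentration, so I would take the threshold $t$ in the Hoeffding bound large enough that both hold on the same event, noting that \cite{Varma19}'s theorem is itself stated in terms of $\|\hat\Sigma - \Sigma_O\|$ being small, so really only one high-probability event is needed and the rest is deterministic given it. Everything downstream — the translation of $i_S,e_S$ into the parameter-error bound — is then the content of Theorem~\ref{thm:extended} and can be cited rather than reproved here.
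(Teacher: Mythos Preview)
Your proposal is essentially the paper's proof: invoke the \cite{Varma19} guarantee to get $\|\hat L - L\|$ and $\|\hat S - K_O\|_\infty$ both of order $\sqrt{m/n}$ on a single high-probability event, then threshold. The edge count via $o_2$ is exactly as in the paper. Two corrections on the validity step: the perturbation bound you need is the $\ell_2$ bound $\|\hat\Sigma\hat\ell - \Sigma_O\ell\|_2 \le c_3 m^2/\sqrt{n}$ (no $T_1$, and not $\ell_\infty$ --- an $\ell_\infty$ bound would not give the $\lfloor x^2\rfloor$ counting you correctly describe afterwards); and the paper does not use Davis--Kahan but an elementary inequality relating $\|\ell\ell^T - \hat\ell\hat\ell^T\|$ to $\|\ell-\hat\ell\|$ via writing $\hat\ell = \ell + \Delta_\ell$ and bounding $\|\Delta_\ell\|(1-\tfrac{\sqrt m}{2\ell_{\min}}\|\Delta_\ell\|)$, which avoids tracking an eigengap. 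Either route works, but the direct algebra is cleaner here since $L$ is exactly rank one.
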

This result characterizes the performance of the structure learning component. It tells us how many invalid IVs we may inadvertently be using in the estimation phase (due to noise) and how many such edges we may include. The proof is a simple modification of the result in \cite{Varma19}.

\begin{proof}
First, we note a difference from the result in \cite{Varma19} and our application of it: in that work, all of the nodes are connected to the latent node. In our version, the invalid candidates are not connected to any of the valid candidates or the latent variable. 

This ensures that in $\Sigma\ell$, the terms corresponding to the valid candidates are zero, which enables us to estimate the set of valid candidates $\hat{V}$. The result in \cite{Varma19} still holds in this setting; it does not require that all of the observed variables are connected to the latent variable. Next, we need to map our assumptions into those of \cite{Varma19}. The main requirement here is the singleton separator set assumption; for us, this is exactly equivalent to requiring that candidates that are dependent are all mutually dependent. The other assumptions are directly borrowed.

Next, note that in \cite{Varma19}, Theorem 1 is stated in terms of the number of samples sufficient to recover the structure exactly; this is done by driving the error below the smallest magnitude of the sparse component encoding the structure. The number of samples $n$ is determined by the smallest error sufficient to do this. That is, the authors obtain
\[n > c_1 d^2  m,\]
where we set $c_1$ to be the term (in the notation of \cite{Varma19})
\[ \left[ \frac{6 c_2  \beta(3-2\nu)(2-\nu)\psi_1}{\nu \alpha^2 \psi_m} \max \left\{\frac{1}{\psi_m}, \frac{\gamma}{K_{O,\min}}, \sigma^{-1}\right\} \right]^2.\]
In fact, a stronger version is possible where $m$ in the preceding expression on sample complexity can be reduced to $m^\tau$ for some $\tau \leq 1$.

Instead, we use the version of the result that computes the number of errors as for a particular number of samples via the step functions $i_S$ and $e_S$. Note that we consider both $\hat{S}$ (encoding the edges) and $\hat{\Sigma}\hat{\ell}$ (encoding the valid/invalid candidates). We have, using the proof of Theorem 1 in \cite{Varma19}, itself following \cite{Wu17} (top of Step 2 in the proof of Theorem 4.1) for some constants $c_4, c_5$ that are a function of $c_1$ above, that
\[\|\hat{L} - L\| \leq c_5 \sqrt{\frac{m}{n}},\]
and
\[\|\hat{K}_O - K_O\|_{\infty} \leq c_4 \sqrt{\frac{m}{n}},\]
where the $\inf$-norm here refers to the norm taken over the vectorized version of the matrix.

\paragraph{Missing edges}
It is easier to deal with the second term. Which edges will we fail to recognize among the valid candidates when running Algorithm~\ref{alg:sl1}? Precisely those entries of $K_O$ where $\hat{K}_O$ is no larger than our threshold than $c_4 \sqrt{\frac{m}{n}}$ (as, from the bound above, the gap cannot be any larger than this). Then, recalling that $o_2(x) = {\sum_{(i,j) \in E} \mathbb{I} \{(K_O)_{ij} \leq x\}}$, we have that indeed there are $o_2\left(T_2 + c_4 \sqrt{\tfrac{m}{n}} \right)$ missing edges among the valid candidates.

\paragraph{Invalid candidates} Next, we consider how many invalid candidates may be left after structure learning. The idea is similar, but requires several additional steps.

First, we have a bound on the gap between $\hat{L}$ and $L$. Since $\hat{L} = \hat{\ell}\hat{\ell}^T$ and $L = \ell \ell^T$, we will convert this to a bound on the gap between $\ell$ and $\ell^T$. Next, since our choice of the valid candidates in Algorithm~\ref{alg:sl1} is based on thresholding $\hat{\Sigma} \hat{\ell}$, we bound the gap between this term and $\Sigma\ell$. We start with the first of these steps.

We have that $\|\ell \ell^T - \hat{\ell} \hat{\ell}^T\| \leq c_5 \sqrt{\frac{m}{n}}$ as our starting point. First, consider, for some vector $x$ and an equal-sized vector of $1$'s, the quantity $x1^T + 1x^T$. Note that  $\|x1^T + 1x^T\|_F^2$ consists of the sum of a series of square terms that include (from the diagonal) $4 \sum x_i^2$. Then we see that 
\[ \|x\| \leq \frac{1}{2} \|x1^T + 1x^T\|_F.\]

Next, let $y$ be some other vector and $y_{\min}>0$ be the smallest magnitude of entry of $y$, where we assume $y$ has no zero entries. Then,
\[\frac{y_{\min}}{2} \|x1^T + 1x^T\|_F \leq  \frac{1}{2} \|xy^T + yx^T\|_F.\]

Next, we can write
\[ \|x\| \leq \frac{1}{2} \|x1^T + 1x^T\|_F \leq \frac{1}{2y_{\min}} \|xy^T + yx^T\|_F .\]

Now we move to 2-norms on the right, getting
\[ \|x\| \leq \frac{\sqrt{m}}{2y_{\min}} \|xy^T + yx^T\| .\]

Afterward, we can write, using the fact that in general $\|A+B\|-\|C\| \leq \|A+B+C\|$, that
\begin{align*}
\|x\| - \frac{\sqrt{m}}{2y_{\min}}\|x\|^2 & \leq \frac{\sqrt{m}}{2y_{\min}} \left( \|xy^T + yx^T\| - \|x\|^2 \right) \\
&=   \frac{\sqrt{m}}{2y_{\min}} \left( \|xy^T + yx^T\| - \|xx^T\| \right) \\
&\leq \frac{\sqrt{m}}{2y_{\min}} \left( \|xy^T + yx^T + xx^T\| \right).
\end{align*}

This can also be written as 
\[ \|x\|(1 - \frac{\sqrt{m}}{2y_{\min}}\|x\|) \leq \frac{\sqrt{m}}{2y_{\min}} \left( \|xy^T + yx^T + xx^T\| \right).\]

Getting back to our initial problem, let us write $\ell = \hat{\ell} + \Delta_\ell$. Then, 
\[\|\ell \ell^T - \hat{\ell} \hat{\ell}^T\| = \|\ell \ell^T - (\ell + \Delta_\ell)(\ell + \Delta_\ell)^T \| = \|\ell\Delta_\ell^T + \Delta_\ell \ell^T + \Delta_\ell \Delta_\ell^T\|. \]

Now take $y$ to be $\ell$ and $x$ to be $\Delta_\ell$. We get
\begin{align*}
\|\Delta_\ell\| (1 - \frac{\sqrt{m}}{2\ell_{\min}}\|\Delta_\ell\|) &\leq  \frac{\sqrt{m}}{2\ell_{\min}}  \|  \Delta_\ell \ell^T + \ell\Delta_\ell^T + \Delta_\ell \Delta_\ell^T\| \\
&=  \frac{\sqrt{m}}{2\ell_{\min}}\|\ell\ell^T - \hat{\ell}\hat{\ell}^T\|  \leq  \frac{\sqrt{m}}{2\ell_{\min}}  c_5 \sqrt{\frac{m}{n}} =  \frac{c_5}{2\ell_{\min}}   \frac{m}{\sqrt{n}} .
\end{align*}

Now, say our number of samples $n$ is large enough (i.e., greater than some $n_0$) to ensure that $\|\Delta_\ell\| \leq (\sqrt{m} / \ell_{\min})^{-1}$ . Then, the left hand side is at least $\|\Delta_\ell\|/2$, so that
\begin{align}
\Delta_\ell = \|\ell - \hat{\ell} \| \leq   \frac{c_5}{\ell_{\min}}   \frac{m}{\sqrt{n}} .
\label{eq:bdell}
\end{align}
Now, we have to translate \eqref{eq:bdell} into the terms we are actually thresholding, $\hat{\Sigma} \hat{\ell}$. This is not difficult:

\begin{align*}
\| \Sigma \ell - \hat{\Sigma} \hat{\ell}\| &= \|(\Sigma \ell - \Sigma \hat{\ell}) + (\Sigma \hat{\ell} - \hat{\Sigma} \hat{\ell})\| \\
&\leq \|\Sigma \ell - \Sigma \hat{\ell}\| + \| \Sigma \hat{\ell} - \hat{\Sigma} \hat{\ell} \| \\
&\leq \|\Sigma\| \| \ell - \hat{\ell}\| + \|\hat{\ell}\| \|\Sigma - \hat{\Sigma}\| \\
&\leq \|\Sigma\|  \frac{c_5}{\ell_{\min}}   \frac{m}{\sqrt{n}} + \|\hat{\ell}\| c_6 \frac{m^2}{\sqrt{n}} \\
&\leq c_3  \frac{m^2}{\sqrt{n}}.
\end{align*}

In the penultimate step, we use \eqref{eq:bdell} and also bound $ \|\Sigma - \hat{\Sigma}\|$; this is a conventional application of matrix concentration. The calculations are explicitly spelled out in \cite{Ratner19}, but the proof of Lemma~\ref{lem:conc} above is almost identical. In the last step, we use the fact that $m \leq m^2$ and bring all of the constant terms together into $c_3$. 

Now, from our independence assumption, $(\Sigma_{Oz})_i = 0$ for $i\not\in V$. This term is just $\Sigma \ell$. Since we use $\hat{\Sigma} \hat{\ell}$ to estimate it, we wish to know how many of these entries are potentially above our threshold $T_1$ and thus will be incorrectly interpreted as valid candidates.

Since $(\Sigma \ell)_i = 0$ for $i\not\in V$, and $\| \Sigma \ell - \hat{\Sigma} \hat{\ell}\| \leq c_3  \frac{m^2}{\sqrt{n}}$, the sum of the squares of terms indexed by $i \not\in V$ in $\hat{\Sigma} \hat{\ell}$ is at most $\left(c_3  \frac{m^2}{\sqrt{n}}\right)^2$. The maximum number of such terms whose magnitude is at least $T_1$ is just 
\[ \left \lfloor \left(c_3 \frac{m^2}{\sqrt{n}}\right)^2 / T_1^2 \right \rfloor.\]
This is just $o_1\left(\frac{c_3 m^2}{T_1 \sqrt{n}} \right)$, as desired.
\end{proof}

Now we proceed with the rest of the theorem.
\begin{proof}
First, we consider the problem setting in the noiseless population-level case, and then we proceed with the sampled results. 

We set $w_1, \ldots, w_p$ be valid candidates and $w_{p+1}, \ldots,w_{p+s}$ be invalid, without loss of generality. Next, let $\mu^*$ be the true parameters. Note that since the invalid IVs are independent of $z$ by assumption and hence are not part of the actual model, $\mu^*_{p+i} = 0$ for $1 \leq i \leq s$. We treat the invalid candidates as part of the model purely for convenience in our notation; of course, in the population-level setting, we would be able to split off the valid model immediately. In the sampled setting, which we encounter in practice, we do not know which of these IVs are valid and which are not.

The true graph model $G$ involving our IV candidates has the following structure: $G := (V,E)$, where $V = \{w_1, \ldots, w_p\}$.
$E$ contains edges between valid candidates $w_1, \ldots, w_p$ only, and no edges for the invalid candidates, which we already detect as follows: we know that for $w_i$ valid and $w_j$ invalid, $\mathbb{E}[w_iw_j] - \mathbb{E}[w_i]\mathbb{E}[w_j] = 0$, by the assumption that invalid IVs are independent of $z$. We also required that there are more valid IVs than invalid ones, so that we can immediately recover the valid IVs (they form the largest connected component, with at least $m/2$ candidates) and then set $\mu^*_j = 0$ for the invalids. It should be noted that the assumption that there are more valid IVs than invalid ones can be further relaxed: as long as the valid IVs form the largest connected component, we can distinguish between valid IVs and the invalid ones. Such a relaxation suggests that the breakdown level of Ivy can be above $50\%$, as shown in Table~\ref{tab:assumptions}. Note also that from structure learning in the noiseless case, we also recover the exact graph $G$ (Lemma 1 in \cite{Varma19}).

We show that under this correctly-specified setup, and with no noise, we recover the remaining $\mu^*$ parameters. We write $O^*$ for the population-level overlaps matrix, $\E[w_Vw_V^T]$. We recall that $q^*_{ij} = \log ((O^*_{ij})^2)$, and that we wish to solve the system $M_{\Omega} \ell^* = q^*$, where $\ell^*_i := \log ((\mu^*_i)^2)$. 

The matrix $M_\Omega$ has a row for each pair of valid IVs that are conditionally independent given $z$. 
By assumption, for each IV candidate, there exists another pair of IVs forming a full-rank $3\times 3$ submatrix $\left[\begin{array}{ccc} 1 & 1  & 0  \\ 1&  0 & 1\\ 0 & 1 &1\end{array}\right]$ in $M_\Omega$. Thus, the column corresponding to this IV candidate cannot be written as a linear combination of any of the other columns in $M_\Omega$, as each row has exactly two nonzeros so none of the other columns have any zeros in these locations. Thus, $M_\Omega$ has full column rank. So, there exists a \textit{unique} solution to $\min \norm{M_\Omega \ell - q^*}_2$ given by the normal equations. Since the population-level results $\ell^*$ satisfy $M_\Omega \ell^* = q^*$, i.e. $\norm{M_\Omega \ell^* - q^*}_2 = 0$, we have that $\ell^*$ is the unique solution to this system, and thus $\mu^*$ is unique as well.

We are only missing one aspect: we need to recover the signs of each of the recovered terms. Here, we use the assumption on the agreement, on average, of the candidates with $z$. For example, if a valid IV candidate $i$ has a better than random chance of agreeing with $z$, we get that $\E[a_i] = \E[w_iz] > 0$. Note also that as soon as we have determined one sign, say for $a_i$, every other candidate accuracy (that forms a row in $M$ with $w_i$) has its sign determined. This concludes the noiseless case.

\paragraph{Sampling results} 
In practice, we do not observe $O^*$, but rather a sampled version $\hat{O}$ that we obtain from samples $w^{(1)}, w^{(2)}, \ldots, w^{(n)}$, estimated as $\hat{O} = \frac{1}{n} \sum_{i=1}^{n} w^{(i)}w^{(i)^T}$. Then, applying Lemma~\ref{lem:conc}, we get that
\begin{align}
\mathbb{E} \left[\|\hat{O} - O^*\|  \right] \leq m^2 \sqrt{\frac{32 \pi}{n}}.
\label{eq:expbound}
\end{align}

If we had access to the true set of edges in $\Omega$ (and thus $M_{\Omega}$), we could then solve the system $M_{\Omega} \tilde{\ell} = \tilde{q}$, where $\tilde{q}:=\hat{q}_\Omega$ represents the subvector of $\hat{q}$ with $\hat{q}_{ij} := \log (\hat{O}_{ij}^2)$ that is associated with $\Omega$. To do so would require that $|\hat{V}| \geq 3$; if $|\hat{V}| \leq 2$, so that we only have two estimated valid candidates after structure learning, we will not have enough signal to obtain accuracy estimates. However, this happens with sufficiently low probability that we can condition on it not occurring (recall that the result holds with probability at least $1-1/m$). 
In practice, though, we do not even know $\Omega$, but rather an estimated version $\hat{\Omega}$. Then, we end up solving $M_{\hat{\Omega}} \hat{\ell} = \hat{q}_{\hat{\Omega}}$, where we note that  $M=M_{\hat{\Omega}}$ and $\hat{q}_{\hat{\Omega}} = \hat{q}$.

We work with a series of perturbation terms. Our final goal is to bound $\|\hat{\mu} - \mu^*\|$. Since we obtain the estimate $\hat{\mu}$ from the estimate $\hat{\ell}$, we will then write $\|\hat{\mu} - \mu^*\|$ as a function of $\|\hat{\ell} - \ell^*\|$. We use the triangle inequality to write
\begin{equation}
\|\hat{\ell} - \ell^*\| = \|\hat{\ell}  - \tilde{\ell} + (\tilde{\ell}- \ell^*)\| \leq \|\hat{\ell} - \tilde{\ell}\| + \|\tilde{\ell} - \ell^*\| .
\label{eq:ldec}
\end{equation}

Here, the first term involves misspecification with respect to the number of edges by using $\hat{\Omega}$ instead of the true $\Omega$, while the second term involves just sampling noise. We control each of these terms separately. In particular, we shall control the second term as a function of the sampling error $\|\hat{O} - O^*\|$. 

Suppose that there are $o$ edges in the true edge set $E$ that are not in $\hat{E}$, our recovered set. Such a non-edge $(i,j)$ in $\hat{E}$ is then included in $\hat{\Omega}$ but not in $\Omega$, leading to additional rows in $M_{\hat{\Omega}}$ that are not in $M_{\Omega}$. Lastly, say that there are $\Upsilon$ non-edges due to the failure of excluding the invalid IVs in our estimated graph (we can think of our true graph as having edges between every invalid IV and any other IV, valid or invalid, since we wish to exclude such rows). Then, we have an additional $\Upsilon$ rows among $M_{\hat{\Omega}}$, for a total of $o + \Upsilon$ spurious constraints. Note that additional spurious constraints are not an issue as long as we can still solve the (normally overdetermined) linear system.

With this setup complete, we proceed to bound each of the two terms on the right-hand side of \eqref{eq:ldec} separately. We call the left term the misspecification term.

\paragraph{Misspecification Term}
To avoid overly cumbersome notation, let us write $M$ for $M_{\hat{\Omega}}$.  Let the SVD of $M$ be $M = U_M \Sigma_M V_M^\top$. Note that since $M_{\Omega}$ is full-rank, and it is a submatrix of $M$ with the same number of columns, $M$ is also full-rank. Thus, $U_M \in \mathbb{R}^{|\hat{\Omega}| \times p}, \Sigma_M \in \mathbb{R}^{p \times p}$, and $V_M \in \mathbb{R}^{p \times p}$. 

Recall that $S$ is the row selection matrix so that $SM = M_{\Omega}$, the corrected form of $M$; in other words, $S$ selects out all the spurious rows. It is a 0/1 matrix of dimensions $|\Omega| \times |\hat{\Omega}|$. 

Recall that $M^\dag = V_M \Sigma_M^{-1} U_M^\top$ and that the residual of the least-squares problem is $r_M = \norm{q - M (M^\dag q)}_2 = \norm{U_M^\perp (U_M^\perp)^\top q}_2$, where $U_M^\perp$ is an orthogonal matrix whose orthonormal columns span a subspace orthogonal to $U_M$. 

We use an argument established in \cite{Mahoney}. That work sought to subsample constraints in a linear regression problem and establish bounds between the result of using all the constraints versus sampling. We use the same strategy, but in our case we are adding rather than removing constraints.

Following section 4.2 in \cite{Mahoney}, we have that
\begin{align*}
\hat{\ell} - \tilde{\ell} &=  M^\dagger \hat{q} - (SM)^\dagger (S \hat{q})  \\
&= V_M \Sigma_M^{-1} U_M^\top \hat{q} - (S U_M \Sigma_M V_M^\top)^\dag S \hat{q}  \\
&= V_M \Sigma_M^{-1} U_M^\top \hat{q} - V_M \Sigma_M^{-1} (SU_M)^\dagger S\hat{q}  \\
&= V_M \Sigma_M^{-1} U_M^\top \hat{q} - V_M \Sigma_M^{-1} (SU_M)^\dagger S (U_M^\perp (U_M^\perp)^\top + U_M U_M^\top)\hat{q}  \\ 
& = V_M \Sigma_M^{-1} U_M^\top \hat{q} -V_M \Sigma_M^{-1} (SU_M)^\dagger SU_M^\perp (U_M^\perp)^\top \hat{q} -V_M \Sigma_M^{-1} (SU_M)^\dagger S U_M  U_M^\top \hat{q}\\
& = V_M \Sigma_M^{-1} U_M^\top \hat{q} -V_M \Sigma_M^{-1} (SU_M)^\dagger SU_M^\perp (U_M^\perp)^\top \hat{q} - V_M \Sigma_M^{-1} U_M^\top \hat{q}\\
&= -V_M \Sigma_M^{-1} (SU_M)^\dagger SU_M^\perp (U_M^\perp)^\top \hat{q},
\end{align*}
where we have used the fact that $U_M^\perp (U_M^\perp)^\top + U_M U_M^\top = I$ and  $(SU_M)^\dagger S U_M = I$. 

Setting  $\Gamma := (SU_M)^\dagger - (SU_M)^\top$, this is 
\begin{align*}
\hat{\ell} - \tilde{\ell}& = -V_M \Sigma_M^{-1} (SU_M)^\dagger SU_M^\perp (U_M^\perp)^\top \hat{q} \\
&=  -V_M \Sigma_M^{-1} ((SU_M)^\top + \Gamma) SU_M^\perp (U_M^\perp)^\top \hat{q}.
\end{align*}

Now, we have the fact that $\Gamma = (SU)^\dagger - (SU)^\top$ satisfies $\norm{\Gamma}_2 = \norm{\Sigma_{SU_M}^{-1} -  \Sigma_{SU_M}}_2$, where $\Sigma_{SU_M}$ is the diagonal matrix from the SVD of $SU_M$. In our case, $\Sigma_{SU_M}$ has entries that are all larger than 0 (and up to 1). If $\rho_{SU_M} = \sigma_{\text{min}}(SU_M)$ is the smallest singular value, then $\|\Gamma\|_2 = \rho_{SU_M}^{-1} - \rho_{SU_M}$. Now we take norms above to get
\begin{align*}
\norm{ \hat{\ell} - \tilde{\ell}}_2 & = \norm{ V_M \Sigma_M^{-1} ((SU_M)^\top + \Gamma) SU_M^\perp (U_M^\perp)^\top \hat{q}}_2 \\
&= \norm{ \Sigma_M^{-1} ((SU_M)^\top + \Gamma) SU_M^\perp (U_M^\perp)^\top \hat{q}}_2  \\
&\leq \norm{ \Sigma_M^{-1} (SU_M)^\top  SU_M^\perp (U_M^\perp)^\top \hat{q}}_2 +  \norm{ \Sigma_M^{-1} \Gamma SU_M^\perp (U_M^\perp)^\top \hat{q}}_2 \\
&\leq \sigma_{\min}^{-1}(M) \left( \norm{ U_M^\top S^\top SU_M^\perp (U_M^\perp)^\top \hat{q}}_2 + \norm { \Gamma\| \| SU_M^\perp (U_M^\perp)^\top \hat{q}}_2 \right) \\
&=  \sigma_{\min}^{-1}(M) \left( \norm{ U_M^\top (I-\Xi) U_M^\perp (U_M^\perp)^\top \hat{q}}_2 + (\rho_{SU_M}^{-1} - \rho_{SU_M}) \norm{ SU_M^\perp (U_M^\perp)^\top \hat{q}}_2 \right).
\end{align*}

In the last step, we use the fact that $S^\top S \in \mathbb{R}^{|\Omega| \times |\Omega|}$ has a submatrix that is $I_{k}$ and is 0 elsewhere. We set $\Xi = I-S^\top S$; $\Xi$ contains an $I_{o + \Upsilon}$ submatrix and 0's elsewhere. Now, $U_M^\top I U_M^\perp = U_M^\top U_M^\perp = 0$, so we have
\begin{align*}
\norm{  \hat{\ell} - \tilde{\ell}}_2 & = \sigma_{\min}^{-1}(M) \left( \norm{ U_M^\top \Xi U_M^\perp (U_M^\perp)^\top b }_2 + (\rho_{SU_M}^{-1} - \rho_{SU_M}) \norm{ SU_M^\perp (U_M^\perp)^\top \hat{q} }_2 \right) \\
& \leq \sigma_{\min}^{-1}(M) \left(
\norm{U_M^\top \Xi}_{\text{F}} \norm{ U_M^\perp (U_M^\perp)^\top b }_2 + ( \rho_{SU_M}^{-1} - \rho_{SU_M} ) \norm{ U_M^\perp (U_M^\perp)^\top \hat{q} }_2 \right)\\
&\leq  \sigma_{\min}^{-1}(M) \left( (o + \Upsilon) u_{\max} + \rho_{SU_M}^{-1} - \rho_{SU_M} \right) \norm{ U_M^\perp (U_M^\perp)^\top \hat{q}}_2  \\
&\leq  \sigma_{\min}^{-1}(M)  \left( (o + \Upsilon)u_{\max} + \rho_{SU_M}^{-1} - \rho_{SU_M} \right) \norm{U_M^\perp (U_M^\perp)^\top} \|\hat{q}\|.\\
&\leq  \sigma_{\min}^{-1}(M)  \left( (o + \Upsilon)u_{\max} + \rho_{SU_M}^{-1} - \rho_{SU_M} \right) \norm{U_M^\perp (U_M^\perp)^\top} \sqrt{|\hat{\Omega}|}q_{\max}.
\end{align*}

Here, we used the fact that $S$ has maximal singular value 1, along with the fact that the norm of largest row of $U_M$ is $u_{\max}$, that the 2-norm of a matrix is no larger than its Frobenius norm, and that $\Xi$ has only $o + \Upsilon$ non-zero entries (all of which are 1's on the diagonal). 

\paragraph{Noise Term}
Now we work on the rightmost term, $\|\tilde{\ell} - \ell^*\|$, where $\tilde{\ell}$ is formed from the correct $M_{\Omega}$ matrix, but we still have sampling noise. 

Recall that $|\tilde{\mu}| = \exp(\frac{\tilde{\ell}}{2})$ and similarly $|\mu^*| = \exp(\frac{\ell^*}{2})$, where the exponential is applied elementwise. We have that, since our matrix $M$ in both cases is full-rank and we have a unique solution,
\begin{align*}
\left \lVert |\tilde{\mu}| - |\mu^*| \right\rVert &= \left \lVert \exp\left(\frac{\tilde{\ell}}{2}\right)  -  \exp\left(\frac{\ell^*}{2}\right) \right\rVert \\
&=   \left \lVert   \exp\left(\frac{\ell^*}{2}\right) \left( \exp\left(\frac{ \tilde{\ell} - \ell^*}{2}\right) - 1\right)  \right\rVert \\
&\leq  \left \lVert   \exp\left(\frac{\ell^*}{2}\right)  \right\rVert \left\lVert  \exp\left(\frac{ \tilde{\ell} - \ell^*}{2}\right) - 1   \right\rVert \\
&=  \left \lVert  \mu^*  \right\rVert \left\lVert  \exp\left(\frac{ \tilde{\ell} - \ell^*}{2}\right) - 1   \right\rVert \\
&\leq \sqrt{m} \left\lVert  \exp\left(\frac{ |\tilde{\ell} - \ell^*|}{2}\right) - 1   \right\rVert,
\end{align*}
where the absolute value in the last expression is applied elementwise.

Note that for all $x \le 1$, $\exp(x) - 1 \leq 2x$.
Using this, we get that in the case $\norm{\tilde{\ell}-\ell^*}_\infty \le 2$,
\[\| |\tilde{\mu}| - |\mu^*|\| \leq 2\sqrt{m} \|\tilde{\ell} - \ell^* \|.\]

Note that as $\mu^* = \E[w \cdot z]$, the entries of $\mu^*$ (and $\hat{\mu}$, by construction) lie in $[-1, +1]$. Thus $\norm{|\hat{\mu}|-|\mu^*|} \le \sqrt{m}$ always. So in the case $\norm{\tilde{\ell}-\ell^*}_\infty > 2$, we trivially have that $\norm{|\hat{\mu}|-|\mu^*|} \le 2\sqrt{m}\norm{\tilde{\ell}-\ell^*}_\infty$.

Recall that $M_{\Omega} \tilde{\ell} = \tilde{q}$ and $M_\Omega \ell^* = q^*$, so $\|\tilde{\ell}-\ell^*\| = \|M^\dag (\tilde{q}-q^*)\|$. Combining this with the above analysis, we have that
\begin{align}
\| |\tilde{\mu}| - |\mu^*|\| &\leq 2\sqrt{m} \|M^\dagger(\tilde{q} - q^*)\|  \nonumber \\
&\leq 2\sqrt{m} \|M^\dagger\| \|\tilde{q} - q^*\|. 
\label{eq:mutoq}
\end{align}
So we just need to bound $\|\tilde{q} - q^*\|$. Recall that $q_{i,j} = \log (O_{ij})^2$, then we have that 
\begin{align*}
	\|\tilde{q} - q^*\|^2
	&=
	\sum_{(i,j)\in \Omega} \left( \log(\hat{O}_{i,j}^2) - \log((O^*_{i,j})^2) \right)^2 \\
	&=
	4 \sum_{(i,j)\in \Omega} \left( \log(|\hat{O}_{i,j}|) - \log(|O^*_{i,j}|) \right)^2 \\
	&= 
	4 \sum_{(i,j)\in \Omega} \left( \log(|O_{i,j}^* + (\Delta_O)_{i,j}|) - \log(|O^*_{i,j}|) \right)^2 \\
	&\leq
	4 \sum_{(i,j)\in \Omega} \left[ \log \left(1 + \left|\frac{(\Delta_O)_{i,j}}{P_{i,j}} \right| \right)\right]^2  \\
	&\leq 
	4 \sum_{(i,j) \in \Omega} \left( \frac{|(\Delta_O)_{i,j}|}{|O^*_{i,j}|} \right)^2  \\
	&\leq 
	\frac{4}{(O^*_{\min})^2} \sum_{(i,j) \in \Omega} (\Delta_O)_{i,j}^2 .
	\end{align*}
Here, we define $\Delta_O := \hat{O} - O$. Note that $O^*_{\min}$ is the same as $R_{\min}$. In the second inequality above, we use $\log(1+x))^2 \leq x^2$. Next, taking square roots and applying \eqref{eq:expbound} by taking expectations, we get that
\begin{align*}
\mathbb{E} [\|\tilde{q} - q^*\|] &\leq \frac{2}{O^*_{\min}} \|\Delta_O\| \\
&\leq  \frac{8m^2}{O^*_{\min}} \sqrt{\frac{2\pi}{n}} .
\end{align*}

We plug this into \eqref{eq:mutoq} to obtain
\begin{equation}
\mathbb{E}[\||\tilde{\mu}| - |\mu^*|\|] \leq \frac{16 m^{\frac{5}{2}}}{O^*_{\min}}  \|M^\dagger\| \sqrt{\frac{2\pi}{n}}. 
\label{eq:firstmu}
\end{equation}

The only remaining step is to combine this with the misspecification step. Using the same idea as earlier, we have that
\[\||\hat{\mu}| - |\tilde{\mu}| \| \leq \sqrt{m} \|\hat{\ell} - \tilde{\ell} \|.\]

Next, using our earlier bound, we have that 
\begin{align*}
\norm{  \hat{\ell} - \tilde{\ell}} &\leq \sigma_{\min}^{-1}(M)  \left( (o + \Upsilon)u_{\max} + \rho_{SU_M}^{-1} - \rho_{SU_M} \right) \norm{U_M^\perp (U_M^\perp)^\top} \sqrt{|\hat{\Omega}|}q_{\max}.
\end{align*}
Then,
\[\||\hat{\mu}| - |\tilde{\mu}| \| \leq \sqrt{m} \sigma_{\min}^{-1}(M)  \left( (o + \Upsilon)u_{\max} + \rho_{SU_M}^{-1} - \rho_{SU_M} \right) \norm{U_M^\perp (U_M^\perp)^\top} \sqrt{|\hat{\Omega}|}q_{\max}.\]
Taking expectations, and using the fact that $|\hat{\Omega}| \leq m^2$, we get 
\begin{equation}
\mathbb{E}[\| |\hat{\mu}| - |\tilde{\mu}| \|] \leq m^{3/2} \sigma_{\min}^{-1}(M)  \left( (o + \Upsilon)u_{\max} + \rho_{SU_M}^{-1} - \rho_{SU_M} \right) \norm{U_M^\perp (U_M^\perp)^\top} q_{\max}.
\label{eq:secmu}
\end{equation}

Then, from triangle inequality.
\[\| |\hat{\mu}| - |\mu^*| \| \leq  \| |\hat{\mu}| - |\tilde{\mu}| \| + \| |\tilde{\mu}| - |\mu^*| \|  .\]

We combine \eqref{eq:firstmu} with \eqref{eq:secmu} to get
\begin{align*}
\mathbb{E}[\| |\hat{\mu}| - |\mu^*| \| ]  &\leq 
\frac{16 m^{\frac{5}{2}}}{O^*_{\min}}  \|M^\dagger\| \sqrt{\frac{2\pi}{n}} \\
&\qquad + m^{3/2} \sigma_{\min}^{-1}(M)  \left( (o + \Upsilon)u_{\max} + \rho_{SU_M}^{-1} - \rho_{SU_M} \right) \norm{U_M^\perp (U_M^\perp)^\top} q_{\max}.
\end{align*}

We assumed, initially, that we had $\Upsilon$ edges from the invalid IVs. After structure learning, from Lemma~\ref{lem:sllemma}, with probability at least $1-1/m$, we have $i_S$ invalid IVs and $e_S$ edges. The $i_S$ invalid IVs can lead to up to $\Upsilon = i_S(p+i_S)$ invalid edges (between any invalid IV and any other IV), while we have $o = e_S$ dependencies. Then, after this substitution, we get our main result. 

The second part of the theorem statement involves the case where we bypass structure learning and directly plug in our IVs, assuming conditional independence, despite the presence of $i$ invalid sources and $e$ dependencies. The only distinction in this case is that we can explicitly compute the minimal singular value of the structured matrix $M$, which is $\sqrt{s+p-2}$, and the $u_{\max}$ term, which is $\sqrt{2/({s+p-2})}$.
\end{proof}

Theorem~\ref{thm:ci_bound_1} follows from Theorem~\ref{thm:extended} since when $n > c_1d^2m$, the structure learning component correctly identifies the valid IV candidates and the correct dependencies among them with high probability (in which case we recover $M_\Omega$ and thereby the correct signs for $\hat{\mu}$ as well). For convenience, we restate Theorem~\ref{thm:ci_bound_1} below.
\thmcibound*

\subsection{Auxiliary Lemmas for Theorem~\ref{thm:ci-wald}}
\label{sec:aux-lemma}
Next, we present some useful results that will help us with our proof of Theorem~\ref{thm:ci-wald}. We first present Lemma~\ref{lemma:strong-cvx}, which details a mild regularity condition under which we can reason within a feasible region of the parameter space that yields a strongly convex population level negative log-likelihood function.

\begin{lemma}
\label{lemma:strong-cvx} Let $\ell (x; \theta)$ be the negative log-likelihood function parameterized by $\theta \in \Theta$. Let $\mathcal{I}(\theta) := \mathbb{E} \left[\frac{\partial^2 \ell (x; \theta)}{\partial \theta^2}\right]$ be the corresponding Fisher information matrix. Suppose that for any given $\theta \in \Theta$, there exists an unbiased estimator $\check{\theta}$ of $\theta$ that is a function of some number $n$ of independent samples $\curly{x^{(i)}}_{i=1}^n$ drawn from $\mathcal{D}$ such that, for some constant $\lambda$, $\text{Cov} (\check{\theta}) \preceq \frac{I}{2 \lambda n}$. Then $L(\theta) := \mathbb{E}[\ell(x;\theta)]$ is $\lambda$-strongly convex with respect to $\theta$.
\end{lemma}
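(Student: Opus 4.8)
The plan is to convert the assumed variance upper bound on $\check{\theta}$ into a curvature lower bound on $L$, by combining the multivariate Cram\'er--Rao inequality with the information equality. Concretely: since $\check{\theta}$ is an unbiased estimator of $\theta$ that is a function of $n$ i.i.d.\ samples from $\mathcal{D}$, and the underlying (logistic) model is a regular parametric family---fixed support, densities smooth in $\theta$, and finite positive-definite single-sample Fisher information, so that differentiation under the expectation is legitimate---the Cram\'er--Rao bound gives, in the Loewner order, $\text{Cov}(\check{\theta}) \succeq \tfrac{1}{n}\,\mathcal{I}(\theta)^{-1}$, where $\mathcal{I}(\theta)$ is the single-sample Fisher information.

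Next I would chain this with the hypothesis $\text{Cov}(\check{\theta}) \preceq \tfrac{I}{2\lambda n}$ and use transitivity of $\preceq$ to obtain $\tfrac{1}{n}\mathcal{I}(\theta)^{-1} \preceq \tfrac{I}{2\lambda n}$, hence $\mathcal{I}(\theta)^{-1} \preceq \tfrac{1}{2\lambda} I$. Because $X \mapsto X^{-1}$ is operator-antitone on the positive-definite cone, inverting both sides yields $\mathcal{I}(\theta) \succeq 2\lambda I$ (in fact a factor of two stronger than needed). This holds for every $\theta \in \Theta$, since the hypothesis was stated for an arbitrary $\theta$.

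Finally I would identify $\mathcal{I}(\theta)$ with the Hessian of $L$. By the information equality (again a consequence of the regularity conditions), $\mathcal{I}(\theta) = \mathbb{E}\!\left[\tfrac{\partial^2 \ell(x;\theta)}{\partial \theta^2}\right] = \nabla_\theta^2\, \mathbb{E}[\ell(x;\theta)] = \nabla_\theta^2 L(\theta)$, where exchanging differentiation and expectation is justified by the same conditions. Therefore $\nabla_\theta^2 L(\theta) \succeq 2\lambda I \succeq \lambda I$ for all $\theta$ in the convex region $\Theta$, which is exactly the second-order characterization of $\lambda$-strong convexity of $L$.

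The step to be careful about---and the only real content---is the Cram\'er--Rao invocation: one must verify the regularity conditions (straightforward for logistic regression over a bounded feasible region) and, in particular, that $\mathcal{I}(\theta) \succ 0$, which is what makes the inversion step meaningful. Everything else is routine manipulation of the Loewner order.
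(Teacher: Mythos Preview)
Your proposal is correct and follows essentially the same route as the paper: invoke the Cram\'er--Rao bound for the $n$-sample unbiased estimator, combine it with the assumed covariance upper bound to get $\mathcal{I}(\theta)\succeq 2\lambda I$, and then read this off as a Hessian lower bound on $L$. You are in fact a bit more careful than the paper in spelling out the regularity conditions needed for Cram\'er--Rao, the operator-antitone inversion, and the interchange of differentiation and expectation.
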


\begin{proof}
The proof follows the rationale of that in \cite{Ratner16}. From the Cram\'{e}r-Rao lower bound, we know in general that the variance of any unbiased estimator is bounded by the inverse of the Fisher information:
\begin{equation*}
\text{Cov} (\check{\theta}) \succeq (\mathcal{I(\theta)})^{-1}.
\end{equation*}
Since the unbiased $\check{\gamma}$ by construction is learned from $n$ independent samples from $\mathcal{D}$, it follows that the Fisher information is $n$ times the Fisher information of a single sample:
\begin{equation*}
\mathbb{E} \left[\sum_{i=1}^n \frac{\partial^2}{\partial \theta^2} \ell (x^{(i)}; \theta) \right] = \sum_{i=1}^n \mathbb{E} \left[\frac{\partial^2  \ell (x^{(i)}; \theta)}{\partial \theta^2} \right] = n \mathbb{E} \left[\frac{\partial^2  \ell (x; \theta)}{\partial \theta^2} \right]
= n \mathcal{I}(\theta).
\end{equation*}

Combining this with the bound in the lemma statement on the covariance, we get
\begin{equation*}
\frac{I}{2 \lambda n} \succeq  (n\mathcal{I(\theta)})^{-1}.
\end{equation*}
It follows that
\begin{equation*}
\mathbb{E} \left[\frac{\partial^2 \ell (x; \theta)}{\partial \theta^2} \right]  = \mathcal{I} (\theta) \succeq 2\lambda I,
\end{equation*}
which means $L(\gamma)$ is $\lambda$-strongly convex.
\end{proof}

Let $l(y,z; \gamma) := \log [1 + \exp (-y(k +\beta z))]$, with $\gamma := (k,\beta)$. Lemma~\ref{lemma:1} upper bounds the error in the parameter of the logistic regression model that uses $z$ to predict $y$ by the error in the parameters of Ivy. Notice that the same lemma can also be applied to upper bound the error in the parameters of the logistic regression model that uses $z$ to predict $x$.

\begin{lemma}
\label{lemma:1}
Let $L(\gamma) := \mathbb{E}[l(y,z;\gamma)]$, $\gamma^* := \argmin_{\gamma} L(\gamma)$, and let $\hat{\gamma}$ be the logistic regression parameters learned using the data set and the Ivy estimator $\hat{z}$. Suppose that there exists a constant $c_5>0$ such that $\max\{l(y,z;\hat{\gamma}),\,l(y,z;\gamma^*))\} \le c_5$, and let $c_6$ be a constant. Suppose further that the assumptions in Lemma~\ref{lemma:strong-cvx} hold for $l(y,z;\gamma)$. Then, $L(\gamma)$ is $\lambda$-strongly convex, and
\begin{equation*}
\norm{\hat{\gamma}-\gamma^*}_2^2 \le \frac{2c_6}{\lambda}  \sqrt{\frac{2 \pi}{n}} + \frac{32}{\lambda} c_5 \left(\mathbb{E} \left[ \norm{\hat{\mu}-\mu^*}_{\infty} \right ] + \mathbb{E}\left[ \norm{\hat{O}-O^*}_{\infty} \right ] \right).
\end{equation*}
\end{lemma}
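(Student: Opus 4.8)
\emph{Proof plan.} The plan is to run a localized excess-risk argument. Since $L$ is $\lambda$-strongly convex by Lemma~\ref{lemma:strong-cvx} and $\gamma^*$ minimizes it (so $\nabla L(\gamma^*)=0$), we have $\frac{\lambda}{2}\norm{\hat\gamma-\gamma^*}_2^2 \le L(\hat\gamma)-L(\gamma^*)$, so it suffices to bound the excess risk. Let $\hat L(\gamma):=\frac1n\sum_{i=1}^n l(y^{(i)},\hat z^{(i)};\gamma)$ be the empirical logistic loss that $\hat\gamma$ actually minimizes, so $\hat L(\hat\gamma)-\hat L(\gamma^*)\le 0$; inserting and cancelling this term gives
\[
L(\hat\gamma)-L(\gamma^*) \;\le\; \big(L(\hat\gamma)-\hat L(\hat\gamma)\big) + \big(\hat L(\gamma^*)-L(\gamma^*)\big),
\]
so it remains to bound $|L(\gamma)-\hat L(\gamma)|$ for the two fixed points $\gamma\in\{\hat\gamma,\gamma^*\}$ --- precisely the points at which the hypothesis supplies the boundedness $l(\,\cdot\,;\gamma)\le c_5$.

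For each such $\gamma$ I would split into a \emph{label-model} term and a \emph{finite-sample} term by introducing the population loss under the Ivy label with the learned parameters frozen, $\tilde L(\gamma):=\E_{\mathcal D}[l(y,\hat z;\gamma)\mid \hat\mu,\hat O,\hat V]$: then $|L(\gamma)-\hat L(\gamma)| \le |L(\gamma)-\tilde L(\gamma)| + |\tilde L(\gamma)-\hat L(\gamma)|$. The second term is the deviation of an average of $n$ i.i.d.\ bounded summands from its conditional mean; since each $l(y^{(i)},\hat z^{(i)};\gamma)\in[0,c_5]$, Hoeffding's inequality yields a sub-Gaussian tail, and integrating it exactly as in the proof of Lemma~\ref{lem:conc} gives $\E|\tilde L(\gamma)-\hat L(\gamma)|\le c_6\sqrt{2\pi/n}$ for a constant $c_6$ proportional to $c_5$. (To handle the fact that $\hat\gamma$, $\hat z$, and the empirical loss all depend on the same sample, one either conditions on the learned label-model parameters and exploits that $l(\,\cdot\,;\hat\gamma)$ is still bounded, or uses a sample-splitting argument.)

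The label-model term is where the parameter errors enter. Writing $\hat p(w):=\prob_{\hat\mu,\hat O}(z=1\mid w_{\hat V})$ and $p^*(w):=\prob(z=1\mid w)$ and conditioning on $(w,y)$, the per-example gap factors as $\E[l(y,z;\gamma)-l(y,\hat z;\gamma)\mid w,y] = (p^*(w)-\hat p(w))\big(l(y,1;\gamma)-l(y,-1;\gamma)\big)$; since $l(y,\pm1;\gamma)\in[0,c_5]$ this gives $|L(\gamma)-\tilde L(\gamma)| \le 2c_5\,\E_w|p^*(w)-\hat p(w)| \le 2c_5\,\norm{\hat p-p^*}_\infty$. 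Finally I would bound $\norm{\hat p-p^*}_\infty$ by the mean-parameter errors: because the Ising posterior depends on $w$ only through the edges incident to $z$, it is a logistic function of the candidate accuracies, and \textsc{ParamLearn} recovers these (hence the posterior) as an explicit, Lipschitz function of $\mu$ and the overlaps $O$; invoking that stability (with $|w_i|\le1$ and $\E[z]$ known) gives $\norm{\hat p-p^*}_\infty \le 4\big(\norm{\hat\mu-\mu^*}_\infty+\norm{\hat O-O^*}_\infty\big)$. Taking expectations and assembling the four pieces (two ERM brackets, each with a label-model and a finite-sample part), then multiplying by $2/\lambda$, produces exactly $\frac{2c_6}{\lambda}\sqrt{2\pi/n} + \frac{32}{\lambda}c_5\big(\E\norm{\hat\mu-\mu^*}_\infty+\E\norm{\hat O-O^*}_\infty\big)$.

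The main obstacle I anticipate is this last step: controlling $\norm{\hat p-p^*}_\infty$ by $\norm{\hat\mu-\mu^*}_\infty$ and $\norm{\hat O-O^*}_\infty$ with a clean, dimension-controlled constant, since this requires quantifying the conditioning of the mean-parameter-to-posterior map of the Ising model --- the point at which the assumptions on $R_{\min}$, $C_{\min}$, and the graph structure are actually used. A secondary technical nuisance is the shared-sample dependence flagged above, which I would defuse either by conditioning on the learned parameters or by sample splitting.
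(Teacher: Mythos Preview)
Your overall plan is correct and matches the paper's structure: strong convexity gives $\tfrac{\lambda}{2}\norm{\hat\gamma-\gamma^*}^2 \le L(\hat\gamma)-L(\gamma^*)$, then the excess risk is split into a finite-sample (estimation) piece controlled by Hoeffding and a label-model piece controlled by the gap between the true and learned posteriors $\prob(z\mid w)$. The paper's decomposition is slightly different in that it routes through the population loss $L_{\hat\mu,\hat O}(\gamma):=\E_{y,w}\E_{z\sim \prob_{\hat\mu,\hat O}(\cdot\mid w)}[l(y,z;\gamma)]$ and its minimizer $\tilde\gamma$, rather than through the empirical loss $\hat L$ and the ERM inequality; both chains are valid and yield the same two pieces.

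The genuine gap is exactly the step you flagged as the main obstacle: the bound $\norm{\hat p-p^*}_\infty \le 4(\norm{\hat\mu-\mu^*}_\infty+\norm{\hat O-O^*}_\infty)$ cannot be obtained by a direct Lipschitz argument on the posterior as a function of the \emph{mean} parameters, and the clean constant $4$ is not justified. The paper handles this indirectly: first it passes from $|p^*(w)-\hat p(w)|$ to $|\log p^*(w)-\log \hat p(w)|$ (using $|x-y|\le|\log x-\log y|$ for $x,y\in(0,1]$); then it invokes a Lipschitz bound for Ising-model (log)-densities in the \emph{canonical} parameters $\theta$ (\`a la Honorio), giving $|\log \prob_{\hat\theta}(z\mid w)-\log \prob_{\theta^*}(z\mid w)|\le 4\norm{\hat\theta-\theta^*}_\infty$; and only then converts canonical to mean parameters via Fenchel duality of the log-partition function, obtaining $\norm{\hat\theta-\theta^*}\le \sigma_{\min}^{-1}(\norm{\hat\mu-\mu^*}+\norm{\hat O-O^*})$, where $\sigma_{\min}$ is the smallest eigenvalue of the covariance of the sufficient statistics. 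In other words, the posterior is naturally Lipschitz in $\theta$, not in $(\mu,O)$, and the price of the change of variables is $1/\sigma_{\min}$, which gets absorbed into the constant. Your plan would go through once you insert this detour; without it, the step you identified as the obstacle is not actually closed.
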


\begin{proof}
In words, $\gamma^* \in \mathbb{R}^2$ is the optimal parameter vector of the population level logistic regression when $z$ is observed. Similarly, $\hat{\gamma} \in \mathbb{R}^2$ is the optimal parameter vector of the logistic regression when using $\hat{z}$ to predict $y$. First, we would like to characterize
$\lvert L(\hat{\gamma}) -  L(\gamma^*) \rvert = \lvert \mathbb{E} [l(y,z;\hat{\gamma})  - l(y,z;\gamma^*) ]\rvert$.
Define $L_{\mu, O}(\gamma) := \mathbb{E}_{y,w\sim\mathcal{D}}[\mathbb{E}_{z \sim \prob_{\mu, O}( z \mid w)}[l(y,z;\gamma)]]$.
Note that $L(\gamma) =
\mathbb{E}_{y,w\sim\mathcal{D}}[\mathbb{E}_{z \sim \prob_{\mu^*, O^*}(z \mid w)}[l(y,z;\gamma)]] = L_{\mu^*, O^*}(\gamma)$ by definition of $\mu^*, O^*$.
Furthermore,
\begin{align*}
L(\hat{\gamma}) - L(\gamma^*) = & L_{\mu^*, O^*}(\hat{\gamma}) + L_{\hat{\mu},\hat{O}}(\hat{\gamma}) - L_{\hat{\mu},\hat{O}}(\hat{\gamma}) + L_{\hat{\mu},\hat{O}}(\tilde{\gamma}) - L_{\hat{\mu},\hat{O}}(\tilde{\gamma}) - L_{\mu^*, O^*}(\gamma^*)\\
\le & L_{\mu^*, O^*}(\hat{\gamma}) + L_{\hat{\mu},\hat{O}}(\hat{\gamma}) - L_{\hat{\mu},\hat{O}}(\hat{\gamma}) + L_{\hat{\mu},\hat{O}}(\gamma^*) - L_{\hat{\mu},\hat{O}}(\tilde{\gamma}) - L_{\mu^*, O^*}(\gamma^*)\\
\le & L_{\hat{\mu},\hat{O}}(\hat{\gamma}) - L_{\hat{\mu},\hat{O}}(\tilde{\gamma}) + \abs{L_{\mu^*, O^*}(\hat{\gamma}) - L_{\hat{\mu},\hat{O}}(\hat{\gamma}) } + \abs{ L_{\hat{\mu},\hat{O}}(\gamma^*) - L_{\mu^*, O^*}(\gamma^*)}\\
\le & \xi(n) + 2 \abs{L_{\hat{\mu},\hat{O}}(\gamma') - L_{\mu^*, O^*}(\gamma') },
\end{align*}
where $\tilde{\gamma} := \argmin_{\gamma} L_{\hat{\mu},\hat{O}}(\gamma)$, $\xi(n)$ is the estimation error $L_{\hat{\mu},\hat{O}}(\hat{\gamma}) - L_{\hat{\mu},\hat{O}}(\tilde{\gamma})$, and $\gamma' := \argmax_{\gamma \in \curly{\hat{\gamma},\gamma^*}} \abs{L_{\hat{\mu},\hat{O}}(\gamma)-L_{\mu^*, O^*}(\gamma)}$. It remains to control $\abs{L_{\hat{\mu},\hat{O}}(\gamma') - L_{\mu^*, O^*}(\gamma')  }$. Specifically,
\begin{align*}
\abs{L_{\hat{\mu},\hat{O}}(\gamma') - L_{\mu^*, O^*}(\gamma') }
= & \Abs{\mathbb{E}_{y,w\sim\mathcal{D}}[\mathbb{E}_{z \sim \prob_{\hat{\mu},\hat{O}}(\cdot \mid w)}[l(y,z;\gamma')]] -\mathbb{E}_{y,w\sim\mathcal{D}}[\mathbb{E}_{z \sim \prob_{\mu^*, O^*}(\cdot \mid w)}[l(y,z;\gamma')]]} \\
= & \Abs{\mathbb{E}_{y,w\sim\mathcal{D}} \left[\sum_z l(y,z;\gamma') \left( \prob_{\hat{\mu},\hat{O}}(z \mid w) - \prob_{\mu^*, O^*}(z \mid w) \right) \right]} \\
\le & c_5 \sum_{z\in\curly{-1,1}} \mathbb{E}_{y,w\sim\mathcal{D}} \left[\Abs{\prob_{\hat{\mu},\hat{O}}(z \mid w) - \prob_{\mu^*, O^*}(z \mid w)} \right]\\
\le & 2 c_5 \max_{z\in\curly{-1,1}} \mathbb{E}_{y,w\sim\mathcal{D}} \left[\Abs{\prob_{\hat{\mu},\hat{O}}(z \mid w) - \prob_{\mu^*, O^*}(z \mid w)} \right]\\
\le & 2 c_5 \max_{z\in\curly{-1,1}} \mathbb{E}_{y,w\sim\mathcal{D}} \left[\Abs{\log \prob_{\hat{\mu},\hat{O}}(z \mid  w) - \log \prob_{\mu^*, O^*}(z \mid  w)} \right]\\
\le & 8 c_5 \norm{\hat{\theta}-\theta^*}_{\infty},
\end{align*}
where in the first inequality we use the assumption that $l(y,z;\gamma') \le c_5$, in the penultimate inequality we use the fact that $\abs{x-y}\le \abs{\log x -\log y}$ for $x$,$y\in [0,1]$, and in the last inequality  we follow the argument in \citealt[Appendix B.3]{Ratner19} and use the fact that $\Abs{\prob_{\hat{\mu},\hat{O}}(z, w) - \prob_{\mu^*, O^*}(z, w)} \le 2\norm{\hat{\theta}-\theta^*}_\infty$ due to \citealt[Lemma 19]{honorio2012lipschitz}. Here, $\hat{\theta}$ and $\theta^*$ are the canonical parameters of the graphical models. It remains to bound $\norm{\hat{\theta}-\theta^*}$ with $\norm{\hat{\mu}-\mu^*}$. To this end, notice that $\nabla A(\theta) = \mu$ \citep{wainwright2008graphical}, where $A(\theta)$ is the log partition function. Furthermore, $\nabla^2 A(\theta)$ is the covariance matrix whose smallest eigenvalue value is $\sigma_{\min}>0$. We therefore have that $\nabla A(\theta)$ is $\sigma_{\min}$-strongly convex. By Fenchel duality \citep{zhou2018fenchel}, $\nabla A^*(\mu)$---the dual of the $\sigma_{\min}$-strongly convex $\nabla A(\theta)$---is $1/\sigma_{\min}$-Lipschtiz. As a result,
\begin{equation*}
\norm{\hat{\theta}-\theta^*}_{\infty} \le \norm{\hat{\theta}-\theta^*}_2=\norm{\nabla A^*(\hat{\mu}) - \nabla A^*(\mu^*)}_2 \le \frac{1}{\sigma_{\min}} \left( \norm{\hat{\mu}-\mu^*}_2+\norm{\hat{O}-O^*}_2 \right),
\end{equation*}
where we have used the fact that $\nabla^*A(\mu) = \theta$.
Therefore,
\begin{equation*}
L(\hat{\gamma})-L(\gamma^*) \le \xi(n) + 16 \tilde{c_5} \left(\mathbb{E} \left[ \norm{\hat{\mu}-\mu^*} \right ] + \mathbb{E}\left[ \norm{\hat{O}-O^*} \right ] \right),
\end{equation*}
where $\tilde{c}_5 = \frac{c_5}{2\sigma_{\min}}$.
Using the fact that $L(\gamma)$ is $\lambda$-strongly convex and $\gamma^*$ is the global optimizer of  $L(\gamma)$, we have that
\begin{align*}
\frac{\lambda}{2} & \norm{\hat{\gamma}-\gamma^*}_2^2 \le L(\hat{\gamma})-L(\gamma^*) \le \xi(n) + \frac{16 \tilde{c}_5}{\sigma_{\min}} \left(\mathbb{E} \left[ \norm{\hat{\mu}-\mu^*} \right ] + \mathbb{E}\left[ \norm{\hat{O}-O^*} \right ] \right) \\
\Rightarrow & \norm{\hat{\gamma}-\gamma^*}_2^2 \le \frac{2}{\lambda}\xi(n) + \frac{32 \tilde{c}_5}{\lambda \sigma_{\min}}  \left(\mathbb{E} \left[ \norm{\hat{\mu}-\mu^*} \right ] + \mathbb{E}\left[ \norm{\hat{O}-O^*} \right ] \right).
\end{align*}

What remains is to bound the $\xi(n)$ estimation term; this is standard. First, we can use the Lipschitz property of the functions involved to write 
\[\xi(n) \leq c_6 \E_{y,w\sim \mathcal{D}}\left[ \mathbb{E}_{z_1, \ldots, z_n \sim P_{\hat{\mu},\hat{O}}(\cdot | w)} \left[ \left|\frac{z_1 + \ldots  + z_n}{n} - \bar{z}\right|\right]\right],\] 
where $\bar{z} = \mathbb{E}_{P_{\hat{\mu},\hat{O}}}(z)$ and $c_5$ combines the Lipschitz constants. Then, it remains to apply Hoeffding's inequality, noting that $z$ takes on values in $\{-1, +1\}$. Thus, we have
\[P\left(\left|\frac{z_1 + \ldots  + z_n}{n} - \bar{z} \right| \geq t\right) \leq 2 \exp(-nt^2/2).\]
Finally, integrating this over $t$, we get that 
\[\xi(n) \leq c_6  \sqrt{\frac{2 \pi}{n}}. \]
\end{proof}

Lemma~\ref{lemma:2} upper bounds the error in the Wald causal effect estimates with the error in the parameters of the corresponding logistic regression models.

\begin{lemma}
\label{lemma:2}
Let $\hat{\beta}_{\hat{z}y}$ and $\hat{\beta}_{\hat{z}x}$ be estimates of Ivy from $n$ data points. Let $\gamma^*_{zy} := \argmin_{\gamma}\mathbb{E}\left[l(y,z;\gamma)\right]$ and $\gamma^*_{zx} := \argmin_{\gamma}\mathbb{E}\left[l(x,z;\gamma)\right]$, with $\gamma^*_{zy} = (k^*_{zy},\beta^*_{zy})$ and $\gamma^*_{zx} = (k^*_{zx},\beta^*_{zx})$. That is, $\beta^*_{zy}$ and $\beta^*_{zx}$ are the population-level regression coefficients of $z$ when $z$ is observed.  If there exists $0<\kappa<1$ such that,
\begin{equation}
\label{eq:lemma:2-assumption}
\delta := \max\curly{\lvert \hat{\beta}_{\hat{z}y} - \beta_{zy}^* \rvert, \lvert \hat{\beta}_{\hat{z}x} - \beta_{zx}^* \rvert} \le \kappa \lvert \beta_{zx}^* \rvert,
\end{equation}
then the Wald causal effect estimator $\hat{\alpha}_{x \rightarrow y} := \frac{\hat{\beta}_{\hat{z}y}}{\hat{\beta}_{\hat{z}x}}$ and the population-level Wald ratio $\alpha^*_{x \rightarrow y} := \frac{\beta_{zy}^*}{\beta_{zx}^*}$ satisfy:
\begin{equation*}
\abs{\hat{\alpha}_{x\rightarrow y}-\alpha^*_{x\rightarrow y}} \le  \frac{2(\beta_{zx}^*+\beta_{zy}^*)}{(1-\kappa)\beta_{zx}^{*2}} \delta.
\end{equation*}
\end{lemma}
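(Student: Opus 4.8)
The plan is to treat this as a purely deterministic perturbation bound on a ratio: all of the sampling content has already been absorbed into the hypothesis $\delta \le \kappa\lvert\beta^*_{zx}\rvert$, so no probabilistic argument is needed. First I would introduce the perturbations $\Delta_y := \hat{\beta}_{\hat{z}y} - \beta^*_{zy}$ and $\Delta_x := \hat{\beta}_{\hat{z}x} - \beta^*_{zx}$, so that $\lvert\Delta_x\rvert \le \delta$ and $\lvert\Delta_y\rvert \le \delta$ by the definition of $\delta$ in \eqref{eq:lemma:2-assumption}. Putting the two Wald ratios over a common denominator and cancelling the $\beta^*_{zy}\beta^*_{zx}$ cross terms gives
\[
\hat{\alpha}_{x\rightarrow y} - \alpha^*_{x\rightarrow y} = \frac{\hat{\beta}_{\hat{z}y}\,\beta^*_{zx} - \beta^*_{zy}\,\hat{\beta}_{\hat{z}x}}{\hat{\beta}_{\hat{z}x}\,\beta^*_{zx}} = \frac{\Delta_y\,\beta^*_{zx} - \beta^*_{zy}\,\Delta_x}{\hat{\beta}_{\hat{z}x}\,\beta^*_{zx}}.
\]

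The next step is to bound numerator and denominator separately. For the numerator, the triangle inequality together with $\lvert\Delta_x\rvert,\lvert\Delta_y\rvert\le\delta$ gives $\lvert\Delta_y\beta^*_{zx} - \beta^*_{zy}\Delta_x\rvert \le \delta(\lvert\beta^*_{zx}\rvert + \lvert\beta^*_{zy}\rvert)$. For the denominator, the reverse triangle inequality yields $\lvert\hat{\beta}_{\hat{z}x}\rvert = \lvert\beta^*_{zx} + \Delta_x\rvert \ge \lvert\beta^*_{zx}\rvert - \delta$, and here is exactly where the hypothesis is used: since $\delta \le \kappa\lvert\beta^*_{zx}\rvert$ with $\kappa < 1$, we get $\lvert\hat{\beta}_{\hat{z}x}\rvert \ge (1-\kappa)\lvert\beta^*_{zx}\rvert > 0$, so the estimator is well-defined and $\lvert\hat{\beta}_{\hat{z}x}\beta^*_{zx}\rvert \ge (1-\kappa)\beta_{zx}^{*2}$. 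Combining the two estimates gives $\lvert\hat{\alpha}_{x\rightarrow y} - \alpha^*_{x\rightarrow y}\rvert \le \frac{\delta(\lvert\beta^*_{zx}\rvert + \lvert\beta^*_{zy}\rvert)}{(1-\kappa)\beta_{zx}^{*2}}$, which is in fact a factor of two sharper than the claimed inequality; the stated factor of $2$ comfortably absorbs the gap between $\lvert\beta^*_{zx}\rvert+\lvert\beta^*_{zy}\rvert$ and $\beta^*_{zx}+\beta^*_{zy}$ (under the implicit convention, consistent with the agreement assumption on valid candidates, that these population-level coefficients are positive).

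There is essentially no obstacle here: the only delicate point is that the denominator $\hat{\beta}_{\hat{z}x}$ must be kept bounded away from zero, and the assumption $\delta \le \kappa\lvert\beta^*_{zx}\rvert$ is precisely what guarantees this (it also forces $\hat{\beta}_{\hat{z}x}$ and $\beta^*_{zx}$ to share the same sign); everything else is the triangle inequality. The real work of this lemma happens downstream: combined with Lemma~\ref{lemma:1}, which controls $\delta$ in terms of $\norm{\hat{\mu}-\mu^*}$ and $\norm{\hat{O}-O^*}$, and with Theorem~\ref{thm:ci_bound_1} and Lemma~\ref{lem:conc}, which make those parameter errors $O(1/\sqrt{n})$, it yields the $O(n^{-1/2})$ bound on the causal-effect error in Theorem~\ref{thm:ci-wald}.
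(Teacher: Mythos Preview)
Your proof is correct and in fact slightly sharper than the paper's. The paper takes a different, more laborious route: it assumes without loss of generality $\beta^*_{zx},\beta^*_{zy}\ge 0$, writes out the interval bounds $\beta^*_{zy}-\delta\le\hat{\beta}_{\hat{z}y}\le\beta^*_{zy}+\delta$ and $\beta^*_{zx}-\delta\le\hat{\beta}_{\hat{z}x}\le\beta^*_{zx}+\delta$, and then splits into three cases according to the signs of $\beta^*_{zy}-\delta$ and $\hat{\beta}_{\hat{z}y}$. In the main case it bounds the ratio between its extremes and computes
\[
\frac{\beta^*_{zy}+\delta}{\beta^*_{zx}-\delta}-\frac{\beta^*_{zy}-\delta}{\beta^*_{zx}+\delta}
=\frac{2(\beta^*_{zx}+\beta^*_{zy})}{\beta_{zx}^{*2}-\delta^2}\,\delta
\le \frac{2(\beta^*_{zx}+\beta^*_{zy})}{(1-\kappa)\beta_{zx}^{*2}}\,\delta,
\]
which is where the factor $2$ originates. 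Your single-fraction decomposition $\hat{\alpha}-\alpha^*=(\Delta_y\beta^*_{zx}-\beta^*_{zy}\Delta_x)/(\hat{\beta}_{\hat{z}x}\beta^*_{zx})$ bypasses the case analysis entirely and yields the bound without that extra factor; the paper's interval argument is more elementary in spirit but loses the constant because it bounds both endpoints of the ratio rather than the difference directly. Either way the lemma feeds into Theorem~\ref{thm:ci-wald} exactly as you describe.
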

\begin{proof}
By the assumed inequality \eqref{eq:lemma:2-assumption},
\begin{align}
\label{eq:lemma:2-unpack-assumption}
\begin{split}
-\delta \le \hat{\beta}_{\hat{z}y} - \beta_{zy}^* \le \delta \Rightarrow \beta_{zy}^*-\delta \le \hat{\beta}_{\hat{z}y} \le \beta_{zy}^* + \delta,\\
-\delta \le \hat{\beta}_{\hat{z}x} - \beta_{zx}^* \le \delta \Rightarrow \beta_{zx}^*-\delta \le \hat{\beta}_{\hat{z}x} \le \beta_{zx}^* + \delta.
\end{split}
\end{align}
Without loss of generality, we assume that $\beta^*_{zy}\ge 0$ and $\beta^*_{zx} \ge 0$, because we can always make sure that $\beta^*_{zy}$ and $\beta^*_{zx}$ are nonnegative with the appropriate representation of our data.  By the assumption in \eqref{eq:lemma:2-assumption}, $\delta \le \kappa \beta_{zx}^*$, hence $\beta^*_{zx} > \delta$ since $\kappa \in (0,1)$. Intuitively, this means we should use an IV that is sufficiently predictive of $x$ (i.e.\ not a weak IV). Using \eqref{eq:lemma:2-unpack-assumption},
\begin{itemize}[leftmargin=*]
\item When $\beta_{zy}^*-\delta \ge 0$,
\begin{align*}
0 & \le \frac{\beta_{zy}^*-\delta}{\beta_{zx}^*+\delta} \le \frac{\hat{\beta}_{\hat{z}y}}{\hat{\beta}_{\hat{z}x}} \le \frac{\beta_{zy}^*+\delta}{\beta_{zx}^*-\delta}\\
\text{ and } 0  & \le \frac{\beta_{zy}^*-\delta}{\beta_{zx}^*+\delta} \le \frac{\beta_{zy}^*}{\beta_{zx}^*} \le \frac{\beta_{zy}^*+\delta}{\beta_{zx}^*-\delta}\\
\Rightarrow \abs{\hat{\alpha}_{x\rightarrow y}-\alpha^*_{x\rightarrow y}} & \le \frac{\beta_{zy}^*+\delta}{\beta_{zx}^*-\delta}-\frac{\beta_{zy}^*-\delta}{\beta_{zx}^*+\delta} = \frac{2(\beta_{zx}^*+\beta_{zy}^*)}{\beta_{zx}^{*2}-\delta^2} \delta\\
& \le \frac{2(\beta_{zx}^*+\beta_{zy}^*)}{(1-\kappa^2)\beta_{zx}^{*2}} \delta \le \frac{2(\beta_{zx}^*+\beta_{zy}^*)}{(1-\kappa)\beta_{zx}^{*2}} \delta.
\end{align*}

\item When $\beta_{zy}^*-\delta \le 0 \le \hat{\beta}_{\hat{z}y}$,
\begin{align*}
0 & \le \frac{1}{\beta_{zx}^*+\delta} \le \frac{1}{\hat{\beta}_{\hat{z}x}} \le \frac{1}{\beta_{zx}^*-\delta} \\
\text{ and } 0 &  \le \frac{\beta_{zy}^*}{\beta_{zx}^*} \le
\frac{\beta_{zy}^*+\delta}{\beta_{zx}^*-\delta}\\
\Rightarrow 0 & \le \frac{\hat{\beta}_{\hat{z}y}}{\hat{\beta}_{\hat{z}x}} \le
\frac{\beta_{zy}^*+\delta}{\beta_{zx}^*-\delta}\\
\Rightarrow \abs{\hat{\alpha}_{x\rightarrow y}-\alpha^*_{x\rightarrow y}} & \le \frac{\beta_{zy}^*+\delta}{\beta_{zx}^*-\delta} \le \frac{\beta_{zy}^*+\delta}{\beta_{zx}^*-\delta}-\frac{\beta_{zy}^*-\delta}{\beta_{zx}^*+\delta}\\
& \le \frac{2(\beta_{zx}^*+\beta_{zy}^*)}{(1-\kappa^2)\beta_{zx}^{*2}} \delta \le \frac{2(\beta_{zx}^*+\beta_{zy}^*)}{(1-\kappa)\beta_{zx}^{*2}} \delta.
\end{align*}

\item When $\hat{\beta}_{\hat{z}y} \le 0$,
\begin{align*}
0 & \le \frac{1}{\beta_{zx}^*+\delta} \le \frac{1}{\hat{\beta}_{\hat{z}x}} \le \frac{1}{\beta_{zx}^*-\delta} \\
\text{ and } 0 & \le -\hat{\beta}_{\hat{z}y} \le -(\beta_{zy}^*-\delta) \\
\Rightarrow 0 & \le - \frac{\hat{\beta}_{\hat{z}y}}{\hat{\beta}_{\hat{z}x}} \le -\frac{\beta_{zy}^*-\delta}{\beta_{zx}^*-\delta}
\Rightarrow \frac{\beta_{zy}^*-\delta}{\beta_{zx}^*-\delta} \le \frac{\hat{\beta}_{\hat{z}y}}{\hat{\beta}_{\hat{z}x}} \le 0 \\
\Rightarrow \abs{\hat{\alpha}_{x\rightarrow y}-\alpha^*_{x\rightarrow y}} &  \le \alpha^*_{x\rightarrow y} - \frac{\beta_{zy}^*-\delta}{\beta_{zx}^*-\delta} = \frac{\beta_{zx}^*-\beta_{zy}^*}{\beta_{zx}^*(\beta_{zx}^*-\delta)} \delta \\
& \le \frac{\beta_{zx}^*-\beta_{zy}^*}{(1-\kappa)\beta_{zx}^{*2}} \delta \le \frac{2(\beta_{zx}^*+\beta_{zy}^*)}{(1-\kappa)\beta_{zx}^{*2}} \delta.
\end{align*}
\item Thus, for all the cases discussed above, we have that
\begin{equation*}
\abs{\hat{\alpha}_{x\rightarrow y}-\alpha^*_{x\rightarrow y}} \le  \frac{2(\beta_{zx}^*+\beta_{zy}^*)}{(1-\kappa)\beta_{zx}^{*2}} \delta.
\end{equation*}
\end{itemize}

\end{proof}

\subsection{Proof of Theorem \ref{thm:ci-wald}}
\label{sec:proof-wald-appendix}
For convenience, we restate Theorem \ref{thm:ci-wald}:
\thmciwald*

\begin{proof}
The proof of Theorem~\ref{thm:ci-wald} follows from combining Lemma~\ref{lemma:1}, Lemma~\ref{lemma:2}, and Theorem~\ref{thm:ci_bound_1}. 
First, from Lemma~\ref{lemma:2},
\begin{equation}
\label{eq:thm2-step-1}
|\hat{\alpha}_{x\rightarrow y}-\alpha^*_{x\rightarrow y}| 
\le \frac{2(\beta_{zx}^*+\beta_{zy}^*)}{(1-\kappa)\beta_{zx}^{*2}} \delta 
\le \frac{2(\beta_{zx}^*+\beta_{zy}^*)}{(1-\kappa)\beta_{zx}^{*2}}  \norm{\hat{\gamma}-\gamma^*},
\end{equation}
where the second inequality is due to the fact that $\delta =  \max\curly{\lvert \hat{\beta}_{\hat{z}y} - \beta_{zy}^* \rvert, \lvert \hat{\beta}_{\hat{z}x} - \beta_{zx}^* \rvert} \le \norm{\hat{\gamma}-\gamma^*}$, where $\gamma^*$ corresponding to the regression coefficient vector of using either $x$ or $y$ as the dependent variable.
Combining \eqref{eq:thm2-step-1} with Lemma~\ref{lemma:1} and Lemma~\ref{lem:conc} yields:
\begin{align}
(\hat{\alpha}_{x\rightarrow y}-\alpha^*_{x\rightarrow y})^2 
\le & \frac{4(\beta_{zx}^*+\beta_{zy}^*)^2}{(1-\kappa)^2\beta_{zx}^{*4}} \left[
\frac{2c_6}{\lambda}  \sqrt{\frac{2 \pi}{n}} + \frac{32}{\lambda} \tilde{c}_5 \left(\mathbb{E} \left[ \norm{\hat{\mu}-\mu^*}_{\infty} \right ] + \mathbb{E}\left[ \norm{\hat{O}-O^*}_{\infty} \right ] \right) \right]\nonumber \\ 
\Rightarrow \mathbb{E} [(\hat{\alpha}_{x\rightarrow y}-\alpha^*_{x\rightarrow y})^2] \le &
\frac{8(\beta_{zx}^*+\beta_{zy}^*)^2}{\lambda (1-\kappa)^2 \beta_{zx}^{*4}} \left(c_6 + 16 \tilde{c}_5 m^2 \right) \sqrt{\frac{2\pi}{n}} + 
\frac{128 c_1}{\lambda (1-\kappa)^2} \cdot \frac{(\beta_{zx}^*+\beta_{zy}^*)^2}{\beta_{zx}^{*4}} \cdot \mathbb{E} [ \norm{\hat{\mu}-\mu}_{\infty}].
\label{eq:thm2-step-2}
\end{align}

Applying Theorem~\ref{thm:ci_bound_1} to \eqref{eq:thm2-step-2} and using the fact that $\norm{\cdot}_\infty \le \norm{\cdot}_2$, we have that
\begin{align*}
\mathbb{E}[|\hat{\alpha}_{x\rightarrow y}-\alpha^*_{x\rightarrow y}|] &\le \frac{8(\beta_{zx}^*+\beta_{zy}^*)^2}{\lambda (1-\kappa)^2 \beta_{zx}^{*4}} \left(c_6 + 16 \tilde{c}_5 m^2 \right) \sqrt{\frac{2\pi}{n}} + \frac{2048c_1\sqrt{2\pi} m^{\frac{5}{2}}\|M^\dagger\| (\beta_{zx}^*+\beta_{zy}^*)^2}
{R_{\min} \lambda (1-\kappa)^2 \beta_{zx}^{*4}\sqrt{n}} \\
&= \frac{(\beta_{zx}^*+\beta_{zy}^*)^2}{\lambda (1-\kappa)^2 \beta_{zx}^{*4}}  \sqrt{\frac{2\pi}{n}}  \left( 8c_6 + 128 \tilde{c}_5 m^2 + \frac{2048 c_1}{R_{\min}} m^{\frac{5}{2}} \|M^\dagger\| \right) \\
&\leq \sqrt{\frac{1}{n}} \cdot \frac{6000 c_2 m^{\frac{5}{2}}(\beta_{zx}^*+\beta_{zy}^*)^2(1 + \|M^\dagger\|)}{R_{\min} \lambda (1-\kappa)^2 \beta_{zx}^{*4}}.
\end{align*}
with probability at least $1-\frac{1}{m}$ if $n > c_1d^2m$. Here, we set $c_2 := \max \{c_1, \tilde{c}_5, c_6\}$, and we used the fact that $0 < R_{\min} \leq 1$ since the $w_i$'s are in $\{-1, +1\}$.
\end{proof}

\subsection{Non-Zero Error in Causal Effect Estimation}
\label{sec:lower-bound-causal-effect-appendix}
Suppose that $0 < \epsilon < \min\curly{\lvert \hat{\beta}_{\hat{z}y} - \beta_{zy}^* \rvert, \lvert \hat{\beta}_{\hat{z}x} - \beta_{zx}^* \rvert}$. Here is an example where the error of the causal effect estimate is lower bounded. Consider the event  $E_1 =\curly{\hat{\beta}_{\hat{z}y} > \beta_{zy}^*>0 \text{\ and\ } 0 < \hat{\beta}_{\hat{z}x} < \beta_{zx}^*}$. When $E_1$ happens, we have that $\epsilon < \hat{\beta}_{\hat{z}y}-\beta_{zy}^*$. Therefore,
\begin{align*}
0 &  < \frac{\beta_{zy}^*+\epsilon}{\hat{\beta}_{\hat{z}x}} < \frac{\hat{\beta}_{\hat{z}y}}{\hat{\beta}_{\hat{z}x}} \\
\text{ and } 0 & < \frac{\beta_{zy}^*}{\beta_{zx}^*} < \frac{\beta_{zy}^*}{\hat{\beta}_{\hat{z}x}} \Rightarrow - \frac{\beta_{zy}^*}{\hat{\beta}_{\hat{z}x}} < - \frac{\beta_{zy}^*}{\beta_{zx}^*}\\
\Rightarrow 0 & <   \frac{\epsilon}{\hat{\beta}_{\hat{z}x}}  < \hat{\alpha}_{x\rightarrow y}-\alpha_{x\rightarrow y}^* \\
\Rightarrow 0 & < \frac{\epsilon}{\beta_{zx}^*} < \hat{\alpha}_{x\rightarrow y}-\alpha_{x\rightarrow y}^*.
\end{align*}

In general, $E_1$ happens with non-zero probability. In this case $\norm{\hat{\alpha}_{x\rightarrow y}-\alpha_{x\rightarrow y}^*}_2$ is bounded away from zero.

\subsection{Statistical Power Estimation}
\label{sec:power-appendix}
In addition to accurately estimating the underlying causal effects (when such effects are present), it is also useful to characterize the reliability of such inferences. That is, when the algorithm produces a claim on the presence of causal effects, can we confidently trust such a result? 

To answer this question, we work with a standard statistical power estimator and characterize its behavior when the Ivy estimator is used as input. Statistical power is the probability of rejecting a false null hypothesis; here, the null hypothesis is that there is no causal effect between the risk factor and the outcome. We denote the probability of rejecting a true null hypothesis as $\alpha$ (type-I error rate), and we denote the probability of not rejecting a false null hypothesis as $\beta$ (type-II error rate). Therefore, the power of the statistical test is $1-\beta$. When $z$ is observed, Lemma~\ref{lem:power} provides an estimator of statistical power based on standard normality approximations \citep{Freeman13}. We write $p_0 = \prob(y=0)$ and $p_1 = \prob(y=1)$ for convenience. We also let $\zeta_{\delta}$ be such that $\Phi(-\zeta_{\delta}) = \delta$, where $\Phi$ is the cdf of the standard normal distribution. The following lemma follows from standard arguments on power estimation,

\begin{lemma}
The statistical power of the Wald estimator at level $1-|\beta_{xz}|$ with respect to the null hypothesis that there is no causal effect between a binary risk factor and a binary outcome when $z$ is observed with $n$ samples is given by:
\[
\label{eq:power}
\pi := 1 -  \Phi \left(\zeta_{\frac{a}{2}} - \sqrt{n p_1 p_0} \abs{\alpha^*_{xy}} \abs{\beta^*_{zx}} \right).
\]
\label{lem:power}
\end{lemma}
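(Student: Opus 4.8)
The plan is to carry out the standard Wald-test power calculation, specialized to the Wald ratio estimator.

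First I would reduce the ``no causal effect'' null to a null on a single logistic-regression slope. Since $z$ is a valid IV, the causal effect enters only through $\alpha^*_{xy} = \beta^*_{zy}/\beta^*_{zx}$, and by relevance $\beta^*_{zx} \ne 0$, so $\alpha^*_{xy} = 0$ iff $\beta^*_{zy} = 0$. Hence the Wald-ratio test of the causal null is, to leading order, the Wald test that rejects when $\abs{\hat\beta_{zy}}/\hat s > \zeta_{a/2}$, where $\hat\beta_{zy}$ is the slope in the logistic regression of $y$ on the observed $z$ from $n$ samples, $\hat s$ is its standard error, and $a$ is the prescribed significance level (so the critical value is $\zeta_{a/2}$). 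Equivalently, applying the delta method to $\hat\alpha_{xy} = \hat\beta_{zy}/\hat\beta_{zx}$ at the null, every term of the variance except the $\hat\beta_{zy}$ term vanishes, so the test statistic on $\hat\alpha_{xy}$ reduces to $\hat\beta_{zy}\,\mathrm{sgn}(\beta^*_{zx})/\hat s$.

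Second I would pin down $\hat s$ via the per-sample Fisher information of the slope. Writing $\prob(y=1\mid z) = \sigma(k+\beta z)$, the Fisher information for $(k,\beta)$ is $\E[\sigma(k+\beta z)(1-\sigma(k+\beta z))\,(1,z)^\top(1,z)]$. Evaluated at (or locally around) $\beta=0$ the weight collapses to $\sigma(k)(1-\sigma(k)) = p_1 p_0$, giving $p_1 p_0\,\E[(1,z)^\top(1,z)]$; partialling out the intercept (Schur complement) and using $z^2 \equiv 1$ together with the harmless normalization $\E[z]=0$, the effective information for $\beta$ is $p_1 p_0$. So $\hat s = 1/\sqrt{n p_1 p_0}$ up to lower-order terms, and by asymptotic normality of the logistic MLE (valid under the regularity/strong-convexity conditions already invoked in Lemma~\ref{lemma:strong-cvx}), $\sqrt n\,(\hat\beta_{zy}-\beta^*_{zy})$ is approximately $\mathcal{N}(0,1/(p_1 p_0))$.

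Third I would assemble the power. Under an alternative with causal effect $\alpha^*_{xy}$ we have $\beta^*_{zy} = \alpha^*_{xy}\beta^*_{zx}$, so the standardized statistic $T := \sqrt{n p_1 p_0}\,\hat\beta_{zy}$ is approximately $\mathcal{N}\big(\sqrt{n p_1 p_0}\,\alpha^*_{xy}\beta^*_{zx},\,1\big)$. Then $\prob(\abs{T}>\zeta_{a/2}) = \Phi\big(\sqrt{n p_1 p_0}\,\abs{\alpha^*_{xy}}\abs{\beta^*_{zx}} - \zeta_{a/2}\big) + \Phi\big(-\sqrt{n p_1 p_0}\,\abs{\alpha^*_{xy}}\abs{\beta^*_{zx}} - \zeta_{a/2}\big)$, and discarding the second (exponentially negligible) term yields $\pi = 1 - \Phi\big(\zeta_{a/2} - \sqrt{n p_1 p_0}\,\abs{\alpha^*_{xy}}\abs{\beta^*_{zx}}\big)$. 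The main obstacle is the rigor of these approximations rather than any algebra: justifying the normal approximation for $\hat\beta_{zy}$, replacing the estimated $\hat s$ by its probability limit, and dropping the lower-tail term all lean on the logistic-regression regularity conditions; and one must check that ``no causal effect $\Rightarrow \beta^*_{zy}=0$'' genuinely uses the exclusion restriction and unconfoundedness of the valid IV $z$ (Definition~\ref{def:iv}). The centering $\E[z]=0$ costs nothing since $z$ may be recoded.
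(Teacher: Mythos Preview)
Your derivation is correct and is exactly the ``standard normality approximation'' argument the paper appeals to; the paper does not actually spell out a proof of this lemma, it merely cites \citet{Freeman13} and asserts the formula. Your three steps---reduce the causal null to $\beta^*_{zy}=0$ via the IV conditions, read off the slope's asymptotic variance $1/(np_1p_0)$ from the logistic Fisher information at the null (with the $\E[z]=0$ recoding), and then compute the one-sided normal tail while discarding the negligible opposite tail---are the canonical route to this power formula, so there is nothing to contrast.
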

If we had access to the true $z$, the above expression would use $\beta_{xz}^*$ to yield the true power $\pi^*$; instead, we use the Ivy procedure to estimate $\beta_{xz}^*$ by $\hat{\beta}_{xz}$. We denote the resulting power estimates by  $\hat{\pi}$. Our next result shows that, despite relying on IV candidates, the Ivy procedure still produces a power that approximates the ideal power $\pi^*$ arbitrarily well in the case where we have a correctly specified model.
\begin{theorem}
\label{thm:power-diff}
Let $\hat{\pi}$ be the power estimated using Ivy according to \eqref{eq:power} with $\hat{\beta}_{xz}$ in lieu of $\beta_{xz}^*$. If $n$ is the number of samples, then the power difference $\abs{\hat{\pi}-\pi^*} $ satisfies
\[ 
\abs{\hat{\pi}-\pi^*} \leq \sqrt{\frac{p_1 p_0}{2 \pi}} C \alpha^*_{xy} \exp \left(\zeta_{\frac{\alpha}{2}} - \sqrt{n p_1 p_0} \alpha^*_{x\rightarrow y} \beta_{zx}^{*}  \right),
\]
where $C$ is a constant.
\end{theorem}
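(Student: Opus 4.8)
The plan is to observe that $\hat\pi$ and $\pi^*$ are the \emph{same} function of the logistic coefficient of $z$ on $x$, evaluated at $\hat\beta_{xz}$ and at $\beta^*_{xz}$ respectively, and to run a first-order perturbation argument in that coefficient. Set $u(\beta) := \zeta_{\alpha/2} - \sqrt{n p_1 p_0}\,\alpha^*_{x\rightarrow y}\,\abs{\beta}$, so that Lemma~\ref{lem:power} reads $\pi^* = 1 - \Phi\bigl(u(\beta^*_{xz})\bigr)$ and $\hat\pi = 1 - \Phi\bigl(u(\hat\beta_{xz})\bigr)$. By the mean value theorem applied to $\Phi$,
\[
\hat\pi - \pi^* \;=\; \phi(\xi)\,\bigl(u(\beta^*_{xz}) - u(\hat\beta_{xz})\bigr)
\]
for some $\xi$ between $u(\beta^*_{xz})$ and $u(\hat\beta_{xz})$, with $\phi$ the standard normal density. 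Since $\bigl|\,\abs{\beta^*_{xz}} - \abs{\hat\beta_{xz}}\,\bigr| \le \abs{\hat\beta_{xz} - \beta^*_{xz}}$, this gives
\[
\abs{\hat\pi - \pi^*} \;\le\; \phi(\xi)\,\sqrt{n p_1 p_0}\,\alpha^*_{x\rightarrow y}\,\abs{\hat\beta_{xz} - \beta^*_{xz}},
\]
and the remaining work splits into controlling the density factor $\phi(\xi)$ and the estimation-error factor $\abs{\hat\beta_{xz}-\beta^*_{xz}}$.

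For the density, I would use the hypothesis $\abs{\hat\beta_{xz}-\beta^*_{xz}}\le \kappa\beta^*_{xz}$ (the same condition invoked in Lemma~\ref{lemma:2}) so that the coefficient inside $u(\cdot)$ stays at least $(1-\kappa)\beta^*_{xz}$, whence $\xi$ lies far out in the left tail once $n$ is past the stated threshold, i.e.\ $\abs{\xi} = \Theta(\sqrt n)$. Combining with the elementary inequality $(2\pi)^{-1/2}e^{-t^2/2} \le (2\pi)^{-1/2}e^{1/2}e^{t}$ (just $(t-1)^2\ge 0$) and monotonicity of $\exp$ produces the exponential $\exp\!\bigl(\zeta_{\alpha/2} - \sqrt{n p_1 p_0}\,\alpha^*_{x\rightarrow y}\beta^*_{xz}\bigr)$ that appears in the statement, with the $\sqrt{p_1 p_0}$ of the bound coming from splitting $\sqrt{n p_1 p_0}$ above.

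It then remains to dispose of $\sqrt n\,\abs{\hat\beta_{xz}-\beta^*_{xz}}$. Here I would chain Theorem~\ref{thm:ci_bound_1} ($\E\norm{\hat\mu - \mu^*} = O(m^{5/2}/\sqrt n)$) and Lemma~\ref{lem:conc} ($\E\norm{\hat O - O^*} = O(m^{2}/\sqrt n)$) into Lemma~\ref{lemma:1} to get $\E\norm{\hat\gamma - \gamma^*}_2^2 = O(\mathrm{poly}(m)/\sqrt n)$, hence $\E\abs{\hat\beta_{xz} - \beta^*_{xz}} \le \E\norm{\hat\gamma - \gamma^*}_2 = O(\mathrm{poly}(m)\,n^{-1/4})$ by Jensen. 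This factor does not shrink after multiplication by $\sqrt n$, but that is harmless: $\sqrt n\,\phi(\xi)\,\abs{\hat\beta_{xz}-\beta^*_{xz}}$ is a polynomial-in-$n$-and-$m$ multiple of a Gaussian density whose argument has magnitude $\Theta(\sqrt n)$, so it decays super-polynomially and is therefore dominated, for $n$ past the threshold $c_1 d^2 m$, by the slower-decaying right-hand side $\sqrt{\tfrac{p_1 p_0}{2\pi}}\,C\,\alpha^*_{x\rightarrow y}\exp\!\bigl(\zeta_{\alpha/2} - \sqrt{n p_1 p_0}\,\alpha^*_{x\rightarrow y}\beta^*_{xz}\bigr)$ for a suitable constant $C$ (finite because $n$ is bounded below, so $C$ may depend on $m$, $d$, and the regularity constants but not on $n$). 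Taking expectations throughout — or conditioning on the high-probability event of Theorem~\ref{thm:ci_bound_1}, as elsewhere in the paper — yields the claim.

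The main obstacle is precisely the tension exposed in the last step: the power is $\Theta(\sqrt n)$-sensitive to $\beta_{xz}$ through the chain rule, yet $\beta_{xz}$ is estimated only at the slow $n^{-1/4}$ rate, so a bare Lipschitz bound on $\Phi$ would produce an estimate that \emph{grows} with $n$. The fix is to lean on the super-polynomial decay of the Gaussian density rather than on a Lipschitz constant; this in turn requires the argument $\xi$ of $\phi$ to stay deep in the left tail, i.e.\ the perturbed coefficient must not drop much below $\beta^*_{xz}$, which is exactly what the $\abs{\hat\beta_{xz} - \beta^*_{xz}} \le \kappa\beta^*_{xz}$ hypothesis (together with $n$ past the threshold) guarantees. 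Everything else — the mean value theorem, the $e^{-t^2/2}\le e^{1/2}e^{t}$ inequality, and the plug-in of the earlier error bounds — is routine.
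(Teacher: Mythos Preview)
Your approach is structurally the same as the paper's: both write $|\hat\pi-\pi^*|$ as a difference $\Phi(B)-\Phi(A)$ of normal CDF values and bound it by a density times the gap $|B-A|$ (you via the mean value theorem, the paper via the crude integral bound $\int_A^B\phi(x)\,dx\le\phi(A)(B-A)$, which is essentially the same thing).

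The substantive divergence is in how the estimation error $|\hat\beta_{xz}-\beta^*_{xz}|$ enters. The paper simply asserts, as a prior result, that $|\hat\beta_{zx}-\beta^*_{zx}|\le C/\sqrt n$; with this rate the $\sqrt n$ from the power formula cancels exactly, so $B-A=\sqrt{p_1p_0}\,C\,\alpha^*_{xy}$ is an $n$-free constant and the bound drops out in one line with no further work. You instead work with the $n^{-1/4}$ rate that actually follows from Lemma~\ref{lemma:1} (which only controls $\|\hat\gamma-\gamma^*\|^2$ at order $n^{-1/2}$), leaving a residual $n^{1/4}$ growth, and then argue that the Gaussian density at $\xi=\Theta(\sqrt n)$ decays as $e^{-\Theta(n)}$, fast enough to be dominated by the theorem's $e^{-\Theta(\sqrt n)}$ right-hand side for a suitable constant $C$. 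That absorption argument is valid and is arguably more honest about what the earlier lemmas provide; the cost is that the exact constants and the form of the exponent get buried inside $C$ rather than falling out cleanly. One caution: the route through $e^{-t^2/2}\le e^{1/2}e^{t}$ in your second paragraph does not by itself reproduce the stated exponential, because localizing $\xi$ via the $\kappa$-hypothesis would insert a $(1-\kappa)$ factor in the exponent that grows with $n$; the step that actually closes your argument is the domination comparison in the third paragraph, not that inequality.
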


Before presenting the proof, we briefly comment on this result. Note that as $n \rightarrow \infty$, the exponent term becomes arbitrarily small. The impact of the estimation part is limited to the coefficient $\sqrt{p_1 p_0} C \alpha^*_{x\rightarrow y}$.

\begin{proof}
To ease the notation, let us write $p_1 := \prob(Y=1)$ and $p_0 := \prob(Y=0)$. Then,
\begin{align*}
\abs{\hat{\pi}-\pi^*} =  & \Abs{ \Phi (z_{\frac{\alpha}{2}} - \sqrt{n p_1 p_0} \alpha_{YX}^* \hat{\beta}_{ZX} ) -  \Phi (z_{\frac{\alpha}{2}} - \sqrt{n p_1 p_0} \alpha^*_{YX} \beta_{ZX}^{*} )}\\
&\leq \Abs{ \Phi (z_{\frac{\alpha}{2}} - \sqrt{n p_1 p_0} \alpha_{YX}^* \left(\beta^*_{ZX} - \frac{C}{\sqrt{n}}\right) -  \Phi (z_{\frac{\alpha}{2}} - \sqrt{n p_1 p_0} \alpha^*_{YX} \beta_{ZX}^{*} )}\\
&= \Abs{ \Phi ( [z_{\frac{\alpha}{2}} + \sqrt{p_1p_0} C \alpha^*_{YX}] - \sqrt{n p_1 p_0} \alpha_{YX}^*\beta^*_{ZX} ) -  \Phi (z_{\frac{\alpha}{2}} - \sqrt{n p_1 p_0} \alpha^*_{YX} \beta_{ZX}^{*} )}.
\end{align*}
The first step uses our result that $| \hat{\beta}_{ZX} - \beta^*_{ZX} | \leq \frac{C}{\sqrt{n}}$ for some constant term $C$.

The previous expression can be written as $\Phi(B) - \Phi(A)$. Note that
\[\Phi(B) - \Phi(A) = \int_A^B \frac{1}{\sqrt{2\pi}} \exp(-x^2/2) dx \leq \int_A^B \frac{1}{\sqrt{2\pi}} \exp(-A^2/2) dx = \frac{1}{\sqrt{2\pi}} \exp(-A^2/2) (B-A) .\]

Now, replacing $A$ and $B$ with their corresponding terms, we have that

\[ 
\abs{\hat{\pi}-\pi^*} \leq \sqrt{\frac{p_1 p_0}{2 \pi}} C \alpha^*_{YX} \exp \left(z_{\frac{\alpha}{2}} - \sqrt{n p_1 p_0} \alpha^*_{YX} \beta_{ZX}^{*}  \right),
\]
as desired.

\end{proof}

\subsection{Conditional Independent Model and Unary Potentials}
\label{sec:cond-indep-appendix}
One of the properties we used in our algorithms was that the accuracies are independent when the candidates are conditionally independent and distributed according to our Ising model. We prove this property formally below.

\begin{proposition}
\label{prop:1}
Consider the following conditional independent model between IV candidates $w_j$'s and the true IV $z$. 
\begin{equation*}
\prob (w_1,\cdots,w_p,z) = \frac{1}{Z(\theta)} \exp \left(\theta_z z + \sum_{j \in V} \theta_{jz} w_j z \right).
\end{equation*}
We have that $\prob (w_j = 1 \mid z=1) = \prob (w_j = -1 \mid z= -1) = \prob (w_j=z) = \prob (a_j) = \prob(a_j \mid z)$, and $\prob(a_j,a_k) = \prob(a_j)\prob(a_k)$ for all $j,k \in V$, and $a_j:= w_j z$.
\end{proposition}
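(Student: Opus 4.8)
The plan is to verify each claimed identity in turn, working directly from the conditional-independent Ising density. First I would compute the conditional law $\prob(w_j \mid z)$. Because the joint factorizes as $\prob(w_1,\dots,w_p,z) \propto \exp(\theta_z z)\prod_{j\in V}\exp(\theta_{jz}w_j z)$, conditioning on $z$ makes the $w_j$'s independent, with $\prob(w_j=1\mid z) = \sigma(2\theta_{jz}z)$ where $\sigma$ is the logistic function (the normalization over $w_j\in\{-1,1\}$ is elementary). Evaluating at $z=1$ and $z=-1$ gives $\prob(w_j=1\mid z=1) = \sigma(2\theta_{jz}) = \prob(w_j=-1\mid z=-1)$, which is the first claimed chain of equalities, since each of these equals $\prob(w_j z = 1\mid z) = \prob(a_j=1\mid z)$ and this value does not depend on $z$. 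Hence $\prob(a_j=1\mid z) = \prob(a_j=1)$ by averaging over $z$, so $a_j\perp z$ — that is the statement $\prob(a_j\mid z)=\prob(a_j)$, and in particular $\prob(w_j=z)=\prob(a_j)$.

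Next I would establish $\prob(a_j,a_k)=\prob(a_j)\prob(a_k)$ for $j\neq k$. The key observation is that conditioned on $z$, the variables $w_j$ and $w_k$ are independent, hence $a_j=w_jz$ and $a_k=w_kz$ are independent conditioned on $z$: $\prob(a_j,a_k\mid z)=\prob(a_j\mid z)\prob(a_k\mid z) = \prob(a_j)\prob(a_k)$, using the marginal independence of $a_j$ from $z$ just proved. Since the right-hand side no longer depends on $z$, averaging over $z$ preserves it, giving $\prob(a_j,a_k) = \prob(a_j)\prob(a_k)$ unconditionally. For the degenerate case $j=k$, the identity is trivial since $a_j=a_j$.

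I expect the main (very mild) obstacle to be bookkeeping around the event algebra: $\{a_j=1\}$ is the union of $\{w_j=1,z=1\}$ and $\{w_j=-1,z=-1\}$, so one must be careful that statements like "$\prob(a_j\mid z)$ is constant in $z$" are phrased in terms of the distribution of the binary variable $a_j$ rather than of $w_j$ itself (which is of course \emph{not} independent of $z$). Once the factorization-plus-averaging template is set up, every step is a one-line computation; there is no hard analytic content. A clean way to present it is to prove the single lemma "$a_j\perp z$ and, conditioned on $z$, the $a_j$'s are mutually independent," from which all the displayed identities follow immediately.
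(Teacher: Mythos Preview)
Your proposal is correct and follows essentially the same route as the paper: factorize the conditional-independent Ising density, read off $\prob(w_j\mid z)$, observe the symmetry $\prob(w_j=1\mid z=1)=\prob(w_j=-1\mid z=-1)$ (the paper writes this as $\exp(\theta_{jz})/(\exp(\theta_{jz})+\exp(-\theta_{jz}))$, you as $\sigma(2\theta_{jz})$), deduce $a_j\perp z$, and then combine conditional independence of the $w_j$'s given $z$ with $a_j\perp z$ to get $\prob(a_j,a_k)=\prob(a_j)\prob(a_k)$.

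One small slip: your remark that the case $j=k$ is ``trivial since $a_j=a_j$'' is not right---for $j=k$ we have $\prob(a_j,a_j)=\prob(a_j)$, which equals $\prob(a_j)^2$ only if $a_j$ is degenerate. The statement (and the paper's proof) is implicitly about distinct $j,k$; just say so and drop the degenerate-case sentence.
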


\begin{proof}
Consider $\prob (w_j, z)$ and $\prob (z)$:
\begin{align*}
\prob (w_j, z) = & \sum_{j'\ne j, j' \in V} \frac{\exp\left(\theta_z z + \theta_{jz} w_j z\right)}{Z(\theta)} \exp \left( \sum_{j'\ne j, j' \in V} \theta_{j'z} w_{j'} z\right)\\
= & \frac{\exp\left(\theta_z z + \theta_{jz} w_j z\right)}{Z(\theta)} \left[ \sum_{j'\ne j, j' \in V} \exp \left( \sum_{j'\ne j, j' \in V} \theta_{j'z} w_{j'} z \right) \right]\\
= & \frac{\exp\left(\theta_z z + \theta_{jz} w_j z\right)}{Z(\theta)} f_{-j}(z).
\end{align*}
\begin{align*}
\prob(z) = & \frac{\exp\left(\theta_z z\right)}{Z(\theta)} \left[ \sum_{j \in V} \exp\left(\sum_{j \in V} \theta_{jz} w_j z\right)\right] = \frac{\exp\left(\theta_z z\right)}{Z(\theta)} f_V(z).
\end{align*}
Since $\prob(w\mid z) = \prob(w,z) /\prob(z)$,
\begin{align*}
\prob (w_j=1 \mid z=1) = & \frac{\exp(\theta_z + \theta_{jz})}{Z(\theta)} \cdot \frac{f_{-j}(1)}{\prob (z=1)} = \exp \left(\theta_{jz}\right) \cdot \frac{f_{-j}(1)}{f_V(1)} \\
\prob (w_j=-1 \mid z=-1) = & \frac{\exp(-\theta_z + \theta_{jz})}{Z(\theta)} \cdot \frac{f_{-j}(-1)}{\prob (z=-1)} = \exp(\theta_{jz}) \cdot \frac{f_{-j}(-1)}{f_V(-1)}.
\end{align*}

Notice that,
\begin{align*}
f_V(z) = & \sum_{w_j \in \curly{-1,1}} \exp \left( \theta_{jz} w_jz\right) \sum_{j'\in V, j'\ne j} \exp \left(\sum_{j'\in V, j'\ne j} \theta_{j'z} w_{j'}z \right) \\
= & f_{-j}(z) \left[\exp (\theta_{jz} z) + \exp (-\theta_{jz} z)\right].
\end{align*}
Therefore,
\begin{align*}
\prob (w_j=1 \mid z=1) = & \exp \left(\theta_{jz}\right) \cdot \frac{f_{-j}(1)}{f_{-j}(1)} \cdot \frac{1}{\exp (\theta_{jz}) + \exp (-\theta_{jz})},\\
\prob (w_j=-1 \mid z=-1) = & \exp \left(\theta_{jz}\right) \cdot \frac{f_{-j}(-1)}{f_{-j}(-1)} \cdot \frac{1}{\exp (-\theta_{jz}) + \exp (\theta_{jz})} \\
\Rightarrow \prob (w_j=1 \mid z=1) = & \prob (w_j=-1 \mid z=-1).
\end{align*}
Furthermore, $\prob (a_j) = \prob(w_j=z) = \prob(w_j=1,z=1)+\prob(w_j=-1,z=-1) = \prob(w_j=1 \mid z=1) p(z=1)+\prob(w_j=-1 \mid z=-1) p(z=-1) = \prob(w_j=1 \mid z=1) = \prob(w_j=-1 \mid z=-1)$, where we have used the fact that $p(z=1)+p(z=-1)=1$ for the last two equalities. 

Finally, when $z=1$, $\prob(a_j=1 \mid z=1) = \prob(w_j=1 \mid z=1) = \prob (a_j=1)$ and  $\prob(a_j = -1 \mid z=1) = \prob(w_j=-1 \mid z=1) = \prob (a_j=-1)$. Similarly, when $z=-1$, $\prob(a_j=1 \mid z=-1) = \prob(w_j=-1 \mid z=-1)  = \prob (a_j=1)$ and $\prob(a_j=-1 \mid z=-1) = \prob(w_j=1 \mid z=-1)  = \prob (a_j=-1)$. Therefore, we can conclude that $\prob(a_j | z) = \prob(a_j)$. This further implies that $\prob(a_j,a_k) = \sum_{z\in\curly{-1,1}} \prob(a_j,a_k|z) \prob(z) = \sum_{z\in\curly{-1,1}} \prob(a_j|z) \prob(a_k|z) \prob(z) = \sum_{z\in\curly{-1,1}} \prob(a_j) \prob(a_k) \prob(z) = \prob(a_j) \prob(a_k)$.
\end{proof}

Proposition~\ref{prop:2} shows that how the accuracy parameters of a conditional independent Ising model of  \eqref{eq:ising} are independent of each other, using a model of three IV candidates as an example.
\begin{proposition}
\label{prop:2}
Let $w_1$, $w_2$, and $w_3$ follow:
\begin{equation*}
\prob(w_1,w_2,w_3,z) = \frac{1}{Z(\theta)} \exp \left(\theta_{1} w_1 + \theta_{z} z + \theta_{1z} w_1 z + \theta_{2z} w_2 z + \theta_{3z} w_3 z\right)
\end{equation*}
We have that $\prob(a_1,a_2) = \prob(a_1) \prob(a_2)$ and $\prob(a_1,a_3) = \prob(a_1) \prob(a_3)$.
\end{proposition}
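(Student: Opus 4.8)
The plan is to leverage the conditional independence structure of the model---which the unary potential $\theta_1 w_1$ does not disturb---together with the observation that $w_2$ and $w_3$, carrying no unary potential, still satisfy the symmetry $\prob(w_j=1\mid z=1)=\prob(w_j=-1\mid z=-1)$. The one point to be careful about is that, unlike in Proposition~\ref{prop:1}, $a_1$ is \emph{not} independent of $z$ here; the argument therefore has to route through the independence of $a_2$ (resp.\ $a_3$) from $z$, which turns out to be all that is needed to make the pair $(a_1,a_2)$ (resp.\ $(a_1,a_3)$) factor.

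\textbf{Step 1: conditional independence.} The exponent of $\prob(w_1,w_2,w_3,z)$ is a sum of terms each involving only one of $w_1,w_2,w_3$ (together with $z$), so the joint factorizes accordingly. Marginalizing out the other two candidates to form, e.g., $\prob(w_1,w_2,z)$ and $\prob(z)$, and dividing, the cross-sums cancel and one obtains $\prob(w_i,w_j\mid z)=\prob(w_i\mid z)\prob(w_j\mid z)$ for every pair. Since each $a_k=w_kz$ is a deterministic function of $(w_k,z)$, the same holds for the $a_k$'s: $\prob(a_i,a_j\mid z)=\prob(a_i\mid z)\prob(a_j\mid z)$.

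\textbf{Step 2: $a_2\perp z$ and $a_3\perp z$.} To compute $\prob(w_2\mid z)$, I would marginalize the joint over $w_1$ and $w_3$; this introduces factors $g_1(z):=\sum_{w_1}\exp(\theta_1 w_1+\theta_{1z}w_1z)$ and $g_3(z):=\sum_{w_3}\exp(\theta_{3z}w_3z)$ that are common to $\prob(w_2,z)$ and to $\prob(z)=\sum_{w_2}\prob(w_2,z)$, and hence cancel in the ratio. This leaves $\prob(w_2\mid z)=\exp(\theta_{2z}w_2z)/\bigl(2\cosh\theta_{2z}\bigr)$, so $\prob(w_2=1\mid z=1)=\prob(w_2=-1\mid z=-1)$. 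Arguing exactly as in the proof of Proposition~\ref{prop:1} then gives $\prob(a_2\mid z)=\prob(a_2)$, i.e.\ $a_2\perp z$; the identical computation with $w_2$ and $w_3$ interchanged gives $a_3\perp z$. (Note that $g_1(z)$ does depend on $z$, which is exactly why this symmetry fails for $w_1$ and why $a_1$ is not independent of $z$.)

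\textbf{Step 3: factor the pair.} For $s,t\in\{-1,1\}$,
\begin{align*}
\prob(a_1=s,a_2=t)&=\sum_{z\in\{-1,1\}}\prob(a_1=s\mid z)\,\prob(a_2=t\mid z)\,\prob(z)\\
&=\prob(a_2=t)\sum_{z\in\{-1,1\}}\prob(a_1=s\mid z)\,\prob(z)=\prob(a_1=s)\,\prob(a_2=t),
\end{align*}
where the first equality is the conditional independence from Step~1 and the second uses $\prob(a_2=t\mid z)=\prob(a_2=t)$ from Step~2. Hence $\prob(a_1,a_2)=\prob(a_1)\prob(a_2)$, and replacing $a_2$ by $a_3$ throughout gives $\prob(a_1,a_3)=\prob(a_1)\prob(a_3)$. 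No step here is computationally heavy; the only (mild) obstacle is the conceptual one already flagged---one must not try to prove $a_1\perp z$, which is false, but instead notice that independence of the \emph{other} coordinate from $z$ already suffices to decouple the pair once we sum over $z$.
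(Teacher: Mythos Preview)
Your proof is correct, but the paper takes a slicker route. Rather than working through conditional independence and the symmetry $\prob(w_j=1\mid z=1)=\prob(w_j=-1\mid z=-1)$ for $j=2,3$, the paper simply changes variables from $(w_1,w_2,w_3,z)$ to $(a_1,a_2,a_3,z)$---a bijection on $\{-1,1\}^4$ since $w_j=a_jz$ and $z^2=1$---and reads off the exponent
\[
\theta_1 a_1 z + \theta_z z + \theta_{1z} a_1 + \theta_{2z} a_2 + \theta_{3z} a_3,
\]
which visibly separates as a function of $(a_1,z)$ times functions of $a_2$ and $a_3$ alone. Hence $\prob(a_1,a_2,a_3,z)=\prob(a_1,z)\prob(a_2)\prob(a_3)$, and both claimed independences follow by marginalization in one line. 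Your approach, by contrast, keeps the $w$-parametrization and argues via $a_2\perp z$, $a_3\perp z$ together with conditional independence given $z$; this is more work but has the virtue of making explicit \emph{why} the unary potential on $w_1$ does no harm (it only couples $a_1$ to $z$, and you never need $a_1\perp z$), and it generalizes transparently to models with more candidates where only some carry unary terms.
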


\begin{proof}
Intuitively, rewrite $\prob(w_1,w_2,w_3,z)$ as:
\begin{equation*}
\prob(a_1,a_2,a_3,z) = \frac{1}{Z(\theta)} \exp \left(\theta_{1} a_1 z + \theta_{z} z + \theta_{1z} a_1 + \theta_{2z} a_2 + \theta_{3z} a_3\right).
\end{equation*}
Then $\prob(a_1,a_2,a_3,z) $ factorizes as 
$\prob(a_1,a_2,a_3,z) = \prob(a_1,z) \prob(a_2) \prob(a_3)$. It follows that $\prob(a_1,a_2) = \prob(a_1) \prob(a_2)$ and $\prob(a_1,a_3) = \prob(a_1) \prob(a_3)$.
\end{proof}

Proposition~\ref{prop:3} shows  how the soft label of $z$ is computed given $w$.
\begin{proposition}
\label{prop:3}
Let $w_1$,$w_2$,$\cdots$,and $w_p$ be given. The posterior probability of  $z=1$, i.e.\ $\prob (z=1 \mid w_1,\cdots,w_p)$, is given as
\begin{equation*}
\prob (z=1 \mid w_1,\cdots,w_p) =  
\sigma \left(\sum_{j=1}^p \log \frac{\prob (w_j \mid z=1)}{\prob (w_j \mid z=-1)} + \log \frac{\prob (z=1)}{\prob (z=-1)}\right),
\end{equation*}
where $\sigma(t) = \frac{1}{1+\exp(-t)}$ is the sigmoid function.
\end{proposition}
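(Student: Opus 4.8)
\textbf{Proof proposal for Proposition~\ref{prop:3}.} The plan is a standard Naive-Bayes-style posterior computation, exploiting the conditional independence of the candidates given $z$ that holds in the conditionally independent model (the same structure used in Proposition~\ref{prop:1}). First I would apply Bayes' rule to write
\[
\prob(z=1 \mid w_1, \ldots, w_p) = \frac{\prob(w_1, \ldots, w_p \mid z=1)\, \prob(z=1)}{\prob(w_1, \ldots, w_p)},
\]
and likewise for $z = -1$. The key structural input is that, conditioned on $z$, the $w_j$'s are mutually independent, so $\prob(w_1, \ldots, w_p \mid z) = \prod_{j=1}^{p} \prob(w_j \mid z)$; this is exactly the factorization implied by the conditionally independent Ising density.

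Next I would form the posterior odds ratio $\prob(z=1 \mid w)/\prob(z=-1 \mid w)$. The marginal $\prob(w_1, \ldots, w_p)$ in the denominator is common to both and cancels, leaving
\[
\frac{\prob(z=1 \mid w)}{\prob(z=-1 \mid w)} = \frac{\prob(z=1)}{\prob(z=-1)} \prod_{j=1}^{p} \frac{\prob(w_j \mid z=1)}{\prob(w_j \mid z=-1)}.
\]
Taking logarithms turns the product into the sum $\sum_{j=1}^{p} \log \frac{\prob(w_j \mid z=1)}{\prob(w_j \mid z=-1)} + \log \frac{\prob(z=1)}{\prob(z=-1)}$, which is precisely the argument of $\sigma$ in the statement.

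Finally I would invert the log-odds: for any Bernoulli-type event with $\prob(z=1 \mid w) = p$ and $\prob(z=-1 \mid w) = 1-p$, we have the identity $p = \sigma\big(\log \tfrac{p}{1-p}\big)$ since $\sigma(t) = \tfrac{1}{1+e^{-t}}$. Substituting the log-odds expression obtained above gives the claimed formula. There is no real obstacle here — every step is elementary; the only point worth flagging is that the argument uses the conditional independence of the $w_j$'s given $z$ in an essential way, so it applies to the conditionally independent model and not to the general Ising model with dependency edges $E$ (where the likelihood would not factorize as a product over $j$).
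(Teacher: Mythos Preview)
Your proposal is correct and follows essentially the same argument as the paper: Bayes' rule, the conditional-independence factorization $\prob(w \mid z)=\prod_j \prob(w_j \mid z)$, and the logit/sigmoid inversion. The paper orders the steps slightly differently (it expands the marginal in the denominator and writes $1/(1+\text{ratio})$ before recognizing the sigmoid, then factorizes), but the content is identical.
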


\begin{proof}
\begin{align*}
\prob (z=1 \mid & w_1,\cdots,w_p)  \\
= & \frac{\prob (w_1,\cdots,w_p \mid z=1) \prob (z=1)}{\prob(w_1,\cdots,w_p)} \\
= & \frac{\prob (w_1,\cdots,w_p \mid z=1) \prob (z=1)}{\prob(w_1,\cdots,w_p\mid z=1) \prob(z=1) + \prob(w_1,\cdots,w_p\mid z=-1) \prob(z=-1)} \\
= & \frac{1}{1+\frac{\prob(w_1,\cdots,w_p\mid z=-1) \prob(z=-1)}{\prob(w_1,\cdots,w_p\mid z=1) \prob(z=1)}} \\
= & \sigma \left( \log \frac{ \prob(w_1,\cdots,w_p\mid z=1)  \prob(z=1)}{ \prob(w_1,\cdots,w_p\mid z=-1) \prob(z=-1)} \right)\\
= & \sigma \left( \log \frac{\prod_{j=1}^p\prob (w_j \mid z=1) \prob(z=1)}{\prod_{j=1}^p\prob (w_j \mid z=-1) \prob(z=-1)} \right) \\
= & \sigma \left( \sum_{j=1}^p \log \prob (w_j \mid z=1) - \sum_{j=1}^p \log \prob (w_j \mid z=-1) + \log \prob(z=1) - \log \prob (z=-1)\right) \\
= & \sigma \left(\sum_{j=1}^p \log \frac{\prob (w_j \mid z=1)}{\prob (w_j \mid z=-1)} + \log \frac{\prob (z=1)}{\prob (z=-1)}\right)
\end{align*}
Note that when $w_j=1$, $\log \frac{\prob(w_j=1 \mid z=1)}{\prob (w_j=1 \mid z=-1)} = 1 \times \log \frac{\prob(w_j=1 \mid z=1)}{\prob (w_j=1 \mid z=-1)}$. When $w_j = -1$, $\log \frac{\prob(w_j=-1 \mid z=1)}{\prob (w_j=-1 \mid z=-1)} = -1 \times  \log \frac{\prob (w_j=-1 \mid z=-1)}{\prob(w_j=-1 \mid z=1)}$.
\end{proof}

\subsection{Example of Limitations of Ivy}
\label{sec:counterexample-appendix}
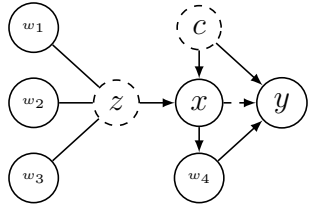
\begin{figure}
\centering
\centering
\begin {tikzpicture}[-latex ,auto ,node distance
=1.0cm and 1.1cm , on grid, semithick ,
state/.style ={ circle, draw, minimum width=0.5cm}, scale=0.50]
\node[state, draw] (C) [dashed] {\large $c$};
\node[state, draw] (Y) [below right=of C] {\large $y$};
\node[state] (X) [below =of C] {\large $x$};
\node[state] (Z) [left=of X, dashed] {\large $z$};
\node[state] (W1) [above left=of Z] {\tiny $w_1$};
\node[state] (W2) [left=of Z] {\tiny $w_2$};
\node[state] (W3) [below=of W2] {\tiny $w_3$};
\node[state] (W5) [below =of X] {\tiny $w_4$};
\path (Z) edge[-] (W1);
\path (Z) edge[-] (W2);
\path (Z) edge[-] (W3);
\path (Z) edge (X);
\path (C) edge (X);
\path (C) edge (Y);
\path[dashed] (X) edge (Y);
\path (X) edge (W5);
\path (W5) edge (Y);

\end{tikzpicture}
\caption{$w_4$ is an invalid IV that does not meet the assumption made by Ivy. }
\label{fig:counter}
\end{figure}
Here we show a counterexample of invalid IV that does not meet the assumption made by Ivy, as given in Figure~\ref{fig:counter}. As can be seen, $w_4$ is an invalid IV because it is directly linked to the outcome, violating the exclusion restriction assumption. However, Ivy cannot identify $w_5$ as invalid because $w_5$ is dependent on $z$.  $w_4$ is called a mediator. 
Thankfully, in Mendelian randomization, SNPs used as IVs are usually not mediators because a risk factor is usually a downstream product of genetic variation and hence is not causal to the status of a SNP.

\section{Extended Experiments}
\label{sec:extended-experiments}
Next we provide additional experiments and detail, including synthetic data experiments. We also present a series of experiments where we violate the key assumptions, investigating Ivy's robustness in cases where not all of them are met.

\subsection{Details of Experiments}
\label{sec:appendix-real-exp}
\textbf{Data Preprocessing} For real-world data, we acquire raw data from UK Biobank, which are subsequently binarized. For SNPs as IV candidates, we use the $\curly{-1,0,1}$ representation that reflects the dominant/recessive genetic model. To determine the encoding of the IV candidates that we anticipate to label the latent IV, we choose the encoding of each IV candidate that is positively correlated with the value of the risk factor. Individual-level data from unrelated subjects of European descent are used. 

\textbf{Allele Scores} Unweighted allele score assigns equal weight to the count of every genetic variant (IV candidate). Weighted allele score regresses the risk factor on the IV candidates to derive a weighted combination of the IV candidates. Since we have access to individual-level data, we derive the weights of the weighted allele score in a multiple regression fashion \citep{angrist1999jackknife,burgess2013use}.

\textbf{Implementation of Ivy} When covariance matrices are calculated, we treat these candidates as numeric variables. When curated putative valid IV candidates are used to estimate causal relationships we use  conditional independent Ivy models to learn the accuracy of the IV candidates. In other cases, we run the full Algorithm~\ref{alg:ivy} to estimate causal effects. 

Observe that using the loss function in Section~\ref{sec:ivy-framework}, we do not even need as many samples of the candidates as there are candidates---which would prevent us from inverting the sample covariance matrix. However, since in practice, many more samples are available, a direct approach is to perform this inversion and then apply the algorithm above directly to the inverted matrix, and we do so in our experiments.

It should be noticed that when we have access to a conditional independent Ivy model, one could directly estimate $\prob(z=1\mid w)$ by $\hat{\mu}$ due to Proposition~\ref{prop:3}. On the other hand, when we need to handle the dependencies among IV candidates, we cannot apply Proposition~\ref{prop:3} anymore. Instead, we make use of moment matching \citep{KollerFriedman} to map the mean parameters $(\hat{\mu},\hat{O})$ of the graphical model to its canonical parameters $\hat{\theta}$. Having access to $\hat{\theta}$, we can compute $\prob_{\hat{\theta}}(z=1\mid w)$ via standard graphical model inference procedures.

\textbf{Causal Effect Estimation} Once $\hat{G} = (\hat{V},\hat{E})$ is determined, we split the dataset into two separated halves at random, where the first half  is used to derive the instrumental variable model, and the second half is used to estimate causal effect. Doing so can avoid overfitting the data, similar to purpose of the practice described in \cite{burgess2013use,burgess2017review}. This procedure is repeated for $1000$ times to compute the median and the $95\%$ confidence interval of the causal effect estimate. We use the Wald estimator as our causal effect estimator. The interpretation of the Wald estimator is that the change of log-odd-ratio in the occurrence of outcome per unit change of the log-odd-ratio in the occurrence of the risk factor. Median of the Wald ratio estimate is recommended to describe the causal effect size \citep{Burgess15}.  A $95\%$ of confidence interval that covers the origin suggests that no causal relationship between the risk factor and the outcome. When conducting causal effect estimate using allele scores, we also obtain synthesized IV samples based on the probability suggested by the allele scores to account for uncertainty in the same way as we do for Ivy. 

\paragraph{Model Selection}  We consider a score-based model selection procedure, which can be viewed as an alternative to cross validation when it comes to choosing an appropriate set of hyperparameters \citep{hastie01statisticallearning}. Such a model selection procedure 
is used to determine the hyperparameters of Algorithm~\ref{alg:sl1}, specifically $\lambda$, $\gamma$, $T_1$, and $T_2$. We run Algorithm~\ref{alg:sl1} over the entire dataset using a grid of hyperparameters. From Line~\ref{state:scale-covariance} of Algorithm~\ref{alg:sl1} we have access to scores that correspond to the covariance between each of the $p+s$ IV candidates and $z$. We sort the absolute values of these scores from low to high and compute the ratios of the latter score over the former score. We denote the largest of the ratios corresponding to a given pair of $\lambda$ and $\gamma$ as $\tau_{\lambda,\gamma}$ and we denote its corresponding index in the sorted array as  $t_{\lambda,\gamma}$. 
Therefore, for each $\tau_{\lambda,\gamma}$, we consider the following model selection score:  $\log (\tau_{\lambda,\gamma}) \cdot \mathbb{I}(\tau_{\lambda,\gamma}>10) \cdot \exp(p+q-t_{\lambda,\gamma})$, and choose $\lambda$ and $\gamma$ corresponding to the largest score. Such a model selection score is designed to strike a balance between the number of IV candidates viewed as valid and the strength of the accuracy signal encoded by the covariance that indicates validity. Determining $T_1$ requires taking into consideration of various factors such as the total number of candidates, prior knowledge about the proportion of valid IV candidates available in the dataset, and the level of uncertainty of the causal estimate desired. We sort the values of  $\abs{\hat{\Sigma}\hat{l}}$ in ascending order and choose one of the values as $T_1$. The higher the total number of candidates and the higher the proportion of valid candidates the larger the index of $T_1$ in the sorted array we can choose. A larger $T_1$ can reduce the variance of the estimate but could also potentially induce more bias. In practice, we consider a $T_1$ that is indexed by $\xi \cdot (p+s-t_{\lambda,\gamma})$ with $\xi \in \curly{2,3}$. After selecting $\tau_{\lambda,\gamma}$, we select $T_2$ by providing the values of $\hat{S}$ to the Tukey's fence, which is an outlier detection rule \citep{yu1977exploratory}.  We then use the smallest outlier as the threshold for $T_2$. If there is no outlier, we view the model as conditional independent.

\begin{table}[t]
\centering
\begin{tabular}{cccccc}
\hline
Dataset & Task & \# Samples & \makecell{\# IVs \\ (Valid/Invalid)} & Ground Truth & Section \\
\hline
\texttt{hdl}$\Rightarrow$\texttt{cad} & Does HDL cause CAD? & 286,501 & 49 (19/30) & Noncausal & \ref{sec:uncurated-exp}\\
\texttt{crp}$\Rightarrow$\texttt{cad} & Does CRP cause CAD? & 311,442 & 160 (N.A.) & Noncausal & \ref{sec:uncurated-exp}\\
\texttt{vtd}$\Rightarrow$\texttt{cad} & Does VTD cause CAD? & 298,386 & 41 (N.A.) & Noncausal & \ref{sec:uncurated-exp}\\
\texttt{sbp}$\Rightarrow$\texttt{cad} & Does SBP cause CAD? & 332,998 & 35(N.A.) & Causal & \ref{sec:sbp-cad}\\
\hline
\texttt{hdl}$\to$\texttt{cad} & Does HDL cause CAD? & 286,501 & 19 (19/0) & Noncausal & \ref{sec:valid-exp} \\
\texttt{ldl}$\to$\texttt{cad} & Does LDL cause CAD? & 311,559 & 19 (19/0) & Causal & \ref{sec:valid-exp}\\
\texttt{sbp}$\to$\texttt{cad} & Does SBP cause CAD? & 332,998 & 26(26/0) & Causal & \ref{sec:sbp-cad}\\
\hline
\texttt{hdl} & valid vs invalid IVs of HDL & 286,501 & 49 (19/30) & N.A. & \ref{sec:robust-exp}\\
\texttt{ldl} & valid vs invalid IVs of LDL & 311,559 & 42 (19/23) & N.A. & \ref{sec:robust-exp}\\
\texttt{trg} & valid vs invalid IVs of TRG & 311,861 & 68 (27/41) & N.A. & \ref{sec:robust-exp}\\
\texttt{hdl}-\texttt{ldl} & HDL IVs vs LDL IVs & 286,062 & 38 (19/19) & N.A. & \ref{sec:robust-exp}\\
\texttt{hdl}-\texttt{trg} & HDL IVs vs TRG IVs & 286,289 & 46 (19/27) & N.A. & \ref{sec:robust-exp}\\
\texttt{ldl}-\texttt{trg} & LDL IVs vs TRG IVs & 311,368 & 46 (19/27) & N.A. & \ref{sec:robust-exp}\\
\hline
\end{tabular}
\captionof{table}{Summary of real-world data used in the experiments. HDL: high-density lipoprotein; LDL: low-density lipoprotein; TRG: triglyceride; SBP: systolic blood pressure; CRP: C-reactive protein; VTD: vitamin D; CAD: coronary artery disease.}
\label{tab:datasets}
\end{table}

\subsection{Extended Real-World Experiments}

\begin{figure}[t]
\begin{subfigure}[b]{0.5\textwidth}
\centering
\includegraphics[scale=0.5]{./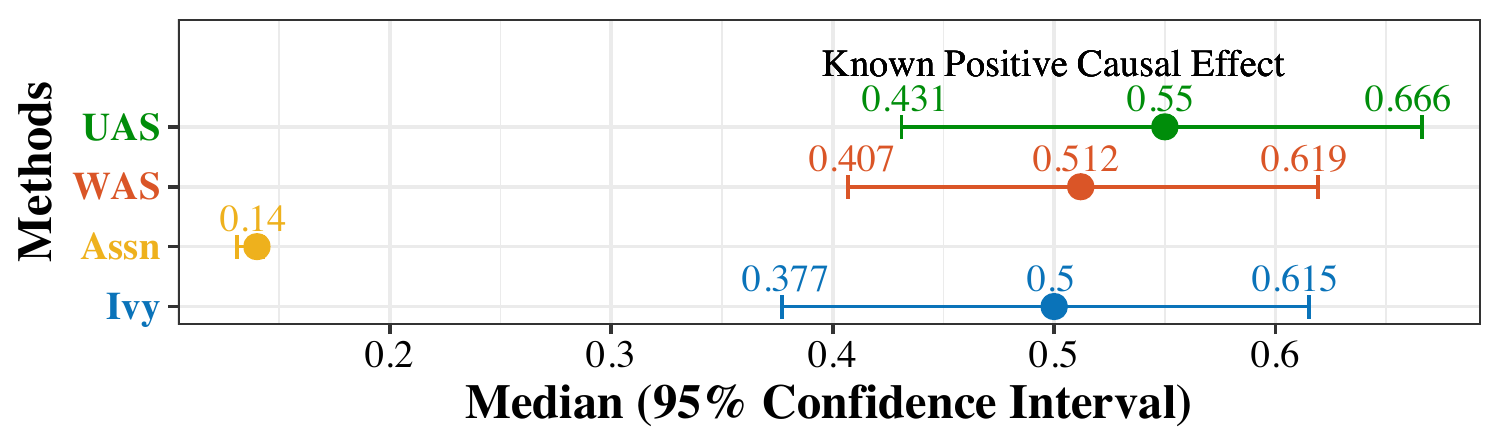}
\caption{\texttt{sbp}$\rightarrow$\texttt{cad} with curated IV candidates}
\label{fig:sbp-cad-curated}
\end{subfigure}
~
\begin{subfigure}[b]{0.5\textwidth}
\centering
\includegraphics[scale=0.5]{./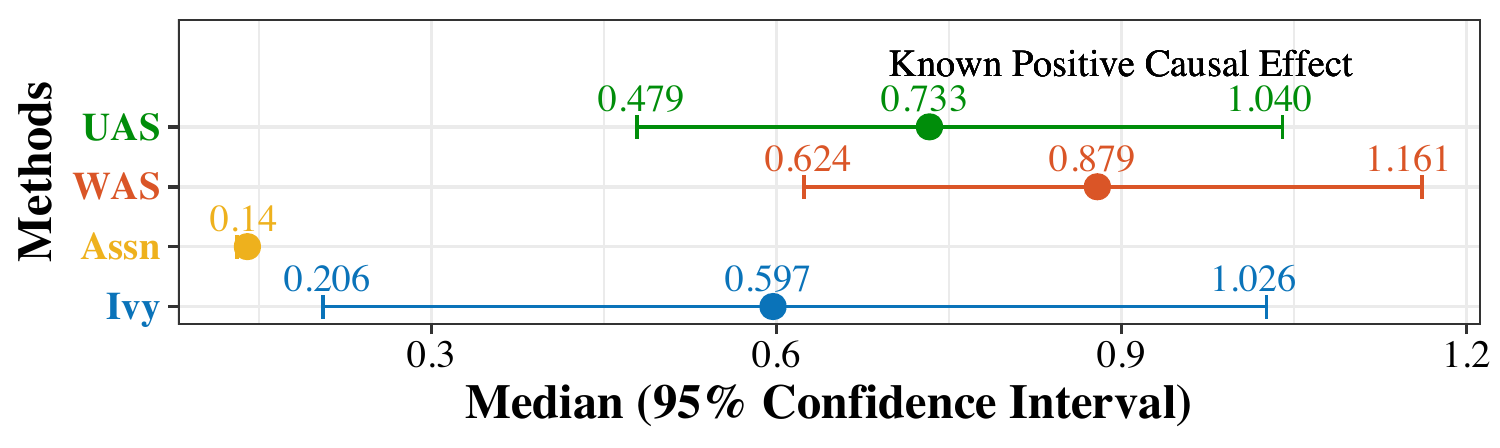}
\caption{\texttt{sbp}$\rightarrow$\texttt{cad} with uncurated IV candidates}
\label{fig:sbp-cad-uncurated}
\end{subfigure}
\caption{Estimation of the known positive causal effect of systolic blood pressure on coronary artery disease using both curated and uncurated IV candidates}
\end{figure}

\begin{figure}[t]
\centering
\includegraphics[scale=0.8]{./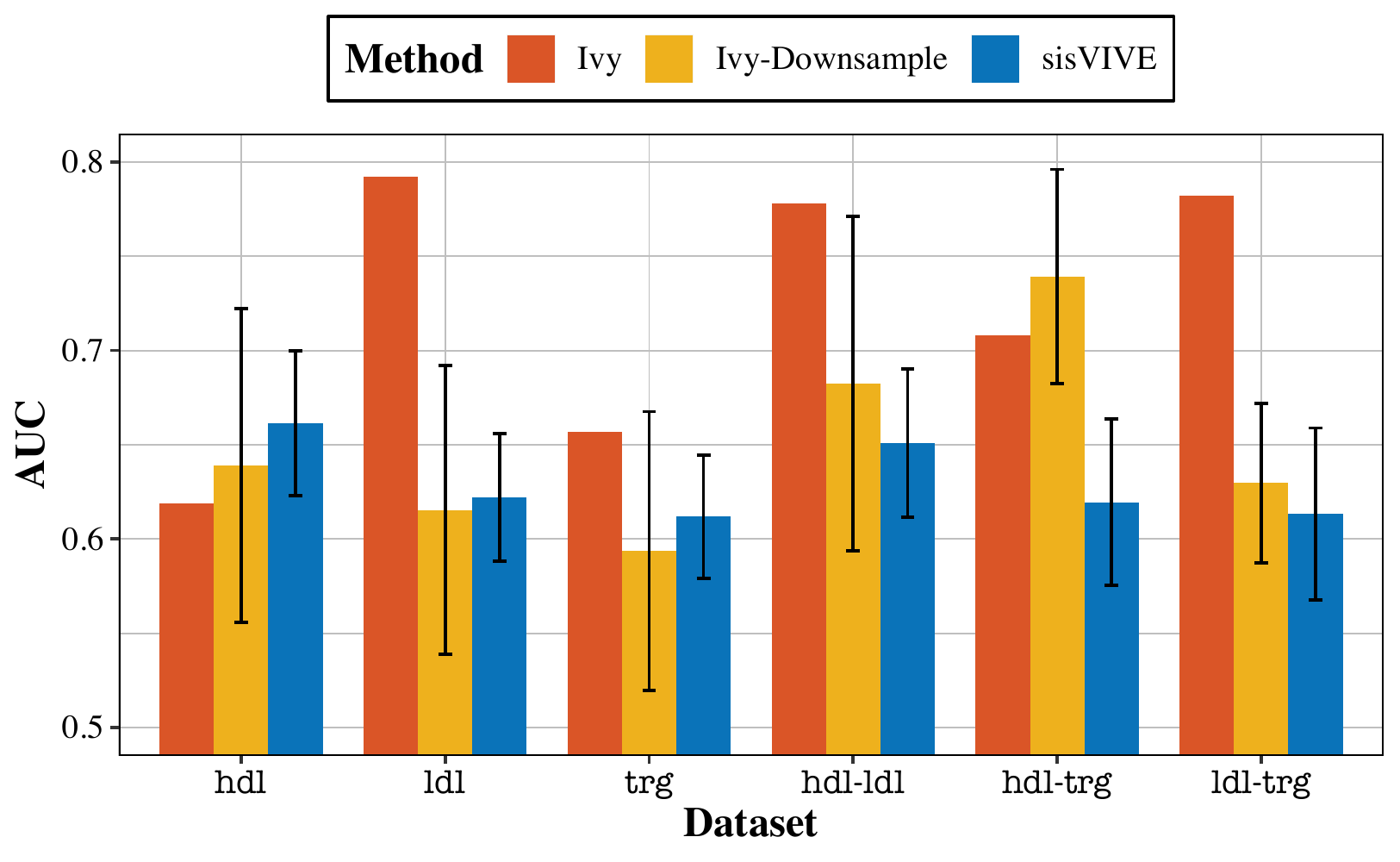}
\caption{AUCs of Ivy and \texttt{sisVIVE} in distinguishing between valid and invalid IVs across six datasets. Ivy: the proposed method run on the full datasets with model selection. Ivy-Downsample: only use the subsets of data (20,000 samples) that match those used in \texttt{sisVIVE} with model selection. \texttt{sisVIVE}: best performer run on 20,000 data points.}
\label{fig:invalid-exp}
\end{figure}

We first discuss the curation process of the twelve real-world datasets that we use in our experiments (Section~\ref{sec:dataset}). We then report the experimental results of estimating a true causal relationship between systolic blood pressure and coronary artery disease on two real-world datasets with curated and uncurated IV candidates respectively (Section~\ref{sec:sbp-cad}). Finally, we compare Ivy with a leading IV-based robust causal inference approach \texttt{sisVIVE}  in terms of distinguishing between valid and invalid IVs on six real-world datasets (Section~\ref{sec:robust-exp}).

\subsubsection{Real-World Datasets}
\label{sec:dataset}
The real-world datasets used in our experiments are summarized in Table~\ref{tab:datasets}. We describe how each dataset is produced. All the datasets consist of individual-level data from UK Biobank including data of the risk factor, the outcome, and IV candidates (SNPs). We report what SNPs are chosen as IV candidates for each dataset. In \texttt{hdl$\Rightarrow$cad}, the SNPs are chosen according to \citealt{holmes2014mendelian}, where 19 SNPs are reported to be putatively valid IVs and 30 are invalid. For \texttt{crp$\Rightarrow$cad} and \texttt{vtd$\Rightarrow$cad}, SNPs are chosen as IV candidates as long as they are reported to be associated with the corresponding risk factor among individuals of European descent in the GWAS Catalog \citep{buniello2018nhgri}. In this case, we do not know the validity of the IV candidates, faithfully reflecting the challenges of MR in practice. See Section~\ref{sec:sbp-cad} for the curation process of \texttt{sbp$\Rightarrow$cad} and \texttt{sbp$\rightarrow$cad}. The IV candidates and their validity of the rest of the datasets are also determined according to \cite{holmes2014mendelian}.

\subsubsection{Estimate True Causal Relationships using Ivy}
\label{sec:sbp-cad}
Here we consider estimating the true causal effect of SBP to CAD using 26 curated IV candidates and 35 uncurated IV candidates. The curated IV candidates are due to the Mendelian randomization conducted in \cite{lieb2013genetic}. For the uncurated IV candidates, we identify 15 SNPs that are most significantly correlated with SBP based on the findings in \cite{ehret2011genetic}. As a proxy to noisy candidates weakly correlated with SBP, we also identify 20 additional SNPs from the same study whose correlations are less significant. This results in a total of 35 uncurated IV candidates.

Experimental results of using the curated IV candidates are reported in Figure~\ref{fig:sbp-cad-curated}. Using curated IVs, Ivy performs similarly compared to UAS and WAS both in terms of the median estimate and the length of confidence intervals. 

Experimental results of using the uncurated IV candidates are reported in Figure~\ref{fig:sbp-cad-uncurated}. With uncurated IVs, Ivy maintains a median estimate similar to that when the curated IVs are used. However, both UAS and WAS yield different estimates compared to the case where curated IV candidates are used.

\subsubsection{Valid/Invalid IVs Classification}
\label{sec:robust-exp}
Since properly handling invalidity is a crucial aspect of the synthesis phase, we conduct ablation experiments of valid/invalid IV candidate classification, on datasets where such ground truth is available. On six real-world datasets, Ivy outperforms or remains comparable to a leading approach (\texttt{sisVIVE}, \citealt{kang2016instrumental}) for this classification task (as depicted in Figure~\ref{fig:invalid-exp}).

\texttt{sisVIVE} is a leading robust IV-based causal inference approach.
As it is a one-phase method (unlike Ivy), \texttt{sisVIVE} is not designed to synthesize a summary IV and is not usually combined with other causal effect estimators. Nevertheless, one of its intermediate outputs is an estimate of which candidates are valid. This leads us to ask whether Ivy is competitive with this method on this task, despite being primarily designed for IV synthesis
(Note that we do not compare to UAS and WAS in Section~\ref{sec:robust-exp}, because they assume all candidates are valid and thus do not distinguish between valid and invalid candidates).

We frame distinguishing between valid and invalid IVs as a binary classification problem. Therefore, we can use the area under curve (AUC) of the receiver operating characteristic of the classification to measure the capacity of a method to tell apart valid IVs from invalid ones. Algorithm~\ref{alg:sl1} is used for classification in Ivy.  
A total of six datasets are used for evaluation (see Table~\ref{tab:datasets} for details). 
Results are presented in Figure~\ref{fig:invalid-exp}.

We report the results of two variants of Ivy. For the first one, we run Ivy on the full datasets with model selection and report the AUC. For the second  (Ivy-Downsample), we run Ivy on subsets of 20,000 data points across the full dataset with model selection and report the mean and standard deviation of the AUC across all the subsets for each dataset. This is because \texttt{sisVIVE} fails to run on the full datasets due to its large memory footprint; thus, we run \texttt{sisVIVE} on subsets 20,000 data points for each dataset and compare it with Ivy run on the same subsets of the data (Ivy-Downsample). For \texttt{sisVIVE}, we report the result of the \emph{best} performer.
As can be seen in Figure~\ref{fig:invalid-exp}, both variants of Ivy result in competitive performance in AUCs compared with the best performer of \texttt{sisVIVE}. 
This suggests that Ivy is capable of handling, and generally benefits from, higher sample sizes, as shown by the increase in AUC from the downsampled version of Ivy to the full Ivy.

\subsection{Further Synthetic Experiments}
\label{sec:appendix-synthetic-exp}
We evaluate various aspects of the empirical performance of Ivy via a series of experiments on synthetic data.  We seek to show that:
\begin{itemize}[leftmargin=*]
\item Ivy can estimate causal effects with noisy, dependent, and potentially invalid IV candidates (Section~\ref{sec:appendix-pipeline}).
\item When the accuracies of IV candidates vary, Ivy can benefit from estimating the accuracies of IV candidates compared to UAS that views all candidates to be of the same accuracy (Section~\ref{sec:accuracy}).
\item When the IV candidates are dependent on each other, Ivy can benefit from estimating and utilizing these dependencies, in contrast to UAS and WAS that do not model such dependencies (Section~\ref{sec:dependency}).
\item When $z$ itself becomes an invalid IV, Ivy can demonstrate certain level of robustness while UAS and WAS can be more sensitive to the assumption violation (Section~\ref{sec:invalid-z}).

\end{itemize}

\subsubsection{Ivy With Synthetic Data}
\label{sec:appendix-pipeline}

\paragraph{Dismissing Spurious Correlations}
Next, we give more details on the synthetic experiment with null causal effect. To demonstrate the use of Ivy in causal inference using noisy,  dependent, contradicting, and partially invalid IV candidates, we consider the use of 20 IV candidates to dismiss a positive spurious correlation between a risk factor and an outcome. Among the 20 candidates, 10 of them are valid IVs and 10 of them are invalid by being associated with the confounder that produces the spurious correlation. Within the 10 valid candidates, a clique of four valid candidates and a clique of two valid candidates are formed.  The remaining four valid candidates are conditional independent upon $z$. All the ten invalid candidates are conditional independent upon the confounder. A total of 100,000 samples are generated from this model. UAS and WAS are used in comparison to Ivy. Observational association between the risk factor and the outcome is also computed as a reference. We expect Ivy to dismiss the spurious correlation successfully, while WAS and UAS will fail to do so. The causal effects estimate are reported in Figure~\ref{fig:pipeline-null}, medians and $95\%$ confidence intervals are generated through 100 times of subsampling. Ivy is capable of recovering the dependency structure among the candidates and identify invalid candidates. As a result, Ivy can successfully dismiss the spurious correlation by identifying no causal effects while both UAS and WAS fail to do so by yielding estimates that are consistent with the direction of the spurious correlation.

\paragraph{Estimating True Causal Effects} Finally, we discuss the last experiment, where there is a ground truth (synthetic) positive causal effect. We use the same experiment setup described in the previous paragraph to estimate true causal effects. The only difference is that there is a true causal effect from the risk factor to the outcome in the data generation model. The true effect size measured by the log odd ratio is 0.150. Experimental results are reported in Figure~\ref{fig:pipeline-true}.  Ivy provides a median estimate that is closest to the true causal effect size while both UAS and WAS return median estimates that bias towards the observational association due to their failure in distinguishing between valid candidates and invalid ones that are associated with the confounder.

Next, we perform several more synthetic experiments, where we vary the accuracies and the dependencies.

\subsubsection{Candidates with Varying Accuracy}
\label{sec:accuracy}

\begin{table}
\centering
\begin{tabular}{ccc}
\hline
Method & Median & $95\%$ CI\\
\hline
Ivy &  0.266 &  [-0.247, 0.784] \\
UAS &  0.322 & [-0.571, 1.308] \\
WAS &  0.300 & [-1.342, 1.994] \\
Association & 0.432 & [0.374, 0.492] \\
\hline
\end{tabular}
\caption{Dismiss spurious correlations with candidates of varying accuracies}
\label{tab:accuracy}
\end{table}

We demonstrate the utility of Ivy in dealing with candidates of varying accuracies by considering a model of ten conditional independent valid IV candidates. The ten candidates are moderately accurate with accuracies of around 0.6 while $\prob(z=1) \approx 0.6$. We further generate 50 independent binary noise variables to represent (invalid) candidates that are not predictive of $z$ at all. A total of 5,000 samples are generated to dismiss the spurious correlation between a risk factor and an outcome. Median and 95\% confidence intervals are calculated through 1,000 times subsampling. We expect that Ivy can generate a narrower confidence interval compared to allele scores because Ivy is capable of estimating the accuracy of the candidates and downweight those that are less accurate so as to reach a more certain estimation of $z$. Results are reported in Table~\ref{tab:accuracy}. We observed that all methods are successful at dismissing the spurious correlation while Ivy indeed yields a narrower confidence interval compared to UAS and WAS. The estimate of WAS is especially uncertain. This demonstrate the need of more samples for WAS in order to yield more certain estimate when the number of candidates are relatively large 60 candidates in this case).

\subsubsection{Candidates with Dependencies}
\label{sec:dependency}

\begin{figure}
\centering
\begin{subfigure}[b]{0.38\textwidth}
\centering
\begin{tikzpicture}[-latex ,auto ,node distance
=1.0cm and 1.1cm , on grid, semithick ,
state/.style ={ circle, draw, minimum width=0.5cm}, scale=0.50]
\node[state] (Z) [left=of X, dashed] {\large $z$};
\node[state] (W1) [above left=of Z] {\tiny $w_1$};
\node[state] (W2) [left=of Z] {\tiny $w_2$};
\node[state] (W3) [below=of W2] {\tiny $w_3$};
\node[state] (W4) [below =of Z] {\tiny $w_4$};
\node[state] (W5) [below right=of Z] {\tiny $w_5$};
\node[state] (W6) [right =of Z] {\tiny $w_6$};
\node[state] (W7) [above right=of Z] {\tiny $w_7$};
\node[state] (W8) [above =of Z] {\tiny $w_8$};
\path (Z) edge[-]  (W1);
\path (Z) edge[-] (W2);
\path (Z) edge[-] (W3);
\path (Z) edge[-] (W4);
\path (Z) edge[-] (W5);
\path (Z) edge[-] (W6);
\path (Z) edge[-] (W7);
\path (Z) edge[-] (W8);
\path (W1) edge[-] (W2);
\path (W1) edge[-, bend right=40] (W3);
\path (W1) edge[-, bend right=90, looseness=2] (W4);
\path (W2) edge[-] (W3);
\path (W2) edge[-] (W4);
\path (W3) edge[-] (W4);
\end{tikzpicture}
\caption{Dependency graph}
\label{fig:dependency}
\end{subfigure}
\begin{subfigure}[b]{0.58\textwidth}
\centering
\begin{tabular}{ccc}
\hline
Method & Median & $95\%$ CI\\
\hline
Ivy &  -0.092  &  [ -0.375, 0.109] \\
UAS &  -0.188  & [-0.735, 0.153] \\
WAS &  -0.039  & [-0.338, 0.180] \\
Association &  0.379 & [0.355, 0.400] \\
\hline
\end{tabular}
\caption{Causal effect estimation}
\label{tab:dependency}
\end{subfigure}
\caption{Dismissing spurious correlations using candidates with dependencies}
\end{figure}

We investigate the use of Ivy in handling IV candidates that are dependent on each other. This scenario arises in common practice of Mendelian randomization when the SNPs served as IV candidates are in linkage disequilibrium. We consider a model with eight valid candidates, as shown in Figure~\ref{fig:dependency}. Four of the candidates are conditional independent upon $z$, while the remaining four form a clique of high dependency that yield Pearson correlations among these four candidates of about 0.77. The four conditionally independent candidates are more predictive of $z$ than the four dependent ones. A total of 50,000 samples are generated. We use these data to dismiss the spurious correlation between a risk factor and an outcome. UAS and WAS are used as a comparison to Ivy. Median and $95\%$ confidence interval of the Wald ratio is calculated through 100 times of subsampling. We expect that Ivy can learn and utilize the dependencies among candidates and yields a reasonably precise estimate. Results are summarized in Table~\ref{tab:dependency}. Ivy, UAS, and WAS can all dismiss the spurious correlation, with the confidence interval of Ivy being the narrowest. 

In order to understand the impact of the failure of modeling the dependencies among the candidates, we also use a version of Ivy that assumes that all candidates are conditional independent to estimate causal effects. Under the same experiment configuration as aforementioned, the conditional independent Ivy produces a median of $-0.327$ and a $95\%$ confidence interval of $[-16.967, 15.398]$. By ignoring the strong dependencies among candidates, Ivy essentially fails in the estimation by producing a highly imprecise estimate. This stresses the importance of handling dependency appropriately within the Ivy framework, either by direct modeling or by just making use of candidates that are conditional independent of each other.

\subsubsection{Violating the Key Assumption}
\label{sec:invalid-z}
We give more details on the experiment where we investigate the robustness of Ivy against the violation of the key assumption---that $z$ is a valid IV. On synthetic data, we show that Ivy yields a causal estimate that deviates the least from the ground truth compared to allele score methods (Figure~\ref{fig:invalid-z-result}).

Here, we consider the spurious correlation model given in Figure~\ref{fig:invalid-z}. There are nine IV candidates in the model. $w_9$ serves as a confounder between the risk factor and the outcome. $z$ is invalid because $z$ is moderately associated with $w_9$. Then, we vary the strength of this association (i.e., tune it) and examine the results. We take
$\prob(y=1 \mid w_9=1)=\prob(y=-1 \mid w_9=-1)=0.55$, $\prob(x=1 \mid w_9=1)=0.764$, and $\prob(x=-1 \mid w_9=-1)=0.776$. Furthermore, $\prob(w_j=1 \mid z=1)=\prob(w_j=-1 \mid z=-1)=0.73$, where $j \in [8]$. We vary the accuracy of $w_9$ in predicting $z$ as $\prob(w_9=1 \mid z=1)=\prob(w_9=-1 \mid z=-1) \in \curly{0.5,0.525,0.55,0.575,0.6}$.
50,000 samples are generated from each of these variations. We apply Ivy, UAS, and WAS to these nine candidates for causal effect estimation.  Median and $95\%$ confidence intervals are computed through 100 times sampling. We expect Ivy to demonstrate certain level of robustness by downweighting the influence of $w_9$ while UAS and WAS will not be able to do so. Results are given in Figure~\ref{fig:invalid-z-result}, where Ivy does not detect causal effects up to the accuracy of $w_9$ in predicting $z$ being $0.55$ while UAS and WAS fail to dismiss the spurious correlation because they consider the invalid $w_9$, which is fairly predictive of $x$ by construction, as a valid IV. It should be noted that as the invalidity of $z$ becomes more significant, all three methods eventually fail to dismiss the spurious correlation eventually. This emphasizes the importance of the validity assumption upon $z$.

\subsubsection{Calibration of Confidence Intervals}
We conduct further experiments on synthetic data to show that the confidence intervals are well-calibrated. We follow the protocol established in \cite{burgess2013use}. Specifically, we consider a data generation model with 10 conditional independent valid IV candidates to estimate a spurious relationship (true causal effect size = 0). We sample 1,000 datasets of 10,000 samples each from the data generation model. For each dataset, we compute a $95\%$ confidence interval of the causal estimate, yielding 1000 empirical confidence intervals in total. The percentage of these empirical confidence intervals that cover 0 can then serve as a measure of the calibration quality: if the model is well-calibrated, this percentage should be close to $95\%$. In our experiment, we observe a $94.6\%$ coverage. This supports the hypothesis that the confidence intervals produced by Ivy are well-calibrated.

\end{document}